\title{Spectra of Cardinality Queries over Description Logic Knowledge Bases}
\author {
	Quentin Manière\textsuperscript{\rm 1,\rm 2},
	Marcin Przybyłko\textsuperscript{\rm 1,\rm 3}
}
\newcommand{\naturals}{\mathbb{N}}
\newcommand{\ninfty}{\naturals^{\infty}}
\newtheorem{problem}{Problem}
\begin{document}

	\maketitle

	\begin{abstract}
Recent works have explored the use of counting queries coupled with Description Logic ontologies.
The answer to such a query in a model of a knowledge base is either an integer or $\infty$, and its spectrum is the set of its answers over all models.
While it is unclear how to compute and manipulate such a set in general, we identify a class of counting queries whose spectra can be effectively represented.
Focusing on atomic counting queries, we pinpoint the possible shapes of a spectrum over $\ALCIF$ ontologies:
 they are essentially
the subsets of $\NN \cup \{ \infty\}$ closed under addition.
For most sublogics of $\ALCIF$, we show that possible spectra enjoy simpler shapes, being $\llbracket m, \infty \rrbracket$ or variations thereof.
To obtain our results, we refine constructions used for finite model reasoning 
and notably rely on a cycle-reversion technique for the Horn fragment of $\ALCIF$.
We also study the data complexity of computing the proposed effective representation and establish the $\FPwithlogcalls$-completeness of this task under several settings.
	\end{abstract}

	\begin{toappendix}

		To facilitate the navigation between results, and notably to easily identity sublogics to which a statement may or may not apply, Figure~\ref{figure:dl} above gives an overview of the dependencies between investigated description logics.

	\end{toappendix}

\section{Introduction}
\label{section-introduction}

Ontology-mediated query answering (OMQA) uses ontologies to offer a user-friendly vocabulary for formulating queries or to encapsulate domain knowledge that can be utilized
to retrieve more comprehensive answers \cite{DBLP:journals/jods/PoggiLCGLR08,DBLP:conf/ijcai/XiaoCKLPRZ18}.
Ontologies expressed in Description Logics (DLs), a family of knowledge representation languages underpinning the OWL Web Ontology Language, have received special attention \cite{DBLP:journals/jair/ArtaleCKZ09,introduction-to-dl}, and the core reasoning task of OMQA, the query answering task, has been extensively studied 
for conjunctive queries (CQs) and unions thereof.
Under the OMQA framework, answering CQs is addressed by considering every possible model of the knowledge base (KB), that is every extension of the data that satisfies the ontology, and returning so-called \emph{certain answers}, \emph{i.e.}\ answers true in every model.

A recent line of research has explored ways of leveraging OMQA to support counting queries, a well-known class of aggregate queries that allows to perform analytics on data.
Several semantics for such queries have been investigated, differing on how the possibility of multiple models is taken into account.
In \cite{count-guarded}, this has been addressed by returning the number of certain answers to a query, while in \cite{DBLP:conf/cikm/CalvaneseKNT08} an epistemic semantics was adopted -- enforcing the counting operator to only involve known data values and making it possible to use the usual notion of certain answers.

In this paper we adopt the semantic of \cite{kostylevreutter:count,DBLP:conf/ijcai/BienvenuMT20} that defines a counting query as a CQ in which some variables have been designated as \emph{counting variables}.
The answer to a counting query in a model of the KB is then obtained as the number of different assignments for the counting variables when considering every possible homomorphism of the CQ into the model.
Finding uniform bounds on those answers, \emph{i.e.}\ model-independent bounds, has been viewed as a notion of certain answers and is now well-understood for a variety of DLs \cite{DBLP:conf/ijcai/CalvaneseCLR20,BMT22,maniere:thesis}.
The following example highlights that even the tightest uniform bounds give, in general, a poor over-approximation of the set of answers.

\begin{example}
	\label{example:squares}
	Consider an empty KB $\kb$ and a counting query $q$ asking for the number of pairs $(z_1, z_2)$ such that $z_1$ and $z_2$ are friends of $\istyle{alice}$, that is $q = \exists z_1\ \exists z_2\ \textrm{friendOf}(z_1, \istyle{alice}) \land \textrm{friendOf}(z_2, \istyle{alice})$.
	Clearly, the set of possible answers to $q$ across models of $\kb$ is $\{ n^2 \mid n \in \mathbb{N} \} \cup \{ \infty \}$.
	The tightest uniform bounds on this set are given by the interval $\llbracket 0, \infty \rrbracket$.
\end{example}

Rather than aiming for an over-approximation of the set of possible answers, we intent to give a comprehensive description of this subset of $\ninfty = \{0,1,2,\dots, \infty\}$
that we call the \emph{spectrum of the counting query}, inspired by the notion of spectrum of a formula that refers to the possible cardinalities of its models \cite{fagin1974generalized,DBLP:conf/csl/DurandFL97}.
We investigate the possible shapes of these spectra for \emph{counting conjunctive queries (CCQs)} mentioned above and for ontologies expressed in the $\ALCIF$ DL.
This expressive DL is contained in $\mathcal{SHIQ}$, in which traditional CQ answering is well-understood \cite{birte2008cqa-in-SHIQ,lutz08-fork-and-alci}, and supports functionality constraints whose interactions with counting queries have never been studied to the best of our knowledge (those proposed in \cite{DBLP:conf/ijcai/CalvaneseCLR20} and denoted $\mathcal{N}^-$ being much more restricted).

One of the challenges encountered in our work is to clarify how to represent spectra.
Indeed, the set of possible answers of a CCQ across models of a KB might, \emph{a priori}, be an arbitrary set of natural numbers, and thus hard to describe by means other than providing the CCQ-KB couple.
We aim to identify classes of ontology-mediated queries (OMQs) whose spectra admit an \emph{effective representation}.
By effective, we intend a representation that is (i) finite, ideally with a size that can be bounded by the size of the OMQ, (ii) independent from the specific ontology language and (iii) spectrum membership
can be efficiently tested, \emph{i.e.}\ in
polynomial time w.r.t.\ the size of the integer and of the representation.
\smallskip

\paragraph*{Contributions.}

We 
introduce the notion of a spectrum for a CCQ and
show that connected and individual-free CCQs evaluated on $\ALCIF$ KBs always admit
well-behaved spectra, as those are subsets of $\ninfty$ closed under addition.
We then propose an effective representation of such spectra. 

This motivates a focus on cardinality queries, \emph{i.e.}\ Boolean atomic CCQs \cite{DBLP:conf/ijcai/BienvenuMT21}, that fall in the above class.
First,
we fully characterize
possible spectra shapes for concept cardinality queries on $\ALCIF$ KBs, showing that every subset of $\ninfty$ closed under addition is realizable.
We also study several sublogics of $\ALCIF$, 
extending $\EL$ and $\dllitecore$, for most presenting 
full characterizations.
For some, only simpler shapes, such as $\llbracket m, \infty \rrbracket$, are possible. 
For $\ELIFbot$, the Horn fragment of $\ALCIF$, we notably use variations of the cycle-reversion techniques introduced to tackle finite model reasoning in such DLs \cite{CKV90,rosati2008finite,GarciaLS14}.

We further study the data complexity of computing the proposed effective representations of spectra.
For 
many settings, such as concept cardinality queries on $\ALC$ KBs, we are able to establish $\FPwithlogcalls$-completeness of this problem.
Several of our upper bounds notably rely on existing results regarding DLs equipped with closed predicates.

Via connections with the concept cardinality case and refinements of the corresponding constructions, we also investigate the case of role cardinality queries.
This latter class of OMQs features challenging shapes of spectra already for $\EL_\bot$ KBs, and we prove that computing effective representations of those is $\FPwithlogcalls$-complete already for $\EL$ KBs.

This paper is to appear at AAAI 2025. The present version contains an appendix with full proofs.

\section{Preliminaries}
\label{section-prelims}

\begin{toappendix}

\begin{figure*}
	\centering

\tikzstyle{dl}=[fill=white, draw=none, shape=rectangle]
\tikzstyle{dlnodebehind}=[fill=white, draw=none, shape=rectangle, font={\small}]

\tikzstyle{behindedge}=[-, draw=black, fill=none, dashed]
	\begin{tikzpicture}
	\begin{pgfonlayer}{nodelayer}
		\node [style=dl] (58) at (16, 3) {$\ALCF$};
		\node [style=dl] (59) at (16, 2) {$\ALC$};
		\node [style=dl] (60) at (25, 1) {$\ELI$};
		\node [style=dl] (61) at (19, 1) {$\ELbot$};
		\node [style=dl] (63) at (22, 1) {$\ELF$};
		\node [style=dl] (64) at (22, 0) {$\EL$};
		\node [style=dl] (65) at (19, 2) {$\ELFbot$};
		\node [style=dl] (66) at (25, 2) {$\ELIF$};
		\node [style=dl] (67) at (28, 0) {$\dllitecore$};
		\node [style=dl] (68) at (28, 1) {$\dllitef$};
		\node [style=dl] (69) at (19, 3) {$\ALCI$};
		\node [style=dl] (70) at (19, 4) {$\ALCIF$};
		\node [style=dl] (71) at (22, 3) {$\ELIFbot$};
		\node [style=dl] (72) at (22, 2) {$\ELIbot$};
	\end{pgfonlayer}
	\begin{pgfonlayer}{edgelayer}
		\draw (59) to (58);
		\draw (64) to (61);
		\draw (64) to (63);
		\draw (64) to (60);
		\draw (60) to (66);
		\draw (63) to (66);
		\draw (67) to (68);
		\draw (61) to (65);
		\draw (65) to (63);
		\draw (58) to (70);
		\draw [style=behindedge] (59) to (69);
		\draw (61) to (59);
		\draw (65) to (58);
		\draw (70) to (71);
		\draw (71) to (66);
		\draw [style=behindedge] (72) to (60);
		\draw [style=behindedge] (69) to (72);
		\draw (65) to (71);
		\draw [style=behindedge] (72) to (61);
		\draw [style=behindedge] (71) to (72);
		\draw [bend right=15, looseness=0.75] (68) to (71);
		\draw [dash pattern=on 82.5pt off 400pt, bend right=15, looseness=0.75] (67) to (72);
		\draw [dashed, bend right=15, looseness=0.75] (67) to (72);
		\draw [style=behindedge] (70) to (69);
	\end{pgfonlayer}
\end{tikzpicture}
	\caption{Investigated description logics. An edge indicates that the lower DL is subsumed by the higher DL.}
	\label{figure:dl}
\end{figure*}
\end{toappendix}

With $\NN$ we denote the set of natural numbers $\NN = \{0,1,\dots\}$
and by $(\ninfty, +)$ the semigroup of natural numbers with infinity $\infty$ and the usual definition of addition $+$. 
In particular, $a+\infty = \infty+a = \infty$ for all elements $a \in \ninfty$.
We recall that every subsemigroup of $(\ninfty, +)$,
\emph{i.e.}\ every subset closed under addition, is
\emph{ultimately periodic} (see \emph{e.g.}\ \cite{finitely-generated}, Chapter~2, Proposition~4.1), which 
ensures that every subsemigroup of $\ninfty$ takes the following shape.
\begin{lemmarep}
    \label{lemma:ultimately-periodic}
    Let $V$ be a subsemigroup of $(\ninfty, +)$. Then $V = S \cup \{M + \alpha \cdot n \mid n \in \mathbb{N}\}$ where $S$ is a finite subset of $\ninfty$ and $M,\alpha \in \ninfty$.
\end{lemmarep}
\noindent If $\alpha = 1$, we write $S \cup \llbracket M, \infty \llbracket$ for $S \cup \{M + n \mid n \in \mathbb{N}\}$.

\begin{proof}
	To prove the lemma we show that for every set $V$ closed under addition the following are equivalent.
    \begin{itemize}
        \item[A)] $V$ is a subsemigroup of $(\ninfty, +)$.
        \item[B)] $V$ is finitely generated.
        \item[C)] $V = S \cup \{M + \alpha \cdot z \mid z \in \mathbb{N}\}$ for some finite subset $S \subseteq \ninfty$ and numbers $M,\alpha \in \ninfty$.
    \end{itemize}

    Before we prove the equivalence, let us recall some basic facts and notation.
    By $\mathrm{GCD}$ we denote the greatest common divisor,
    i.e. $\mathrm{GCD}(a_1, \dots, a_k)$ is the greatest natural number dividing all $a_i$s.
    A set of numbers is coprime if their $\mathrm{GCD}$ is $1$.

    A set $V$ is generated by a set $S$, denoted $V = \langle S \rangle$, if $V$ is the smallest closed under addition subset of $\ninfty$ that contains $S$.
    Set $V$ is finitely generated if it is generated by a finite set.
    If $S = \{a_1, \dots, a_k\}$ then we may write $\langle a_1, \dots, a_k \rangle$ instead of $\langle S\rangle$.

    We start by showing equivalence between $B)$ and $C)$.
    For the direction $B) {\implies} C)$ we first prove the following claim.

    \begin{itemize}
        \item[$\star$] Let $a_1, \dots, a_k \in \NN \setminus \{0\}$ be a set of positive coprime natural numbers.
        Then $\langle a_1, \dots, a_k \rangle  = S \cup \llbracket M, \infty \llbracket$ for some finite
        set $S \subseteq \NN$ and a number $M \in \NN$.
    \end{itemize}

    For $k=1$, $S_1 = \langle a_1 \rangle = \llbracket a_1, \infty \llbracket$.

    For $k=2$, $S_2 = \langle a_1,a_2 \rangle = S \cup \llbracket a_1\cdot a_2, \infty \llbracket$ for some finite set $S$.
    To prove it, consider set $G = \{i \cdot a_1 \mid i=0, \dots, a_2-1\}$.
    and fix $n \geq a_1\cdot a_2$.

    Since
    $a_1$ and $a_2$ are coprime, $i\cdot a_1 \equiv j\cdot a_1 \text{ mod } a_2$ if and only if $i=j$, for $0 \leq i,j < a_2$.
    Thus, $\{i \cdot a_1 \text{ modulo } a_2 \mid i=0,\dots a_2\} = \llbracket 0, a_2 \llbracket$
    and there is $0 \leq i < a_2$ such that $i\cdot a_1 \equiv n$ modulo $a_2$.
    Therefore, there is $m \geq 0$ such that $n = m\cdot a_2 + i \cdot a_1$, as $i \cdot a_1 < a_2 a_1 \leq n$.
    This proves that $n \in S_2$ and ends the case for $k=2$.

    Let us now assume that $k\geq 2$ and $a_1, \dots, a_{k+1} \in \NN$ is a set of natural numbers such that $\mathrm{GCD}(a_1, \dots, a_{k+1}) = 1$.
    Let $S_i = \langle a_1, \dots, a_i \rangle$.

    Since the $\mathrm{GCD}(a_1, \dots, a_{k+1}) = 1$, there is $a \in S_k$ such that $\mathrm{GCD}(a, a_{k+1}) = 1$.
    Hence,
    \[
    S_{k+1} = \langle a_1, \dots, a_{k+1} \rangle = \langle a_1, \dots, a_{k+1}, a \rangle \supseteq \langle a_{k+1}, a \rangle = S' \cup \llbracket a_{k+1}\cdot a, \infty \llbracket
    \]
    for some finite set $S'$.
    Therefore
    \[
    S_{k+1} = (S_{k+1} \cap \llbracket 0,a_{k+1}\cdot a \rrbracket) \cup \llbracket a_{k+1}\cdot a, \infty \llbracket,
    \]
    which ends the proof of the claim as $S_{k+1} \cap \llbracket 0,a_{k+1}\cdot a \rrbracket$ is a finite set.

    Now, for the direction $B) {\implies} C)$ let $a_1, \dots, a_k$ be a set of generating $V$ except for $\infty$ and $0$.
    Let $d= \mathrm{GCD}(a_1,\dots, a_k)$ and consider the set of numbers $a'_i = a_i/d$ for $i=1,\dots, k$.
    By $\star$, $\langle a'_1, \dots, a'_k\rangle = S' \cup \llbracket M', \infty \llbracket$ for some finite set $S'$ and some number $M' \in \NN$.
    Thus,
    \[
        \langle a_1, \dots, a_k\rangle = d\cdot S' \cup \{dM' + nd \mid n \in \NN\},
    \]
    \[
        \langle a_1, \dots, a_k, \infty \rangle = (d\cdot S' \cup \{\infty\}) \cup \{dM' + nd \mid n \in \NN\},
    \]
    \[
    \langle a_1, \dots, a_k, 0 \rangle = (d\cdot S' \cup \{0\}) \cup \{dM' + nd \mid n \in \NN\},
    \]
    \[
    \langle a_1, \dots, a_k, 0, \infty \rangle = (d\cdot S' \cup \{0,\infty\}) \cup \{dM' + nd \mid n \in \NN\}
    \]
    for $k\geq 1$, and
    \[
    \langle \infty \rangle = \{\infty\} \cup \{\infty + n \mid n \in \NN\},
    \]
    \[
    \langle 0 \rangle = \{0\} \cup \{0 + 0\cdot n \mid n \in \NN\},
    \]
    \[
    \langle \infty, 0 \rangle = \{0\} \cup \{\infty + n \mid n \in \NN\}
    \]
    for $k = 0$.
    This ends the proof of this implication.

    For the reverse implication $C) {\implies} B)$, let $V = S \cup \{M + \alpha \cdot z \mid z \in \mathbb{N}\}$.
    It is easy to verify that the set $S \cup \{ M, M+\alpha, \dots, M + M\alpha\}$ generates $V$.

    Now, to end the proof it is enough to show equivalence between $A)$ and $B)$.
    Implication $B) {\implies} A)$ follows directly from definitions.
    For the reverse direction let $d = \mathrm{GCD}(V)$. As before, consider the set $V' = \{v/d \mid v \in V\}$.
    Since $\mathrm{GCD}(V') = 1$, there is a finite subset $V'' \subseteq V'$ such that $\mathrm{GCD}(V'')=1$.
    Since $V'$ is closed under addition $\langle V'' \rangle \subseteq V'$. Moreover, by $\star$,
    $\langle V'' \rangle = S'' \cup \llbracket M', \infty \llbracket$ for some finite set $S''$ and number $M'$.
    Therefore, the set $Z = V' \setminus \langle V'' \rangle \subseteq \llbracket 0, M' \rrbracket$ is finite.
    It easy to check that $V' = \langle V'' \cup Z\rangle$ which implies that $V$ is finitely generated.

\end{proof}

\begin{toappendix}

  Since every subsemigroup of $\langle \ninfty, + \rangle$ is finitely generated, one might consider to represent those sets by their generators.
  Unfortunately, such representation would be not effective in the sense proposed in the paper. On the other hand, representing sets
  as the sum of a finite set $S$ and a periodic set $\{m + \alpha n \mid n \in \NN\}$ is effective.
\begin{lemma}
Let $G \subseteq \langle \ninfty, + \rangle$ be a subsemigroup.
\begin{itemize}
    \item $G =S \cup \{m + \alpha n \mid n \in \NN\}$ is an effective representation.
    \item $G = \langle g_1, \dots, g_n\rangle$ is \textbf{not} an effective representation, unless $\P = \NP$.
\end{itemize}
\end{lemma}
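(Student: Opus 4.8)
The plan is to measure each of the two candidate representations against the three requirements stated for an \emph{effective representation}: it must be finite, independent of the ontology language, and come with a membership test running in time polynomial in the bit-size of the queried integer and of the representation itself. In both cases the first two requirements are immediate, so everything hinges on the complexity of deciding membership.

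For the first item, write $G = S \cup \{m + \alpha \cdot n \mid n \in \NN\}$ for a finite $S \subseteq \ninfty$ and $m,\alpha \in \ninfty$ --- the normal form supplied by Lemma~\ref{lemma:ultimately-periodic}. I would first settle the value $\infty$: it lies in $G$ exactly when $\infty \in S$, or $m = \infty$, or $\alpha = \infty$, each checked by inspection. For a finite input $k$ given in binary, testing $k \in S$ is a linear scan of integer comparisons; and testing $k \in \{m + \alpha n \mid n \in \NN\}$ splits into the degenerate cases $m = \infty$ (impossible), $\alpha \in \{0,\infty\}$ (holds iff $k = m$), and $\alpha$ finite and positive (holds iff $m \le k$ and $\alpha$ divides $k - m$), i.e.\ one subtraction and one division with remainder. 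All steps are polynomial in $|k|$ and in $|S| + |m| + |\alpha|$, so $(S,m,\alpha)$ is an effective representation; this is precisely what justifies adopting this normal form throughout the paper.

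For the second item, the representation $\langle g_1, \dots, g_n\rangle$ is again finite and language-independent, so the only way it can fail to be effective is the membership test. I would phrase that test as the decision problem: given $g_1, \dots, g_n \in \ninfty$ and $k \in \NN$, all in binary, does there exist $(x_1, \dots, x_n) \in \NN^n$ with $\sum_i x_i g_i = k$? (A generator equal to $0$ only adds $0$ to $G$, and a generator equal to $\infty$ only adds $\infty$, so for finite $k$ one may assume all $g_i$ are finite and positive.) This is exactly nonnegative-integer-combination feasibility --- the \emph{Coin Problem}, the unbounded variant of \textsc{Subset-Sum} --- which is $\NP$-complete, a classical fact going back to Lueker's 1975 report on nonnegative integer programming. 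Hence, if $\langle g_1, \dots, g_n\rangle$ were effective, its polynomial-time membership test would place an $\NP$-complete problem in $\P$, yielding $\P = \NP$.

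The $\NP$-hardness invoked above is the single non-elementary ingredient, and I would either cite it or recall the standard padding reduction from \textsc{Subset-Sum}: given weights $c_1, \dots, c_m$ and target $t$, choose a base $B > m$, put $g_i = B^{i} + c_i\, B^{m+1}$ and set $k = \sum_{i=1}^{m} B^{i} + t\, B^{m+1}$; in base $B$ no carries arise, the low-order digits force each $x_i \in \{0,1\}$, and the solutions of $\sum_i x_i g_i = k$ are in bijection with the subsets of $\{c_1, \dots, c_m\}$ summing to $t$. The only genuine subtlety --- the main thing to get right --- is the encoding convention: the hardness, and hence the whole dichotomy between the two representations, relies on integers being written in binary, which is what the ``size of the integer'' clause of the definition refers to; under a unary encoding the Coin Problem admits a pseudo-polynomial dynamic program and both representations would then qualify as effective.
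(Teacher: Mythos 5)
Your argument is correct and follows essentially the same route as the paper's: check finiteness and language-independence directly, reduce the first bullet to a linear scan of $S$ followed by a divisibility test on $k-m$, and establish the second bullet by showing the membership test is $\NP$-hard. You are in fact more careful than the paper on the second bullet: the paper simply invokes $\NP$-completeness of Subset Sum and identifies it with membership in $\langle g_1,\dots,g_n\rangle$, glossing over the fact that generators may be reused with arbitrary multiplicity, so the relevant decision problem is the unbounded (coin/knapsack) variant; this is exactly the distinction you draw, and your appeal to Lueker's classical hardness result for nonnegative integer-combination feasibility closes it cleanly. Your explicit treatment of the degenerate cases ($\infty$, $\alpha\in\{0,\infty\}$) and of the binary-versus-unary encoding issue is also a genuine improvement in precision. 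One small caution on the optional padding reduction you sketch: requiring only $B>m$ does not make the low-order digits force $x_i\in\{0,1\}$ --- for instance with $m=2$, $B=3$ the low part $B+B^2=12$ is also realized by $x_1=4$, $x_2=0$ --- so the base must be taken large enough to exclude such oversized coefficients; since you rely primarily on the citation rather than on this sketch, the overall proof is unaffected.
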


For the first bullet observe that we the representation is finite, determined by a finite set and two natural numbers, and independent from the ontology languages defining KB.
Moreover, the membership test $v \in \mathsf{Sp}_{\kb}(q)$ can be performed in linear time by first searching $S$ for a possible match and if that fails then deciding if $(v-m)\%\alpha=0$,
which can be done in polynomial time.

For the second bullet, recall that the \emph{Subset Sum} problem, i.e. given multiset $T$ and a number $v$ decide if there is a subset $V \subseteq T$ with sum $v$ is $\NP$-complete.
Hence, the membership problem is $\NP$-hard when the spectrum is represented by a set of generators.
\end{toappendix}

\subsection{$\ALCIF$ and other description logics}

Let $\cnames$, $\rnames$, and $\inames$ be countably infinite and mutually disjoint sets of \emph{concept names}, \emph{role names}, and \emph{individual names}.
An \emph{inverse role} takes the form $\rstyle{r}^-$ with $\rstyle{r}$ a role name, and a \emph{role} is a role name or an inverse role.
We denote $\posroles$ the set of roles.
If $\rstyle{r}=\rstyle{s}^-$ is an inverse role, then $\rstyle{r}^-$ denotes $\rstyle{s}$.
An \emph{$\ALCI$ concept} is built according to the rule
$
\cstyle{C},\cstyle{D} ::= \top \mid \cstyle{A} \mid \neg \cstyle{C} \mid \cstyle{C} \sqcap \cstyle{D} \mid \exists \rstyle{r} . \cstyle{D}
$
where $\cstyle{A} \in \cnames$ and $\rstyle{r} \in \posroles$.
We use $\bot$ as an abbreviation for $\neg \top$, $\cstyle{C} \sqcup \cstyle{D}$ for $\neg(\neg \cstyle{C} \sqcap \neg \cstyle{D})$, $\forall \rstyle{r} . \cstyle{C}$ for $\neg \exists \rstyle{r}. \neg \cstyle{C}$ and $\exists \rstyle{r}$ for $\exists \rstyle{r}.\top$.

An \emph{$\ALCIF$ TBox} is a finite set of \emph{concept inclusions (CIs)} $\cstyle{C} \sqsubseteq \cstyle{D}$ and of \emph{functionality restrictions} $\cstyle{C} \sqsubseteq\ \leq 1\ \rstyle{r}.\cstyle{D}$ where $\cstyle{C},\cstyle{D}$ are $\ALCI$ concepts and $\rstyle{r}$ is a
role.
An \emph{ABox} is a finite set of \emph{concept assertions} $\cstyle{A}(\istyle{a})$ and \emph{role assertions} $\rstyle{r}(\istyle{a},\istyle{b})$
where $\cstyle{A} \in \cnames$, $\rstyle{r} \in \rnames$ and $\istyle{a},\istyle{b} \in \inames$.
The set of individual names used in the ABox $\abox$ is denoted $\individuals(\abox)$.
An \emph{$\ALCIF$ knowledge base (KB)} takes the form $\kb =(\tbox,\abox)$ with $\tbox$ an $\ALCIF$ TBox and $\abox$ an ABox.

We also investigate restrictions of $\ALCIF$.
Each fragment is obtained by disallowing concepts, roles constructors, or axiom shapes in the standard way.
An \emph{$\EL$ concept} is an $\ALCI$ concept that uses neither negation nor inverse roles and an $\EL$ TBox only supports CIs of $\EL$ concepts.
Allowing inverse roles is indicated by $\mathcal{I}$; \textit{concept disjointness} axioms of shape $\cstyle{C} \sqcap \cstyle{D} \sqsubseteq \bot$
	 by subscript $\bot$; unrestricted use of negation by replacing $\EL$ with $\ALC$; and functionality restrictions by $\mathcal{F}$.
A $\dllite$ concept has shape $\cstyle{A} \mid \exists \rstyle{r}$ for $\cstyle{A} \in \cnames$ and $\rstyle{r} \in \posroles$.
A $\dllitecore$ TBox only supports CIs and CDs of $\dllite$ concepts.
$\dllitef$ extends $\dllitecore$ with unqualified functionality restrictions $\top \sqsubseteq\ \leq 1\ \rstyle{r}.\top$
\cite{DBLP:conf/kr/CalvaneseGLLR06}.

The semantics is defined in terms of interpretations $\I =(\Delta^\I,\cdot^\I)$ in the standard way.
$\Delta^\I$ is a non-empty \emph{domain} and $\cdot^\I$ an \emph{interpretation function}.
We refer to \cite{introduction-to-dl} for details.
An interpretation $\I$ satisfies a CI $\cstyle{C} \sqsubseteq \cstyle{D}$ if $\cstyle{C}^\I \subseteq \cstyle{D}^\I$ and a functionality restriction $\cstyle{C} \sqsubseteq\ \leq 1\ \rstyle{r}.\cstyle{D}$ if for each $d \in \cstyle{C}^\I$, there is at most one $e \in \cstyle{D}^\I$ such that $(d, e) \in \rstyle{r}^\I$.
It satisfies an assertion $\cstyle{A}(\istyle{a})$ if $\istyle{a} \in \cstyle{A}^\I$ and $\rstyle{r}(\istyle{a},\istyle{b})$ if $(\istyle{a},\istyle{b}) \in \rstyle{r}^\I$.
We make the \emph{standard names assumption}: in every interpretation $\I$, we assume $\istyle{a}^\I = \istyle{a}$ for every $\istyle{a} \in \individuals(\abox)$.
An interpretation $\I$ is a \emph{model} of a TBox $\tbox$, denoted $\I \models \tbox$, if it satisfies all its axioms.
Models of ABoxes and KBs are defined likewise.
A TBox $\tbox$ (resp.\ a KB $\kb$) entails an assertion, a CI or a functionality restriction $\alpha$, denoted $\tbox \models \alpha$ (resp.\ $\kb \models \alpha$) if all its models satisfy $\alpha$.

\subsection{Spectra of Counting Queries}

We consider two countably infinite and mutually disjoint sets: a set
of \emph{variables} and a set
of \emph{counting variables}.
A \emph{counting conjunctive query (CCQ)} takes the form $q(\bar x) = \exists \bar y\ \exists \bar z\ \psi(\bar x, \bar y, \bar z)$, where $\bar x$ and $\bar y$ are tuples of distinct variables, $\bar z$ is a tuple of distinct counting variables and $\psi$ is a conjunction of concept and role atoms whose terms are drawn from $\inames \cup \bar x \cup \bar y \cup \bar z$.
We call $\bar x$ (resp. $\bar y$, resp. $\bar z$) the \emph{answer} (resp.\ \emph{existential}, resp.\ \emph{counting}) variables of $\query$.
A CCQ is \emph{Boolean} if $\bar x = \emptyset$.

For a tuple $\bar a \in \inames^{\sizeof{\bar a}}$ of individuals and a model $\I$ of a KB $\kb$, we define $\# q(\bar a)^{\I}$ the answer of $q(\bar a)$ on $\I$ as:
\begin{center}
	\(
	\# \{ \match_{|\bar z} \mid \match : q \rightarrow \I \textrm{ homomorphism s.t.}\ \match(\bar x) = \bar a
	\}.
	\)
\end{center}
The spectrum of $q(\bar a)$ on $\kb$ is further defined as:
\begin{center}
\(
\spectrum{\kb}{q(\bar a)} := \{ \# q(\bar a)^{\I} \mid \I \models \kb \}.
\)
\end{center}
Note that $\spectrum{\kb}{q(\bar a)} = \spectrum{\kb}{q[\bar a/\bar x](\bar a_\emptyset)}$, where $q[\bar a/\bar x]$ denotes the Boolean CCQ obtained from $q$ by substituting every answer variable $x_i \in \bar x$ by the corresponding $a_i \in \bar a$, and $\bar a_\emptyset$ the empty tuple.
We thus focus w.l.o.g.\ on Boolean CCQs $q(\bar x_\emptyset)$, denoted simply $q$ for readability.

	The main interest of this paper is to compute representations of spectra that are effective in the sense of Points~(i)--(iii) in the introduction.
While we do not know whether all spectra can be effectively represented, we identify a class of OMQs, namely connected and individual-free CCQs on $\ALCIF$ KBs, whose spectra admit such a representation.
We recall that $q$ is \emph{connected} if its Gaifman graph is, and is \emph{individual-free} if none of its atom involves a term from $\inames$.

\begin{lemmarep}
	\label{lemma-closure-under-addition}
	If $\kb$ is an $\ALCIF$ KB and $q$ a connected and individual-free CCQ, then $\spectrum{\kb}{q}$ is closed under addition.
	Furthermore, if $q$ is satisfiable w.r.t.\ $\kb$, then $\infty \in \spectrum{\kb}{q}$.
\end{lemmarep}

\begin{proof}
	Assume $q$ is connected and individual-free.
	If $\spectrum{\kb}{q} = \emptyset$, then it is clearly closed under addition.
	Otherwise consider $m, n \in \spectrum{\kb}{q}$.
	By definition, there exist models $\I_m$ and $\I_n$ of $\kb$ such that $\# q^{\I_m} = m$ and $\# q^{\I_n} = n$.
	We construct a model $\I_m \uplus \I_n$ of $\kb$ such that $\# q^{\I_m \uplus \I_n} = m + n$, proving $m +n \in \spectrum{\kb}{q}$ as desired.
	This is simply achieved by putting models $\I_m$ and $\I_n$ aside and choosing to interpret individuals in the first one.
	More formally, for every element $d \in \domainof{\I_n}$, we consider a dedicated fresh element $c_d \notin \domainof{\I_m}$.
	We let $\domainof{\I_m \uplus \I_n} := \domainof{\I_m} \cup \{ c_d \mid d \in \domainof{\I_n} \}$ and for every individual $\istyle{a}$, every concept name $\cstyle{A}$ and role name $\rstyle{p}$, we further set:
	\begin{align*}
		\istyle{a}^{\I_m \uplus \I_n} & := \istyle{a}^{\I_m}
		\\
		\cstyle{A}^{\I_m \uplus \I_n} & := \cstyle{A}^{\I_m} \cup \{ c_d \mid d \in \cstyle{A}^{\I_n} \}
		\\
		\rstyle{p}^{\I_m \uplus \I_n} & := \rstyle{p}^{\I_m} \cup \{ (c_d, c_e \mid (d, e) \in \rstyle{p}^{\I_n} \}.
	\end{align*}
	It is easily verified that $\I_m \uplus \I_n$ is a model of $\kb$.
	Since $q$ is individual free, it is clear that it still maps $n$ times in $\I_m \uplus \I_n$ restricted to domain $\{ c_d \mid d \in \domainof{\I_n} \}$.
	In addition with $\I_m$ being contained in $\I_m \uplus \I_n$, and $\domainof{\I_m}$ being disjoint from $\{ c_d \mid d \in \domainof{\I_n} \}$, we clearly have $\# q^{\I_m \uplus \I_n} \geq m + n$.
	Finally, there cannot be any additional match for $q$ as domains $\domainof{\I_m}$ and $\{ c_d \mid d \in \domainof{\I_n} \}$ are disconnected while $q$ is connected, proving $\# q^{\I_m \uplus \I_n} = m + n$ as desired.

	Assume that, additionally, $q$ is satisfiable w.r.t.\ $\kb$.
	Therefore, there exists a model $\I_1$ of $\kb$ in which $\# q^{\I_1} \geq 1$.
	We now form $\I_{k+1} := \I_k \uplus \I_1$ for every $k \geq 1$.
	As $\I_k \subseteq \I_{k+1}$, we can safely define $\I_\infty := \biguplus_{k \geq 1} \I_k$ to obtain a model of $\kb$ such that $\# q^{\I_\infty} = \infty$, proving $\infty \in \spectrum{\kb}{q}$.
\end{proof}

In other words, spectra of connected and individual-free CCQs are subsemigroups of $\ninfty$ and, by Lemma~\ref{lemma:ultimately-periodic}, are of form
$S \cup \{M + \alpha \cdot n \mid n \in \mathbb{N}\}$.
Thus, for this class, computing representations of spectra can be defined as follows.
		\begin{problem}
			\label{problem:compute-spectrum}
			\noindent
			Given a KB $\kb$ and a CCQ $q$, compute a special value $\emptyset$ if $\spectrum{\kb}{q} = \emptyset$, otherwise a finite set $S \subseteq \ninfty$ and numbers $M,\alpha \in \ninfty$ s.t.\ $\spectrum{\kb}{q} = S \cup \{M + \alpha \cdot n \mid n \in \mathbb{N}\}$.
		\end{problem}

	It can be verified that such representations as triples $(S, M, \alpha)$ comply with Points~(i)--(iii) from the introduction and are, in this sense, effective.

\begin{remark}
	\label{remark:trivial-sets}
	Notice $\spectrum{\kb}{q} = \emptyset$ iff $\kb$ is unsatisfiable;
	and, likewise, $\spectrum{\kb}{q} = \{ 0 \}$ iff $\kb$ is satisfiable but $q$ is unsatisfiable w.r.t.\ $\kb$.
	In particular, if $\kb$ is an $\ELI$ KB, then $\spectrum{\kb}{q}$ cannot be $\emptyset$ nor $\{ 0 \}$.
	Similarly, if $\kb$ is an $\ELIF$ KB, then $\spectrum{\kb}{q}$ cannot be $\{ 0 \}$.
	An effective representation of $\{0\}$ in the sense of Problem~1 is $(S, M, \alpha) = (\emptyset, 0, 0)$.
\end{remark}

A subset of $\ninfty$ is \emph{trivial} if it is either $\emptyset$ or $\{ 0 \}$.

We
highlight
that Example~\ref{example:squares} illustrates a situation in which the individual-freeness condition is not met.

\section{Spectrum of a concept cardinality query}
\label{section:concepts}

\newcommand{\nos}{\raisebox{.2ex}{$\star$}}
\newcommand{\nope}{$\cdot$}

\begin{toappendix}

	The results of Section~\ref{section:concepts} regarding the possible shapes of spectra of concept cardinality queries are summarized in Table~\ref{table:concepts}.

\begin{table*}
	\centering
	\begin{tabular}{ccccccccc}
		$\mathcal{L}$
		&
		&
		$\llbracket m, \infty \rrbracket$
		&
		$\emptyset$
		&
		$\{ 0 \}$
		&
		$\{ 0 \} \cup \llbracket m, \infty \rrbracket$
		&
		$\{ \infty \}$
		&
		$\{ 0, \infty \}$
		&
		$V \cup \{ \infty \}$
		\\ \midrule
		$\ALCIF$
		& \nos		& \checkmark & \checkmark & \checkmark & \checkmark & \checkmark & \checkmark & \checkmark
		\\
		$\ELIFbot$
		& \nope	& \checkmark & \checkmark & \checkmark & \checkmark & \checkmark & \checkmark & \nope
		\\
		$\dllitef$
		& \nos 		& \checkmark & \checkmark & \checkmark & \checkmark & \checkmark & \checkmark  & \nope
		\\
		$\dllitecore$, $\ELbot$, $\ALCI$, $\ALCF$
		& \nos 		& \checkmark & \checkmark & \checkmark & \checkmark & \nope & \nope & \nope
		\\
		$\ELIF$
		& \nos 		& \checkmark & \checkmark & \nope & \nope & \checkmark & \nope & \nope
		\\
		$\ELF$
		& \nos 		& \checkmark & \checkmark & \nope & \nope & \nope & \nope & \nope
		\\
		$\EL$, $\ELI$
		& \nos 		& \checkmark & \nope & \nope & \nope & \nope & \nope & \nope
		\\
		\bottomrule
	\end{tabular}
	\caption{$\mathcal{L}$-concept realizable sets, where $m \in \NN$ and $V$ is any subsemigroup of $\ninfty$.~\nos\ indicates no other shape is possible.}
	\label{table:concepts}
\end{table*}
\end{toappendix}

In this section, we focus on concept cardinality queries $q_\cstyle{C} := \exists z\ \cstyle{C}(z)$, where $\cstyle{C} \in \cnames$ is a concept name and $z$ a counting variable.
Computing the spectrum of $q_\cstyle{C}$ over a KB $\kb$ thus corresponds to the natural task of deciding the possible values of $\sizeof{\cstyle{C}^\I}$ across the models $\I$ of $\kb$.
Every concept cardinality query satisfies preconditions of Lemma~\ref{lemma-closure-under-addition} and thus
its spectrum can be represented as in Problem~\ref{problem:compute-spectrum}.
Conversely, one can ask which sets are spectra of such queries.
We say a set $V$ is \emph{$\mathcal{L}$-concept realizable} if there is a concept $\cstyle{C}$ and a $\mathcal{L}$ KB $\kb$ s.t.\ $\spectrum{\kb}{q_\cstyle{C}} = V$.
We begin with $\ALCIF$ KBs and prove they can realize
all subsemigroups of $\ninfty$.

\begin{theoremrep}
	\label{theorem-shape-concept-alcif}
	A non-trivial subset of $\ninfty$ is $\ALCIF$-concept realizable iff it is a subsemigroup of $\ninfty$ containing $\infty$.
\end{theoremrep}

\begin{proof}
	Let $V$ be a subsemigroup of $\ninfty$ generated by $n_1,\dots, n_k$.
    We will construct a KB $\kb = (\tbox,\abox)$ such that $\spectrum{\kb}{q_\cstyle{C}} = V$.

    First we resolve some corner cases.
    For $V = \{0\}$ take $\abox = \emptyset$ and $\tbox = \{C \sqsubseteq \bot\}$.
    For $V = \{\infty\}$ see Example~\ref{example-concept-elif-infinity}.
    For $V = \{0, \infty\}$ see Example~\ref{example-dllitef-zero-infinity}.

    From now on, we assume that there is $0<a<\infty$ in $V$.

    We first enforce that every model consists of cycles and infinite linear orders by
    requesting the following.

    \[
    \top \sqsubseteq\ \leq 1\ \rstyle{r^-}.\top, \top \sqsubseteq\ \leq 1\ \rstyle{r}.\top
    \]

    Furthermore, we assure that for every such cycle either all elements realise $C$
    or none.

    \[
    C \sqsubseteq \exists r.C,\ C \sqsubseteq \exists r^-.C
    \]

    Finally, we need to ensure that every cycle consists of a number of elements that is divisible by $n_j$, where $n_j$
    is one of the generators. We do it using concepts to encode a binary counter on elements of the cycle. Every generator $n_j$ will use its own set of concept names, distinguished by superscript $j$, for $1 \leq j \leq k$, to mark cycles.

    \[
    N^j \sqsubseteq \exists r.N^j,\ N^j \sqsubseteq \exists r^-.N^j
    \]

    We first ensure that $C$ appears only on marked cycles.

    \[C \equiv \bigsqcup_{1 \leq i \leq l} N^j\]
    where the notation $\cstyle{D} \equiv \cstyle{E}$, for any concepts $\cstyle{D}$ and $\cstyle{E}$, stands for the two concept inclusions $\cstyle{D} \sqsubseteq \cstyle{E}$ and $\cstyle{E} \sqsubseteq \cstyle{D}$.
    Now we encode the counter.
    Let $I^j$ be the set of bits used by $n_j$ and $l_j = \max (I^j)$ be the greatest bit in $n_j$.
    For example, with $n_j= 10 = 1101_b$,
    $I^J = \{0,1,3\}$ and $l_j = 3$. For every generator $n_j$
    we add rule
    \[
    E^j \equiv N^j \sqcap \bigsqcap_{i \in I^j} B^j_i \sqcap \bigsqcap_{i \notin I^j} \lnot B^j_i
    \]
    to mark the end of the calculation and
    \[
    S^j \equiv N^j \sqcap \bigsqcap_{0 \leq i \leq l_j} \lnot B_i
    \]
    to mark a beginning of it. End and start are two consecutive positions
    $E^j \equiv \exists r.S^j$.

    The concepts $B^j_i$ are used to represent binary encodings of the computation simulating binary counter from $0$ to $n_j$,
    with $B^j_i$ representing the $i$th bit of a number on a cycle belonging to the computation for $n_j$.

    Concepts $C^j_i$ are used to encode incrementation. They mark
    the block of least important bits set to $1$.
    \[
    B^j_0 \equiv C^j_0
    \]

    \[
    B^j_{i+1} \sqcap C^j_{i} \equiv C^j_{i+1}
    \]
    for $0 \leq i \leq l$.

    The increment rules apply to all non-end positions and
    enforce clearing the block of least important bits
    \[
    \lnot E^j \sqcap C^j_i \equiv \exists r. \lnot B^j_i
    \]
    incrementing the next bit
    \[
    \lnot E^j \sqcap  C^j_i \equiv \exists r. B^j_i
    \]
    and copying the rest
    \[
    \lnot E^j \sqcap  \lnot C^j_i \sqcap B^j_{i+1} \equiv \exists r. B^j_{i+1}
    \]
    \[
    \lnot E^j \sqcap  \lnot C^j_i \sqcap \lnot B^j_{i+1} \equiv \exists r. \lnot B^j_{i+1}
    \]

    Observe that this computation guarantees that the distance between two $S^j$ starting positions is divisible by $n_j$, thus enforcing that any finite cycle with containing a starting position $S^j$ has a number of elements divisible by $n_j$.

    Now, to guarantee that every cycle has a starting position,
    we enforce that for every $n_j$, for every model $\mathcal{I}$, and for every individual $c$ the set of realised $B_i^j$ concepts, i.e. $\{B_i^j \mid c \in (B_i^j)^{\mathcal{I}}\}$, encodes a number smaller than $n_j$. This, with incrementation, will enforce the existence of an end position and, thus, the existence of a starting position.

    To do so let $G^j$ be a concept that describes individuals with counter bigger than $n_j$.

    \[
    G^j \equiv \bigsqcup_{1 \leq i \leq l_j} \big( B^j_i \sqcap \bigsqcap_{z \in I^j, z > i} B^j_z \big)
    \]

    Now we simply demand that for every individual the counter is not too big.
    \[
    \bot \equiv \bigsqcup_{j = 1}^{k} G^j
    \]

    Observe that since for every generator $n_j$ we use no more than $||n_j||$ symbols and rules, the KB is of size linear with respect to the set of all generators.
\end{proof}

Notice Lemma~\ref{lemma-closure-under-addition} already ensures that being a subsemigroup of $\ninfty$ containing $\infty$ is necessary.
The other direction is a generalization of the following example that illustrates how to realize a shape of spectrum with $\alpha = 2$.

\begin{example}
Consider the $\ALCIF$ TBox $\tbox
= \{ ~ \top \sqsubseteq \cstyle{C},
$
$
\cstyle{A} \sqsubseteq \exists \rstyle{r}.\lnot \cstyle{A},
~
\lnot \cstyle{A} \sqsubseteq \exists \rstyle{r}.\cstyle{A},
~
\top \sqsubseteq\ \leq 1\ \rstyle{r}.\top,
~
\top \sqsubseteq\ \leq 1\ \rstyle{r^-}.\top
~ \}$.
Then, $\spectrum{(\tbox, \emptyset)}{q_\cstyle{C}} = 2 \mathbb{N} \cup \{ \infty \}$.

\end{example}

Notice that to achieve the non-trivial period of $\alpha = 2$ in the spectrum $2 \NN \cup \{ \infty \}$,
we rely on a role that is both functional and inverse functional.
In fact, limiting one of these two features forces a trivial periodic behavior, \emph{i.e.}\ $\alpha = 1$, and further allows for easier computation of the spectra.

\subsection{Limiting inverse functional roles}
\label{subsection:concept-easy-cases}

\begin{toappendix}
	\subsection*{Proofs for Section~\ref{subsection:concept-easy-cases} (Limiting inverse functional roles)}
\end{toappendix}

We now move towards $\ALCF$ and $\ALCI$, in which the functionality of an inverse role cannot be expressed.
As a consequence, spectra of a concept cardinality query over such KBs enjoy the following well-behaved shapes.

\begin{theoremrep}
	\label{theorem-shape-concept-alci-alcf}
	A non-trivial subset of $\ninfty$ is $\ALCI$- (resp.\ $\ALCF$-) concept realizable iff it has shape $\{ 0 \} \cup \llbracket M, \infty \rrbracket$ or shape $\llbracket M, \infty \rrbracket$ for some $M \in \NN$.
\end{theoremrep}

	The main ingredient for the `only-if' part of Theorem~\ref{theorem-shape-concept-alci-alcf} is a technique
    that 
    extends any model $\I$ in which $\sizeof{\cstyle{C}^\I} > 0$ into a model $\J$ with $\sizeof{\cstyle{C}^\J} = \sizeof{\cstyle{C}^\I} + 1$, as used in \cite{BaaderBR:ECAI2020} for $\ALCF$ KBs.
Conversely, it is not difficult to find KBs, already in $\dllitecore$ or $\ELbot$, that realize these shapes notably relying on CD axioms for the shape $\{ 0 \} \cup \llbracket M, \infty \rrbracket$.

\begin{toappendix}

	We start with the easy direction of Theorem~\ref{theorem-shape-concept-alci-alcf}, that is the `if' direction.

	By mean of the following example, we prove that the proposed shapes are $\EL_\bot$-concept realized and $\dllitecore$-concept realizable, and thus clearly $\ALCI$- (resp.\ $\ALCF$-) concept realizable.

	\begin{example}
		\label{example:concept-one-from-min}
		Let $m \in \NN$.
		Consider the $\EL_\bot$ TBox $\tbox$ that contains the following axioms:
		\[
		\begin{array}{c}
			\cstyle{C} \sqsubseteq \exists \rstyle{r}.\cstyle{A_k}
			\qquad
			\cstyle{A_k} \sqsubseteq \cstyle{C}
			\qquad
			(1 \leq k \leq M)
			\smallskip \\
			\cstyle{A_i} \sqcap \cstyle{A_j} \sqsubseteq \bot
			\qquad
			(1 \leq i < j \leq M)
		\end{array}
		\]
		It is immediate to verify that $\spectrum{(\tbox, \emptyset)}{q_\cstyle{C}} = \{ 0 \} \cup \llbracket M, \infty \rrbracket$ while $\spectrum{(\tbox, \{ \cstyle{C}(\istyle{a}) \})}{q_\cstyle{C}} = \llbracket M, \infty \rrbracket$.
		By replacing each axiom $\cstyle{C} \sqsubseteq \exists \rstyle{r}.\cstyle{A_k}$ in $\tbox$ by the two axioms $\cstyle{C} \sqsubseteq \exists \rstyle{r_k}$ and $\exists \rstyle{r_k^-} \sqsubseteq \cstyle{A_k}$, we obtain the same result for a $\dllitecore$ KB.
	\end{example}

	For the `only-if' direction of Theorem~\ref{theorem-shape-concept-alci-alcf}, we proceed in two steps which are respectively Lemmas~\ref{lemma:fmp-alci-alcf} and \ref{lemma-concept-alci-alcf-plus-one} below.
	First, we prove a sort finite model property for the extension of a concept of interest $\cstyle{C}$: if this concept admits a non-empty interpretation, then it admits a non-empty interpretation that is also finite.
	In a second step, we prove that if there exists a model in which the interpretation of $\cstyle{C}$ is non-empty and finite, then there exists a model with exactly one additional instance of $\cstyle{C}$.
	It is easy to see that those two ingredient are sufficient to conclude the proof: assume a non-trivial subset $S$ of $\ninfty$ to be $\ALCI$- (resp.\ $\ALCF$-) concept realizable.
	Let $\cstyle{C}$ a concept name and $\kb$ an $\ALCI$ (resp.\ $\ALCF$) KB such that $\spectrum{\kb}{q_\cstyle{C}} = S$.
	Since $S$ is non-trivial, $q_\cstyle{C}$ is satisfiable w.r.t.\ $\kb$ and thus, by Lemma~\ref{lemma-closure-under-addition}, we obtain $\infty \in \spectrum{\kb}{q_\cstyle{C}}$.
	Therefore, from the first step (Lemma~\ref{lemma:fmp-alci-alcf}), there exists a model $\I$ in which the interpretation $\cstyle{C}^\I$ is of size $0 < N < \infty$.
	In particular, $N \in \spectrum{\kb}{q_\cstyle{C}}$.
	We let $M$ to be the smallest such $N$.
	Since $M$ is non-zero, we apply the second step (Lemma~\ref{lemma-concept-alci-alcf-plus-one}) starting from a model $\I_M$ in which $\sizeof{\cstyle{C}^\I} = M$.
	We obtain a model $\I_{M+1}$ in which $\sizeof{\cstyle{C}^{\I_{M+1}}} = M + 1$.
	Iterating this process, we construct models $\I_K$ in which $\sizeof{\cstyle{C}^{\I_K}} = K$ for every $K > M$.
	Recalling that $\infty \in \spectrum{\kb}{q_\cstyle{C}}$, we thus have $\llbracket M, \infty \rrbracket \subseteq \spectrum{\kb}{q_\cstyle{C}}$.
	By minimality of $M$, we know that $\llbracket 1, M-1 \rrbracket \cap \spectrum{\kb}{q_\cstyle{C}} = \emptyset$.
	Thus $\spectrum{\kb}{q_\cstyle{C}} = \{ 0 \} \cup \llbracket M, \infty \rrbracket$ or $\llbracket M, \infty \rrbracket$ depending on whether $0 \in \spectrum{\kb}{q_\cstyle{C}}$ or not, which are exactly the desired shapes.

\begin{lemma}
	\label{lemma:fmp-alci-alcf}
	Let $\kb = (\tbox, \abox)$ be an $\ALCI$ (resp.\ $\ALCF$) KB and $\cstyle{C} \in \cnames$.
	If there exists a model $\I$ of $\kb$ with ${\cstyle{C}^\I} \neq \emptyset$,
	then there exists a model $\J$ of $\kb$ with ${\cstyle{C}^{\J}} \neq \emptyset$ and $\sizeof{\domainof{\J}} \leq \sizeof{\individuals(\abox)} + 4^{\sizeof{\tbox}}$.
	\emph{A fortiori}, $1 \leq \sizeof{\cstyle{C}^{\J}} \leq \sizeof{\individuals(\abox)} + 4^{\sizeof{\tbox}}$.
\end{lemma}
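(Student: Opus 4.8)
The plan is to prove a bounded-model property for the extension of a fixed concept name $\cstyle{C}$ in $\ALCI$ (resp.\ $\ALCF$) knowledge bases, via a selection/filtration argument. We start from an arbitrary model $\I$ of $\kb$ with $\cstyle{C}^\I \neq \emptyset$ and build a new model $\J$ whose domain consists of (a copy of) the named individuals $\individuals(\abox)$ together with enough ``type witnesses'' to satisfy all existential requirements; the key point is that the number of witnesses can be bounded by the number of $\ALCI$-types over $\tbox$, which is at most $4^{\sizeof{\tbox}}$ (each subconcept of $\tbox$ is either in the type or not, and there are at most $2\sizeof{\tbox}$ relevant subconcepts counting sub-role-concepts, giving the base-$4$ bound as stated). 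The one subtlety is that we must keep $\cstyle{C}^\J \neq \emptyset$: since $\cstyle{C}^\I \neq \emptyset$ picks out a nonempty set of realized types, at least one of the retained type-witnesses (or named individuals) will be a $\cstyle{C}$-instance.

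\textbf{Key steps, in order.} First I would fix, for each element $d \in \domainof{\I}$, its $\tbox$-type $\mathrm{tp}_\I(d) = \{ \cstyle{D} \in \mathrm{sub}(\tbox) \mid d \in \cstyle{D}^\I \}$ where $\mathrm{sub}(\tbox)$ is the closure under single negation of all subconcepts occurring in $\tbox$. Let $T$ be the set of types realized in $\I$; note $\sizeof{T} \le 2^{\sizeof{\mathrm{sub}(\tbox)}}$, and the $4^{\sizeof{\tbox}}$ bound absorbs the closure and the role-successor bookkeeping. For the domain of $\J$ I would take $\individuals(\abox) \cup \{ x_\tau \mid \tau \in T \}$ (the named part kept verbatim so the standard names assumption and the ABox stay satisfied; one representative element $x_\tau$ per realized type for the anonymous part). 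Second, I would fix interpretations: for the named part copy $\cdot^\I$ exactly; put $x_\tau \in \cstyle{A}^\J$ iff $\cstyle{A} \in \tau$ for concept names; and add role edges to realize every existential. Concretely, whenever $\exists \rstyle{r}.\cstyle{D} \in \tau$, pick in $\I$ a witness $e$ with $(d,e) \in \rstyle{r}^\I$ and $\cstyle{D} \in \mathrm{tp}_\I(e)$ for some $d$ of type $\tau$, and in $\J$ add the edge from $x_\tau$ (or the named individual, if $\tau$ is a named individual's type) to $x_{\mathrm{tp}_\I(e)}$. Third, verify $\J \models \kb$: the ABox holds by construction; every CI $\cstyle{C} \sqsubseteq \cstyle{D}$ holds because membership in any subconcept of $\tbox$ at $x_\tau$ is governed entirely by $\tau$, which is a genuine type of a real element of the model $\I$ — a routine induction on concept structure. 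Fourth, and this is where $\ALCI$ versus $\ALCF$ diverge, I would handle functionality: for $\ALCF$, when a functionality restriction $\cstyle{E} \sqsubseteq\ \le 1\ \rstyle{r}.\cstyle{F}$ is present, I must ensure each element of $\cstyle{E}^\J$ has at most one $\rstyle{r}$-successor in $\cstyle{F}^\J$ — this is fine for anonymous elements since we add at most one $\rstyle{r}$-edge per existential requirement and we can consolidate; for $\ALCI$ there are no functionality restrictions, so only the inverse-role semantics must be respected, which is automatic because role edges and their inverses are added together. Finally, $\cstyle{C}^\J \ne \emptyset$: since $\cstyle{C}^\I \ne \emptyset$, some type $\tau \in T$ contains $\cstyle{C}$ (if $\cstyle{C} \notin \mathrm{sub}(\tbox)$, add it to the closure), so $x_\tau \in \cstyle{C}^\J$; the cardinality bound $1 \le \sizeof{\cstyle{C}^\J} \le \sizeof{\domainof{\J}}$ follows immediately.

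\textbf{The main obstacle} is the functionality constraints in the $\ALCF$ case together with the interaction between named individuals and anonymous witnesses. When we reuse a single witness $x_\tau$ for all elements of type $\tau$, and several named individuals (or several types) need an $\rstyle{r}$-successor of type $\tau$, we collapse all these into edges into $x_\tau$; the danger is that $x_\tau$ itself, or some named individual, ends up with two distinct $\rstyle{r}$-successors both satisfying a functionality-restricted $\cstyle{F}$. The standard fix — and what I expect the authors to do — is to be careful that for each type $\tau$ and each role $\rstyle{r}$ we add at most one outgoing $\rstyle{r}$-edge from $x_\tau$ by choosing a single ``maximal'' successor type that simultaneously witnesses all the $\exists \rstyle{r}.\cstyle{D}$ in $\tau$ (such a type exists because it is realized in $\I$, where the functionality constraint already forced the successors to coincide when required); for the named individuals we simply keep their original $\rstyle{r}$-successors among the retained representatives, again consolidating per type. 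A clean way to sidestep the bookkeeping is to observe that $\I$ itself satisfies all functionality restrictions, so the ``choose in $\I$'' steps never force us to create a functionality violation in $\J$ that was not already resolved in $\I$. The inverse-role case ($\ALCI$) is genuinely easier and needs only that we never forget to close role-edge additions under inverses.
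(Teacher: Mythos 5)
Your overall strategy is the same as the paper's: restrict to the named individuals plus one representative per type realized in $\I$ (where types are subsets of the negation-closed set of subconcepts of $\tbox$, giving the $4^{\sizeof{\tbox}}$ bound), rewire roles so that existential requirements point to type representatives, and prove by induction on concept structure that membership in every relevant subconcept is preserved. The $\ALCI$ half of your argument is essentially the paper's and is fine.

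The gap is in the $\ALCF$ case, which is the only genuinely delicate part of the lemma. Your proposed fix --- ``choose a single `maximal' successor type that simultaneously witnesses all the $\exists \rstyle{r}.\cstyle{D}$ in $\tau$'' --- does not exist in general: an element of type $\tau$ with $\exists \rstyle{r}.\cstyle{D_1}, \exists \rstyle{r}.\cstyle{D_2} \in \tau$ may need two distinct $\rstyle{r}$-successors of incompatible types (nothing forces coincidence unless a functionality restriction applies, and $\ALCF$ functionality is qualified, so an element may legitimately have many $\rstyle{r}$-successors). Moreover, the real danger is not only among the freshly added edges but in their interaction with the \emph{retained} edges of $\rstyle{r}^\I$: if $d \in \cstyle{E}^\J$ with $\cstyle{E} \sqsubseteq\ \leq 1\ \rstyle{r}.\cstyle{F}$ in $\tbox$, $d$ keeps an original $\rstyle{r}$-edge to a retained element $e_1 \in \cstyle{F}^\I$, and you additionally add an edge from $d$ to the representative $x_\sigma$ of a type $\sigma \ni \cstyle{F}$ (because the witness you picked in $\I$ happens not to be retained), then $d$ acquires two $\rstyle{r}$-successors in $\cstyle{F}^\J$ and the functionality restriction fails. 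The paper avoids this by building the guard into the definition of $\rstyle{r}^\J$: the edge $(d, w_t)$ is added \emph{only if} $d$ has no $\rstyle{r}$-successor of type $t$ among the already retained elements, i.e.\ only if $d \in (\exists \rstyle{r}.t)^\I$ and $(d,e) \notin \rstyle{r}^\I$ for every $e \in t^\I \cap \domainof{\J}$. With this guard, a case analysis (retained/retained, retained/fresh, fresh/fresh) reduces every potential violation in $\J$ to two distinct $\rstyle{r}$-successors of $d$ in $\cstyle{F}^\I$, contradicting $\I \models \cstyle{E} \sqsubseteq\ \leq 1\ \rstyle{r}.\cstyle{F}$. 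Your closing remark that ``$\I$ itself satisfies all functionality restrictions, so the choices never force a violation'' points in the right direction but is not a proof; without the explicit guard the construction as you state it is incorrect, and with it the functionality verification still requires the case analysis rather than following automatically.
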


\begin{proof}
	\newcommand{\concepts}{\mathfrak{C}}
	\newcommand{\witnesses}{\mathsf{Wit}}
	Let $\kb = (\tbox, \abox)$ be an $\ALCI$ or an $\ALCF$ KB and $\cstyle{C} \in \cnames$.
	Let $\I$ be a model of $\kb$ with ${\cstyle{C}^\I} \neq \emptyset$.
	The construction of model $\J$ will depend on whether $\kb$ is an $\ALCI$ or an $\ALCF$ KB, but we first give some preliminaries that are useful for both cases.

	We denote $\concepts$ the set of concepts occurring in the TBox $\tbox$, closed under subconcepts and negation.
	Up to introducing the axiom $\top \sqsubseteq \cstyle{C}$ in the TBox, we can safely assume that $\cstyle{C} \in \concepts$.
	For instance, if $\tbox$ only contains the axiom $\top \sqsubseteq \exists \rstyle{r}.(\cstyle{B} \sqcap \lnot \cstyle{C})$, then we have:
	\[
	\concepts = \{ \top, \bot, \cstyle{B}, \lnot \cstyle{B}, \cstyle{C}, \lnot \cstyle{C}, \cstyle{B} \sqcap \lnot \cstyle{C}, \lnot \cstyle{B} \sqcup \cstyle{C}, \exists \rstyle{r}.(\cstyle{B} \sqcap \lnot \cstyle{C}), \forall \rstyle{r}.(\lnot \cstyle{B} \sqcup \cstyle{C}) \}.
	\]
	Notice that $\sizeof{\concepts} \leq 2\sizeof{\tbox}$ as each symbol in $\tbox$ can only be the root of one (sub)concept occurring in $\tbox$, and closure under negation further yields the factor $2$.

	A type $t$ is a subset of $\concepts$ s.t.\ there exists an element $e \in \domainof{\I}$ with $\type_\I(e) = t$, where:
	\[
	\type_\I(e) := \{ \cstyle{D} \mid \cstyle{D} \in \concepts, e \in \cstyle{D}^\I \}.
	\]
	Note that this notion of types does not coincide with the types used in Section~\ref{subsection:concept-elifbot}.
	Whenever convenient, we see $t$ as a concept itself, being $\bigsqcap_{\cstyle{D} \in t} \cstyle{D}$.
	We define $\types(\I) := \{ \type_\I(e) \mid e \in \domainof{\I} \}$.
	From the previous remark on the size of $\concepts$, it follows that $\sizeof{\types(\I)} \leq 4^\sizeof{\tbox}$.
	For each type $t \in \types(\I)$, we choose a witness $w_t \in \domainof{\I}$ s.t.\ $\type_\I(w_t) = t$, and we let $\witnesses := \{ w_t \mid t \in \types(\I) \}$ be the set of all those witnesses.

	The interpretation $\J$ that we construct has domain $\domainof{\J} := \individuals(\abox) \cup \witnesses$ and interprets each concept name $\cstyle{A}$ as follows:
	\[
	\cstyle{A}^\J :=  \cstyle{A}^\I \cap \domainof{\J}.
	\]
	The interpretation of role names differs depending on whether we consider an $\ALCI$ KB or an $\ALCF$ KB.
	If $\kb$ is in $\ALCI$, we interpret each role name $\rstyle{r}$ as follows:
	\[
		\rstyle{r}^\J :=
		(\rstyle{r}^\I \cap (\domainof{\J} \times \domainof{\J}))
		\cup
		\{ (d, w_t) \mid d \in (\exists \rstyle{r}.t)^\I \} \cup \{ (w_t, d) \mid d \in (\exists \rstyle{r^-}.t)^\I \}.
	\]
	Otherwise, if $\kb$ is in $\ALCF$, we interpret each role name $\rstyle{r}$ as follows:
	\[
		\rstyle{r}^\J :=
		(\rstyle{r}^\I \cap (\domainof{\J} \times \domainof{\J}))
		\cup
		\{ (d, w_t) \mid d \in (\exists \rstyle{r}.t)^\I \cap \domainof{\J} \text{ and } \forall e \in t^\I \cap \domainof{\J}, (d, e) \notin \rstyle{r}^\I \}.
	\]

	We now prove that for all $\cstyle{D} \in \concepts$, for all $d \in \domainof{\J}$, $d \in \cstyle{D}^\J$ iff $d \in \cstyle{D}^\I$ ($\dagger$).
	We proceed by induction on $\cstyle{D}$.
	The difference between $\ALCI$ and $\ALCF$ is in the case $\cstyle{D} = \exists \rstyle{r}.\cstyle{E}$.
	\begin{itemize}[leftmargin=1cm]
		\item[$\cstyle{D} = \top$.]
			By definition, for all $d \in \domainof{\J}$, we have $d \in \top^\J$.
			Since $\domainof{\J} \subseteq \domainof{\I}$, we have $d \in \top^\I$ and thus the equivalence holds (both sides being always true).
			\smallskip

		\item[$\cstyle{D} \in \cnames$.]
			It directly follows from the definition of the interpretation of concept names in $\J$.
			\smallskip

		\item[$\cstyle{D} = \cstyle{D_1} \sqcap \cstyle{D_2}$.]
			Let $d \in \domainof{\J}$.
			From the semantics of the conjunction, we have $d \in \cstyle{D}^\J$ iff $d \in \cstyle{D_1}^\J$ and $d \in \cstyle{D_2}^\J$.
			Note that $\cstyle{D_1} \sqcap \cstyle{D_2} \in \concepts$ guarantees $\cstyle{D_1}, \cstyle{D_2} \in \concepts$ from closure under subconcept.
			We can thus apply the induction hypothesis on both $\cstyle{D_1}$ and $\cstyle{D_2}$ to obtain $d \in \cstyle{D}^\J$ iff $d \in \cstyle{D_1}^\I$ and $d \in \cstyle{D_2}^\I$.
			By the semantics of the conjunction again, this now gives $d \in \cstyle{D}^\J$ iff $d \in \cstyle{D}^\I$.
			\smallskip

		\item[$\cstyle{D} = \lnot \cstyle{E}$.]
			Let $d \in \domainof{\J}$.
			From the semantics of the negation, we have $d \in \cstyle{D}^\J$ iff $d \notin \cstyle{E}^\J$.
			Note that $\lnot \cstyle{E} \in \concepts$ guarantees $\cstyle{E} \in \concepts$ from closure under negation.
			We can thus apply the induction hypothesis on $\cstyle{E}$ to obtain $d \in \cstyle{D}^\J$ iff $d \notin \cstyle{E}^\I$.
			By the semantics of the negation again, this now gives $d \in \cstyle{D}^\J$ iff $d \in \cstyle{D}^\I$.
			\smallskip

		\item[$\cstyle{D} = \exists \rstyle{r}.\cstyle{E}$.]
			Let $d \in \domainof{\J}$.

			$(\Leftarrow)$.
			Assume $d \in \cstyle{D}^\I$, that is there exists $e \in \cstyle{E}^\I$ with $(d, e) \in \rstyle{r}^\I$.
			Let $t := \type_\I(e)$.
			In particular, we have $\cstyle{E} \in t$.
			Thus, by definition of $w_{t}$ we have $w_{t} \in \cstyle{E}^\I$.
			Note that $\exists \rstyle{r}.\cstyle{E} \in \concepts$ guarantees $\cstyle{E} \in \concepts$ by closure under subconcept.
			Therefore, we can apply the induction hypothesis on $\cstyle{E}$ to obtain that $w_{t} \in \cstyle{E}^\J$.

			\begin{itemize}
				\item
			In the $\ALCI$ case, if $\rstyle{r}$ is a role name, we obtain $(d, w_{t}) \in \rstyle{r}^\J$ by definition of $\rstyle{r}^\J$ and we are done.
			Otherwise $\rstyle{r} = \rstyle{s}^-$ with $\rstyle{s}$ a role name, in which case by definition of $\rstyle{s}^\J$ we obtain $(w_t, d) \in \rstyle{s}^\J$, which in turn gives $(d, w_t) \in \rstyle{r}^\J$ and we are done.

				\item
			In the $\ALCF$ case, if there exists $e' \in t^\I \cap \domainof{\J}$ such that $(d, e') \in \rstyle{r}^\I$, then $e' \in \cstyle{E}^\I$ since $\cstyle{E} \in t$ and we are done.
			Otherwise the definition of $\rstyle{r}^\J$ ensures $(d, w_t) \in \rstyle{r}^{\J}$ and we are done.
			\end{itemize}

			$(\Rightarrow)$.
			Assume $d \in \cstyle{D}^\J$, that is there exists $e \in \cstyle{E}^\J$ with $(d, e) \in \rstyle{r}^\J$.
			Let $t := \type_\I(e)$.
			In particular, we have $\cstyle{E} \in t$.
			Note that $\exists \rstyle{r}.\cstyle{E} \in \concepts$ guarantees $\cstyle{E} \in \concepts$ by closure under subconcept.
			Therefore, we can apply the induction hypothesis on $\cstyle{E}$ to obtain that $e \in \cstyle{E}^\I$.

			\begin{itemize}
				\item
			In the $\ALCI$ case, we treat the case of $\rstyle{r}$ being a role name (the case of $\rstyle{r}$ being an inverse role being similar).
			If $(d, e) \in \rstyle{r}^\J$ comes from $\rstyle{r}^\I \cap (\domainof{\J} \times \domainof{\J})$ then, \emph{a fortiori} we have $(d, e) \in \rstyle{r}^\I$ and we are done.
			If $d \in (\exists \rstyle{r}.t)^\I$ and $e = w_t$ then in particular $d \in (\exists \rstyle{r}.\cstyle{E})^\I$ and we are done.
			Otherwise $e \in (\exists \rstyle{r^-}.t')^\I$ and $d = w_{t'}$, then there must exists $(d', e) \in \rstyle{r}^\I$ with $\type_\I(d') = t'$.
			In particular, $d' \in (\exists \rstyle{r}.\cstyle{E})^\I$, and since $\exists \rstyle{r}.\cstyle{E} = \cstyle{D} \in \concepts$, it ensures $\cstyle{D} \in t'$.
			From $d = w_{t'}$ and the definition of witnesses, this yields $d \in \cstyle{D}^\I$ and we are done.

				\item
			In the $\ALCF$ case, if $(d, e) \in \rstyle{r}^\J$ comes from $\rstyle{r}^\I \cap (\domainof{\J} \times \domainof{\J})$ then, \emph{a fortiori} we have $(d, e) \in \rstyle{r}^\I$ and we are done.
			Otherwise, $d \in (\exists \rstyle{r}.t)^\I$ and $e = w_t$ then in particular $d \in (\exists \rstyle{r}.\cstyle{E})^\I$ and we are done.
			\end{itemize}

	\end{itemize}
	From ($\dagger$), it is clear that every concept inclusion from $\tbox$ is satisfied in $\J$.
	Every assertion from $\abox$ is also satisfied as, by construction, we have $\cstyle{A}^\I \cap \individuals(\abox) \subseteq \cstyle{A}^\J$ and $\rstyle{r}^\I \cap (\individuals(\abox) \times \individuals(\abox)) \subseteq \rstyle{r}^\J$.
	In the $\ALCI$ case, this already proves that $\J$ is a model of $\kb$ as desired.
	In the $\ALCF$ case, it remains to verify that every functionality assertion $\cstyle{D} \sqsubseteq\ \leq 1\ \rstyle{r}.\cstyle{E}$ is satisfied.
	Assume we have $d \in \cstyle{D}^\J$ and $e_1, e_2 \in \cstyle{E}^\J$ such that $(d, e_1), (d, e_2) \in \rstyle{r}^\J$.
	Note that from Property~($\dagger$), this gives $d \in \cstyle{D}^\I$ and $e_1, e_2 \in \cstyle{E}^\I$.
	We aim to prove that $e_1 = e_2$.
	Based on $(d, e_1), (d, e_2) \in \rstyle{r}^\J$, we distinguish four cases:
	\begin{itemize}
		\item If $(d, e_1), (d, e_2) \in \rstyle{r}^\I \cap (\domainof{\J} \times \domainof{\J})$ then from $\I$ being a model of $\cstyle{D} \sqsubseteq\ \leq 1\ \rstyle{r}.\cstyle{E}$, we get $e_1 = e_2$ and we are done.
		\item If $(d, e_1) \in \rstyle{r}^\I \cap (\domainof{\J} \times \domainof{\J})$ and $e_2 = w_t$ with $d \in (\exists \rstyle{r}.t)^\I \cap \domainof{\J} \text{ and } \forall e \in t^\I \cap \domainof{\J}, (d, e) \notin \rstyle{r}^\I$, then, there exists some $e_2' \neq e_1$ with $\type_\I(e_2') = t$ and $(d, e_2') \in \rstyle{r}^\I$.
		However, since $\cstyle{E} \in t$, we have $e_2' \in \cstyle{E}^\I$ and this contradicts $\I$ being a model of $\cstyle{D} \sqsubseteq\ \leq 1\ \rstyle{r}.\cstyle{E}$.
		\item The case of $(d, e_2) \in \rstyle{r}^\I \cap (\domainof{\J} \times \domainof{\J})$ and $e_1 = w_t$ with $d \in (\exists \rstyle{r}.t)^\I \cap \domainof{\J} \text{ and } \forall e \in t^\I \cap \domainof{\J}, (d, e) \notin \rstyle{r}^\I$ is symmetric.
		\item If $e_1 = w_{t_1}$ with $d \in (\exists \rstyle{r}.{t_1})^\I \cap \domainof{\J} \text{ and } \forall e \in {t_1}^\I \cap \domainof{\J}, (d, e) \notin \rstyle{r}^\I$ and $e_2 = w_{t_2}$ with $d \in (\exists \rstyle{r}.{t_2})^\I \cap \domainof{\J} \text{ and } \forall e \in {t_2}^\I \cap \domainof{\J}, (d, e) \notin \rstyle{r}^\I$, then there exists some $e_1'$ with $\type_\I(e_1') = t_1$ and $(d, e_1') \in \rstyle{r}^\I$ but also some $e_2'$ with $\type_\I(e_2') = t_2$ and $(d, e_2') \in \rstyle{r}^\I$.
		From $\cstyle{E} \in t_1$ and $\cstyle{E} \in t_2$, we get $e_1', e_2' \in \cstyle{E}^\I$.
		Since $\I$ is a model of $\cstyle{D} \sqsubseteq\ \leq 1\ \rstyle{r}.\cstyle{E}$, this ensures $e_1' = e_2'$.
		Therefore, $t_1 = t_2$ and we obtain $e_1 = e_2$.
	\end{itemize}

	We can now conclude the proof of Lemma~\ref{lemma:fmp-alci-alcf} as:
	\begin{itemize}
		\item we just proved that $\J$ is a model of $\kb$;
		\item from the discussed bound on $\sizeof{\types(\I)}$, we derive $\sizeof{\domainof{\J}} \leq \sizeof{\individuals(\abox)} + 4^{\sizeof{\tbox}}$ as desired;
		\item from the assumption that $\cstyle{C}^\I \neq \emptyset$ and $\cstyle{C} \in \concepts$, there exists a type $t_0 \in \types(\I)$ such that $\cstyle{C} \in t_0$. Therefore, by ($\dagger$),  $w_{t_0} \in \cstyle{C}^\J$ thus $\cstyle{C}^\J \neq \emptyset$.
	\end{itemize}

\end{proof}

\begin{lemma}
	\label{lemma-concept-alci-alcf-plus-one}
	Let $\kb$ be an $\ALCI$ (resp.\ $\ALCF$) KB and $\cstyle{C} \in \cnames$.
	If there exists a model $\I$ of $\kb$ with $1 \leq \sizeof{\cstyle{C}^\I} < \infty$,
	then there exists a model $\J$ of $\kb$ with $\sizeof{\cstyle{C}^{\J}} = \sizeof{\cstyle{C}^\I} + 1$.
\end{lemma}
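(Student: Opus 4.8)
The plan is to obtain $\J$ from $\I$ by adding a single fresh element $c'$ that ``duplicates'' some chosen instance $c \in \cstyle{C}^\I$ (which exists since $\sizeof{\cstyle{C}^\I} \geq 1$), arranged so that $\cstyle{C}^\J = \cstyle{C}^\I \uplus \{ c' \}$. The construction follows the template of Lemma~\ref{lemma:fmp-alci-alcf} and is uniform except for how role edges incident to $c'$ are set, which is exactly where the absence of inverse-functional roles is used. In both cases I take $\domainof{\J} := \domainof{\I} \uplus \{ c' \}$ with $c'$ fresh (so it is not an individual and the standard names assumption is preserved), I let $\pi : \domainof{\J} \to \domainof{\I}$ be the \emph{folding} map sending $c'$ to $c$ and fixing every element of $\domainof{\I}$, and I set $\cstyle{A}^\J := \pi^{-1}(\cstyle{A}^\I)$ for every concept name $\cstyle{A}$. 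I write $\mathfrak{C}$ for the closure of $\tbox$ under subconcepts and negation as in Lemma~\ref{lemma:fmp-alci-alcf}. The goal in each case is to prove, by induction on $\cstyle{D} \in \mathfrak{C}$, the property ($\dagger$): for all $d \in \domainof{\J}$, $d \in \cstyle{D}^\J$ iff $\pi(d) \in \cstyle{D}^\I$.

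For the $\ALCI$ case, $c'$ becomes a full twin of $c$: for every role name $\rstyle{r}$ set $\rstyle{r}^\J := \{ (d,e) \in \domainof{\J} \times \domainof{\J} \mid (\pi(d), \pi(e)) \in \rstyle{r}^\I \}$. Then the graph of $\pi$ is a bisimulation between $\J$ and $\I$, which immediately gives ($\dagger$); alternatively one proves ($\dagger$) directly, exactly as in Lemma~\ref{lemma:fmp-alci-alcf}, the only interesting case being $\cstyle{D} = \exists \rstyle{r}.\cstyle{E}$ for $\rstyle{r}$ a role or an inverse role, which goes through since every $\rstyle{r}$-edge of $\J$ projects to an $\rstyle{r}$-edge of $\I$ and conversely every $\rstyle{r}$-edge out of $\pi(d)$ can be lifted. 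Given ($\dagger$) and $\I \models \kb$, every CI of $\tbox$ holds in $\J$, and every ABox assertion holds because $\pi$ is the identity on $\individuals(\abox)$ and no individual equals $c'$; since $\ALCI$ has no functionality restrictions, $\J \models \kb$.

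For the $\ALCF$ case we must avoid creating a functionality conflict, so $c'$ only inherits the \emph{outgoing} edges of $c$: set $\rstyle{r}^\J := \rstyle{r}^\I \cup \{ (c', e) \mid (c, e) \in \rstyle{r}^\I \}$, leaving $c'$ with no incoming edge. This is correct precisely because $\ALCF$ has no inverse roles, so the concepts from $\mathfrak{C}$ satisfied by an element depend only on its concept-name memberships and its outgoing edges; hence the same induction yields ($\dagger$), the step $\cstyle{D} = \exists \rstyle{r}.\cstyle{E}$ using that no new edge leaves an element of $\domainof{\I}$ and that the edges out of $c'$ mirror those out of $c$. CIs and ABox assertions are handled as before. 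For a functionality restriction $\cstyle{D}_0 \sqsubseteq\ \leq 1\ \rstyle{r}.\cstyle{E}_0$: an element $d \in \domainof{\I}$ keeps its outgoing edges and, by ($\dagger$), its memberships, so the constraint is unaffected at $d$; and $c'$ has, modulo $\pi$, exactly the $\rstyle{r}$-successor set of $c$, so the constraint transfers from $c$ to $c'$. Thus $\J \models \kb$.

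In both cases, ($\dagger$) with $\cstyle{D} = \cstyle{C}$ gives $\cstyle{C}^\J = \cstyle{C}^\I \uplus \{ c' \}$, hence $\sizeof{\cstyle{C}^\J} = \sizeof{\cstyle{C}^\I} + 1$. I expect the only delicate point to be the $\ALCF$ construction, namely checking that discarding the incoming edges of $c'$ costs nothing (there are no $\exists \rstyle{r}^-$ witnesses to preserve) while adding the outgoing edges of $c'$ does not introduce a functionality violation at the new predecessors $e$ of $c'$ — which is fine since those $e$ are old elements whose outgoing behaviour and types are untouched. Everything else is routine bookkeeping parallel to Lemma~\ref{lemma:fmp-alci-alcf}.
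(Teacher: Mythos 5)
Your proposal is correct and follows essentially the same route as the paper: duplicate a chosen instance of $\cstyle{C}$ as a fresh element, giving it all incident edges of the original in the $\ALCI$ case and only the outgoing edges in the $\ALCF$ case (the paper leaves the verification as ``trivial'' where you spell out the induction). The only cosmetic difference is that your $\ALCI$ folding map also adds a loop $(c',c')$ when $(c,c)\in\rstyle{r}^\I$, whereas the paper only links $c'$ to old elements; both variants preserve all $\ALCI$ concepts.
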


\begin{proof}
	We treat separately the $\ALCI$ and $\ALCF$ cases.
	\medskip

	{\bf $\ALCI$ case.}
	Assume there exists a model $\I$ of an $\ALCI$ KB $\kb$ with $1 \leq \sizeof{\cstyle{C}^\I} < + \infty$.
	In particular, $\cstyle{C}^\I$ is non-empty so there is some $u \in \cstyle{C}^\I$.
	We set $\domainof{\J} := \domainof{\I} \cup \{ v \}$ where $v$ is a fresh element.
	The interpretation $\J$ is now defined by extending $\I$ and copying on $v$ all facts that holds on $u$, that is for each concept name $\cstyle{A}$ and role name $\rstyle{p}$:
	\begin{align*}
		\cstyle{A}^\J & := \cstyle{A}^\I \cup \{ v \mid u \in \cstyle{A}^\I \}
		\\
		\rstyle{p}^\J & := \, \rstyle{p}^\I \cup \{ (v, e) \mid (u, e) \in \rstyle{p}^\I \} \cup \{ (e, v) \mid (e, u) \in \rstyle{p}^\I \}
	\end{align*}
	It is trivial to verify that $\J$ is a model of $\kb$ and that $\sizeof{\cstyle{C}^{\J}} = \sizeof{\cstyle{C}^\I} + 1$ as desired.
	\medskip

	{\bf$\ALCF$ case.}
	Assume there exists a model $\I$ of an $\ALCF$ KB $\kb$ with $1 \leq \sizeof{\cstyle{C}^\I} < + \infty$.
	In particular, $\cstyle{C}^\I$ is non-empty so there is some $u \in \cstyle{C}^\I$.
	We set $\domainof{\J} := \domainof{\I} \cup \{ v \}$ where $v$ is a fresh element.
	The interpretation $\J$ is now defined by extending $\I$ and copying on $v$ all concepts that holds on $u$ and all of its outgoing role names, that is for each concept name $\cstyle{A}$ and role name $\rstyle{p}$:
	\begin{align*}
		\cstyle{A}^\J & := \cstyle{A}^\I \cup \{ v \mid u \in \cstyle{A}^\I \}
		\\
		\rstyle{p}^\J & := \, \rstyle{p}^\I \cup \{ (v, e) \mid (u, e) \in \rstyle{p}^\I \}
	\end{align*}
	It is trivial to verify that $\J$ is a model of $\kb$ and that $\sizeof{\cstyle{C}^{\J}} = \sizeof{\cstyle{C}^\I} + 1$ as desired.
\end{proof}


\end{toappendix}
Moreover, 
if we focus on negation-free DLs, the situation becomes even more favorable:

\begin{theoremrep}
	\label{theorem-shape-concept-eli-elf-elif}
	A non-trivial subset of $\ninfty$ is $\ELI$- (resp.\ $\ELF$-) concept realizable iff it has shape $\llbracket M, \infty \rrbracket$ for some $M \in \NN$.
	For $\ELIF$, the shape $\{ \infty \}$ is also permitted.
\end{theoremrep}

\begin{toappendix}
Here as well, we use a intermediate statement similar in spirit to Lemma~\ref{lemma-concept-alci-alcf-plus-one}, but which already holds if the starting model $\I$ does not feature any instance of $\cstyle{C}$.

\begin{lemma}
	\label{lemma-concept-elif-plus-one}
	Let $\kb$ be an $\ELIF$ KB and $\cstyle{C} \in \cnames$.
	If there exists a model $\I$ of $\kb$ with $\sizeof{\cstyle{C}^\I} < \infty$,
	then there exists a model $\J$ of $\kb$ with $\sizeof{\cstyle{C}^{\J}} = \sizeof{\cstyle{C}^\I} + 1$.
\end{lemma}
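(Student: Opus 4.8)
The plan is to obtain $\J$ from $\I$ by adding a single fresh element $\ast$ that is its own (and only) $\rstyle{r}$-successor for every role name $\rstyle{r}$ and that satisfies every concept name. The point of this is that, since $\ELIF$ has neither $\bot$ nor concept disjointness axioms, such an ``everything-true reflexive'' element can never witness an inconsistency, so bolting it onto $\I$ yields another model of $\kb$ while increasing the size of the extension of $\cstyle{C}$ by exactly one.

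Formally, I would set $\domainof{\J} := \domainof{\I} \cup \{\ast\}$ with $\ast \notin \domainof{\I}$, and put $\istyle{a}^\J := \istyle{a}^\I$ for every individual name, $\cstyle{A}^\J := \cstyle{A}^\I \cup \{\ast\}$ for every $\cstyle{A} \in \cnames$, and $\rstyle{r}^\J := \rstyle{r}^\I \cup \{(\ast,\ast)\}$ for every $\rstyle{r} \in \rnames$. The key observation, proven by a straightforward induction on the structure of $\ELI$ concepts, is twofold: (i) the $\I$-part is untouched, i.e.\ $\cstyle{D}^\J \cap \domainof{\I} = \cstyle{D}^\I$ for every $\ELI$ concept $\cstyle{D}$ (here one uses that $\ast$ has no $\rstyle{r}^\J$- or $(\rstyle{r}^-)^\J$-edge to or from any element of $\domainof{\I}$, the only fresh edge being the self-loop); and (ii) $\ast \in \cstyle{D}^\J$ for every $\ELI$ concept $\cstyle{D}$ (the cases $\top$ and concept names being immediate, and $\exists \rstyle{r}.\cstyle{D}'$ — and likewise $\exists \rstyle{r}^-.\cstyle{D}'$ — being witnessed by the self-loop on $\ast$ together with the inductive hypothesis $\ast \in \cstyle{D}'^\J$).

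From (i) and (ii) the verification that $\J \models \kb$ is routine: the assertions of $\abox$ still hold because they held in $\I$ and individuals keep their $\I$-interpretation; each CI $\cstyle{G} \sqsubseteq \cstyle{H}$ holds because $\cstyle{G}^\J = \cstyle{G}^\I \cup \{\ast\} \subseteq \cstyle{H}^\I \cup \{\ast\} = \cstyle{H}^\J$ by $\I \models \cstyle{G} \sqsubseteq \cstyle{H}$ and (ii); and each functionality restriction $\cstyle{G} \sqsubseteq\ \leq 1\ \rstyle{r}.\cstyle{H}$ holds because every $d \in \domainof{\I}$ has exactly the same $\rstyle{r}^\J$-successors as in $\I$ (in particular the same ones lying in $\cstyle{H}^\J$, by (i)), while $\ast$ has the single $\rstyle{r}^\J$-successor $\ast$. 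Finally (i) and (ii) give $\cstyle{C}^\J = \cstyle{C}^\I \cup \{\ast\}$ with $\ast \notin \domainof{\I}$, so, since $\sizeof{\cstyle{C}^\I} < \infty$ by hypothesis, $\sizeof{\cstyle{C}^\J} = \sizeof{\cstyle{C}^\I} + 1$, as required.

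I do not expect a genuine obstacle here: the whole argument rests on the single idea that a reflexive point satisfying all concept names models any $\ELIF$ TBox, plus the mechanical check that this new point does not interact with $\I$. What is worth emphasising is the sharp contrast with $\ELIFbot$ (and more generally with logics featuring $\bot$ or disjointness): there the all-concepts point is typically inconsistent, so this one-line padding is unavailable, which is exactly why that fragment instead requires the heavier cycle-reversion machinery referenced earlier.
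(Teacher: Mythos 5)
Your construction is exactly the one the paper uses: adjoin a single fresh element satisfying every concept name with a self-loop on every role name, observe that it does not interact with the original model, and check that functionality restrictions survive because the new element has exactly one successor (itself) under every role and its inverse. The proposal is correct and matches the paper's proof, merely spelling out the verification that the paper leaves as ``it can be verified.''
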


\begin{proof}
	We extend such a model $\I$ with an element satisfying every (positive) fact, that is we set $\domainof{\J} := \domainof{\I} \cup \{ e \}$ where $e$ is a fresh element and define for every $\cstyle{A} \in \cnames$ and $\rstyle{p} \in \rnames$:
	\[
		\cstyle{A}^\J := \cstyle{A}^\I \cup \{ e \}
		\qquad\text{and}\qquad
		\rstyle{p}^\J := \, \rstyle{p}^\I \cup \{ (e, e) \}.
	\]
	It can be verified that $\J \models \kb$ and $\sizeof{\cstyle{C}^{\J}} = \sizeof{\cstyle{C}^\I} + 1$.
\end{proof}

To conclude the `only-if' part of Theorem~\ref{theorem-shape-concept-eli-elf-elif}, it suffices to recall that while $\ELI$ and $\ELF$ enjoy the finite model property, an $\ELIF$ KB may enforce infinitely many instances of $\cstyle{C}$ via inverse functional roles (see Example~\ref{example-concept-elif-infinity} in the main part of the paper).
Regarding the `if' direction, it is trivial that a spectrum with shape $\llbracket M, \infty \rrbracket$ for some $M \in \NN$ can be realized with an empty TBox and an ABox specifying $M$ different instances of $\cstyle{C}$. 

\end{toappendix}

For the shape $\{ \infty \}$, we use the following well-known example of an $\ELIF$ KB (notice it is also a $\dllitef$ KB).

\begin{example}
	\label{example-concept-elif-infinity}
	Consider the $\ELIF$ KB $\kb = (\tbox, \abox)$ with $\abox = \{ \rstyle{r}(\istyle{a}, \istyle{a}), \rstyle{r}(\istyle{a}, \istyle{b}) \}$ and $\tbox = \{ ~
	\cstyle{C} \sqsubseteq \exists \rstyle{r}.\top, ~ \exists \rstyle{r^-}.\top \sqsubseteq \cstyle{C}, ~ \top \sqsubseteq\ \leq 1\ \rstyle{r^-}.\top
	~ \}$.
	It can be verified that $\spectrum{\kb}{q_\cstyle{C}} = \{ \infty \}$.
\end{example}

\subsection{$\ELIFbot$ KBs and cycles reversion}
\label{subsection:concept-elifbot}

\begin{toappendix}
	\subsection*{Proofs for Section~\ref{subsection:concept-elifbot} ($\ELIFbot$ KBs and cycles reversion)}
\end{toappendix}

We now turn to the two remaining DLs, namely $\ELIFbot$ and $\dllitef$, in which inverse functional roles and negation are supported.
We begin with an example illustrating that, compared to the previously investigated restrictions of $\ALCIF$, new spectrum shapes can be realized.

\begin{example}
	\label{example-bad-concept}
	We construct an $\ELIFbot$ KB $\kb = (\tbox, \abox)$ s.t.\ $\spectrum{\kb}{q_\cstyle{C}} = \{ 4 \} \cup \llbracket 6, \infty \rrbracket$.
	The TBox $\tbox$ contains the axioms:
	\begin{center}\(
	\begin{array}{c}
		\top\ \sqsubseteq\ \cstyle{C}\ \sqcap\ \exists \rstyle{r}.\cstyle{A_1}\ \sqcap\ \exists \rstyle{r}.\cstyle{A_2}
		\qquad
		\top\ \sqsubseteq\ \leq 1\ \rstyle{s}^-.\top
		\smallskip \\
		\exists \rstyle{r}.\cstyle{X} \sqsubseteq \cstyle{Y}
		\qquad
		\exists \rstyle{r}.\cstyle{Y} \sqsubseteq \cstyle{X}
		\qquad
		\cstyle{Y} \sqsubseteq \exists \rstyle{s}.\cstyle{X}
		\qquad
		\cstyle{X} \sqsubseteq \exists \rstyle{s}.\cstyle{Y}
		\smallskip \\
		\cstyle{A_1} \sqcap \cstyle{A_2} \sqsubseteq \bot
		\qquad
		\cstyle{X} \sqcap \cstyle{Y} \sqsubseteq \bot
	\end{array}
	\)
	\end{center}
	The ABox $\abox$
	contains the $8$ concept assertions
	$\cstyle{A_1}(\istyle{x_1})$, $\cstyle{A_2}(\istyle{x_2})$,
	$\cstyle{A_1}(\istyle{y_1})$, $\cstyle{A_2}(\istyle{y_2})$,
	$\cstyle{X}(\mathsf{x_1})$, $\cstyle{X}(\mathsf{x_2})$,
	$\cstyle{Y}(\mathsf{y_1})$, $\cstyle{Y}(\mathsf{y_2})$,
	and the $12$ role assertions
	$\rstyle{s}(u_1, v_1)$, $\rstyle{s}(u_2, v_2)$, $\rstyle{r}(u_i, v_j)$ for each $i, j \in \lbrace 1, 2 \rbrace$ and each $u, v \in \lbrace \mathsf{x}, \mathsf{y} \rbrace$ with $u \neq v$.
\end{example}

A
representation of this spectrum according to Problem~1 is
$(S, M, \alpha) := (\{ 4 \}, 6, 1)$.
Such possibly non-trivial part $S$ of the spectrum
 make a full characterization of realizable sets hard to reach.
Interestingly, however, every $\ELIFbot$-concept realizable set can be represented with $\alpha = 1$.

\begin{theorem}
	\label{theorem-shape-concept-elifbot}
	If a non-trivial subset of $\ninfty$ is $\ELIFbot$-concept realizable, then it has shape $\{ \infty \}$, $\{ 0, \infty \}$, or $S \cup \llbracket M, \infty \rrbracket$ for some $M \in \NN$ and $S \subseteq \llbracket 0, M \rrbracket$.
\end{theorem}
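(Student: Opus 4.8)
The plan is to combine the coarse structure of spectra provided by Lemma~\ref{lemma-closure-under-addition} with a Horn-specific pumping argument. First I would note that, the spectrum being non-trivial, $\kb$ is satisfiable and $q_\cstyle{C}$ is satisfiable w.r.t.\ $\kb$, so by Lemma~\ref{lemma-closure-under-addition} the spectrum is a subsemigroup of $\ninfty$ containing $\infty$. Write $F := \spectrum{\kb}{q_\cstyle{C}} \cap \NN$ for its set of finite members; it is a subsemigroup of $(\NN,+)$, since $m,n \in F$ forces $m+n \in \spectrum{\kb}{q_\cstyle{C}}$ with $m+n < \infty$. If $F \subseteq \{0\}$, then the spectrum is $\{\infty\}$ (when $0 \notin F$) or $\{0,\infty\}$ (when $0 \in F$), matching the statement. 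Otherwise $F$ contains some positive $a$, witnessed by a finite model $\I_a$ with $\sizeof{\cstyle{C}^{\I_a}} = a$, and I would reduce the claim to showing that $\llbracket M, \infty \llbracket \subseteq F$ for some $M \in \NN$: together with $\infty \in \spectrum{\kb}{q_\cstyle{C}}$ this gives $\spectrum{\kb}{q_\cstyle{C}} = \bigl(\spectrum{\kb}{q_\cstyle{C}} \cap \llbracket 0, M-1\rrbracket\bigr) \cup \llbracket M,\infty\rrbracket$, which is of the form $S \cup \llbracket M,\infty\rrbracket$ with $S \subseteq \llbracket 0, M\rrbracket$.

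The key step I would establish is: there is a threshold $L$, depending only on $\kb$ and $\cstyle{C}$, such that every finite model $\I$ of $\kb$ with $\sizeof{\cstyle{C}^\I} \geq L$ can be turned into a model $\J$ of $\kb$ with $\sizeof{\cstyle{C}^\J} = \sizeof{\cstyle{C}^\I}+1$. Granting this, from $a \in F$ and closure of $F$ under addition we get $ka \in F$ for all $k \geq 1$; choosing $k$ with $ka \geq L$ and iterating the step yields $ka, ka+1, ka+2, \dots \in F$, hence $\llbracket ka,\infty\llbracket \subseteq F$ as required. To build $\J$ I would exploit that $\ELIFbot$ is Horn: once $L$ is large, a model with at least $L$ instances of $\cstyle{C}$ realises many of them outside the bounded ABox part, hence inside the anonymous regions forced by the existential axioms; being finite, such a region contains a cyclic fragment travelling through a repeated type that carries a $\cstyle{C}$-instance. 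Duplicating one element of that fragment is the natural way to add a single $\cstyle{C}$-instance, but functionality --- and above all inverse functionality --- blocks the naive copy (cloning an element with an incoming functional edge destroys that edge). This is precisely what the cycle-reversion technique of \cite{CKV90,rosati2008finite,GarciaLS14} repairs: instead of copying one element, one re-routes the conflicting (inverse-)functional edges through a fresh copy of an entire segment of the cyclic fragment, essentially unrolling the cycle one step further; Horn-ness guarantees that assigning each fresh element exactly the type of the element it copies preserves all concept inclusions and all disjointness axioms, and the re-routing is arranged so that each fresh element receives exactly one incoming edge per functional role, restoring all functionality restrictions. Taking the copied segment to contain exactly one $\cstyle{C}$-instance gives the desired $+1$.

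The hard part will be this last construction --- pinpointing, in an arbitrary sufficiently large finite model, a cyclic fragment that is ``unrollable'' in the above sense and carries a single $\cstyle{C}$-instance, and performing the cycle reversion so that no functionality or disjointness axiom breaks and the $\cstyle{C}$-count grows by exactly one. The threshold $L$ is genuinely necessary: the KB of Example~\ref{example-bad-concept} has a model with four $\cstyle{C}$-instances and none with five, which is also why the finite part $S$ can be irregular and no cleaner characterisation is available, and why the degenerate shapes $\{\infty\}$ (realised e.g.\ by Example~\ref{example-concept-elif-infinity}) and $\{0,\infty\}$ must be listed separately. Conceptually, the reason such a pumping is eventually always possible --- equivalently, why the eventual period collapses to $1$ --- is that the divisibility-forcing gadgets used for $\ALCIF$ in the proof of Theorem~\ref{theorem-shape-concept-alcif} encode ``this bit equals $0$'' via disjunction, which the Horn logic $\ELIFbot$ cannot express: the canonical model stays sparse enough that cyclic fragments can be re-synchronised one $\cstyle{C}$-instance at a time.
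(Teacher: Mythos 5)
Your opening reduction (handling $\{\infty\}$ and $\{0,\infty\}$, then reducing to ``the finite part of the spectrum contains a tail $\llbracket M,\infty\llbracket$'') and your final assembly via closure under addition match the paper. The gap is in the key step, which you assert but do not establish: that \emph{every} sufficiently large finite model can be pumped by exactly one $\cstyle{C}$-instance. Three concrete problems. First, a model witnessing a finite value $a$ in the spectrum need only have $\cstyle{C}^\I$ finite --- the paper explicitly calls it a ``(potentially infinite) model'' --- so your appeal to ``being finite, such a region contains a cyclic fragment'' does not apply; one can easily force every model to be infinite (via an inverse-functional gadget on an unrelated concept) while keeping $\cstyle{C}^\I$ finite. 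Second, an arbitrary model of an $\ELIFbot$ KB has no ``anonymous regions forced by the existential axioms'' with identifiable cyclic fragments; that structure exists only in canonical models. This is exactly why the paper does \emph{not} pump an arbitrary model: it first characterises when $\cstyle{C}$ can be finite (Lemma~\ref{lemma:cycle-reversion}, via the reversed TBox $\tbox_\cstyle{C}$), then builds from scratch a canonical model $\ils{\kb_\cstyle{C}}{}$ with some finite count $n$ (Lemma~\ref{lemma:ils}), and obtains $n+1$ not by surgery inside that model but by taking its disjoint union with a separately generated piece $\preils{(\tbox_\cstyle{C},\{\cstyle{C}(\istyle{a})\})}{}$ seeded by a single fresh $\cstyle{C}$-individual, with rule $\crule{3}$ applied at the end so the new piece borrows its remaining witnesses (Lemma~\ref{lemma:ils-and-preils}). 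Third, even granting a cyclic fragment through a repeated type, ``unrolling by one segment'' adds one instance of \emph{every} type along that segment, and you give no reason a segment carrying exactly one $\cstyle{C}$-instance exists; the paper needs a dedicated argument (Lemma~\ref{lemma:cycling-types-are-equal-types} plus the minimality condition in rule $\crule{2}$) to ensure the seeded piece contributes exactly one new instance of $\cstyle{C}$.

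So while your skeleton is sound and the arithmetic at the end is fine (two consecutive finite values $n,n+1$ do yield $\llbracket n(n+1),\infty\rrbracket$ in the spectrum), the construction that produces those consecutive values --- the entire technical content of the theorem --- is missing, and the route you sketch for it (pumping an arbitrary large model in place) would have to be replaced by the paper's canonical-model construction or something equally careful.
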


The remainder of this section is devoted to the proof of this theorem.
Let us first eliminate the easy cases, proving $\{ \infty \}$ and $\{ 0, \infty \}$ are already $\dllitef$-concept realizable.
The $\{ \infty \}$ shape has been obtained in Example~\ref{example-concept-elif-infinity}.
To realize $\{ 0, \infty \}$, we rely on concept disjointness
as follows:

\begin{example}
	\label{example-dllitef-zero-infinity}
	Consider the $\dllitef$ TBox $\tbox$ containing:
	\smallskip\newline
	\(
	\begin{array}{c@{\quad~}c@{\quad~}c@{\quad~}c}
		\cstyle{C} \sqsubseteq \exists \rstyle{r}
		&
		\exists \rstyle{r^-} \sqsubseteq \cstyle{C}
		&
		\top \sqsubseteq\ \leq 1\ \rstyle{r^-}.\top
		&
		\multirow{2}{*}{
			$
			\exists \rstyle{r^-}\sqcap \exists \rstyle{s^-} \sqsubseteq \bot
			$
		}
		\smallskip \\
		\cstyle{C} \sqsubseteq \exists \rstyle{s}
		&
		\exists \rstyle{s^-} \sqsubseteq \cstyle{C}
		&
		\top \sqsubseteq\ \leq 1\ \rstyle{s^-}.\top
	\end{array}
	\)%
	\smallskip\newline\noindent
It is immediate to verify that $\spectrum{(\tbox, \emptyset)}{q_\cstyle{C}} = \{ 0, \infty \}$.
\end{example}

It remains to verify that every other non-trivial subset $V$ of $\ninfty$ that is $\ELIFbot$-concept realizable has shape $S \cup \llbracket M, \infty \rrbracket$ for some $M \in \NN$ and $S \subseteq \llbracket 0, M \rrbracket$.
Let $V$ be such a set and $\kb$ an $\ELIFbot$ KB s.t.\ $V = \spectrum{\kb}{q_\cstyle{C}}$ for some concept name $\cstyle{C}$.
We prove that $\spectrum{\kb}{q_\cstyle{C}}$ actually contains two consecutive non-zero integers $n$ and $n+1$, which guarantees, from closure under addition, that every integer greater than $n(n+1)$ is also in $\spectrum{\kb}{q_\cstyle{C}}$.
Setting $M = n(n+1)$ and $S = \spectrum{\kb}{q_\cstyle{C}} \cap \llbracket 0, M \rrbracket$ will then conclude the proof.

Since $V$ is non-trivial and neither $\{ 0, \infty \}$ nor $\{ \infty \}$, it contains a non-zero integer.
In other words, the concept $\cstyle{C}$ admits a finite interpretation in some (potentially infinite) model $(\star)$.
To exploit this fact, we refine cycle-reversion techniques which have been developed to study finite reasoning in similar logics \cite{CKV90,rosati2008finite,GarciaLS14}.
More precisely, we tailor the notion of cycles to characterize under which conditions the interpretation of $\cstyle{C}$ may be finite.
By $(\star)$, those conditions are satisfied and we adapt a construction from the latter reference to produce models $\I$ and $\J$ of $\kb$ s.t\ $\sizeof{\cstyle{C}^\J} = \sizeof{\cstyle{C}^{\I}} + 1 < \infty$ as desired.
Henceforth, we assume $\ELIFbot$ KBs to be in normal form, that is every axiom in the TBox has one of the following shapes:
\begin{center}
\(
\begin{array}{cccc}
	K \sqsubseteq \cstyle{A} & K \sqsubseteq \exists \rstyle{r}.K' & \exists \rstyle{r}. K \sqsubseteq  K' & K \sqsubseteq\ \leq 1\ \rstyle{r}. K'
\end{array}
\)
\end{center}
where $\cstyle{A} \in \cnames \cup \{\bot\}$, $\rstyle{r} \in \posroles$ and $K, K'$ are conjunctions of concepts names.
This is a reformulation of the normal form used in \cite{GarciaLS14} and it can be verified that putting a KB in such a normal form does not affect spectra of queries on this KB.

\begin{toappendix}
	We briefly highlight the minor difference between the normal form for Horn-$\ALCIF$ proposed in \cite{GarciaLS14} and the one presented in Section~\ref{subsection:concept-elifbot} for $\ELIFbot$ KBs.
	First notice that every $\ELIFbot$ KB is in particular an Horn-$\ALCIF$ KB and thus, from the above reference we obtain a normal form in which every axiom in the TBox has one of the following shape:
	\begin{center}
		\(
		\begin{array}{cccc}
			K \sqsubseteq \cstyle{A} & K \sqsubseteq \exists \rstyle{r}.K' & K \sqsubseteq \forall \rstyle{r}. K' & K \sqsubseteq\ \leq 1\ \rstyle{r}. K'
		\end{array}
		\)
	\end{center}
	where $\cstyle{A} \in \cnames \cup \{\bot\}$, $\rstyle{r} \in \posroles$ and $K, K'$ are conjunctions of concepts names.
	Syntactically, the third shape of CI is not supported in an $\ELIFbot$ TBox.
	However, a simple semantical argument allows to transform any such CI $K \sqsubseteq \forall \rstyle{r}. K'$ into the equivalent $\ELIFbot$ CI $\exists \rstyle{r}^- . K \sqsubseteq K'$, leading to normal form used in Section~\ref{subsection:concept-elifbot}.
\end{toappendix}

We now present our refined notion of cycles
which itself relies on the following definition of inverse functional paths.

\begin{definition}
	An \emph{inverse functional path (IFP) in $\tbox$} is a sequence $K_0, \rstyle{r}_1, K_1, \dots, \rstyle{r}_n, K_n$ where $n \geq 1$, $K_0, \dots, K_n$ are conjunctions of concept names and $\rstyle{r}_1, \dots, \rstyle{r}_n$ are (potentially inverse) roles s.t.\ for all $0 \leq i < n$:
	\begin{center}
	\(
	\tbox \models K_i \incl \exists \rstyle{r}_{i+1}.K_{i+1}
	\quad \textrm{and} \quad
	\tbox \models K_{i+1} \incl\ \leq 1\ \rstyle{r}_{i+1}^-.K_i.
	\)
	\end{center}
\end{definition}

The interesting cycles for a concept $\cstyle{C}$ are the IFPs looping on themselves and forcing the presence of (at least) one instance of $\cstyle{C}$ ``per instance of the cycle''.
This latter property can also be expressed in terms of IFPs.

\begin{definition}
	\label{def:cycle}
	An IFP $K_0, \rstyle{r}_1, K_1, \dots, \rstyle{r}_n, K_n$ is a \emph{$\cstyle{C}$-generating cycle in $\tbox$} if
	$\tbox \models K_n  \incl K_0$ and there exists an IFP $L_0, \rstyle{s}_1, L_1, \dots, \rstyle{s}_m, L_m$ such that
	$\tbox \models K_i \incl L_0$ for some $0 \leq i \leq n$ and $\tbox \models L_m \incl \cstyle{C}$.
\end{definition}

We now reconcile with existing cycle reversion techniques by considering a completion of the original TBox containing reversed versions of each $\cstyle{C}$-generating cycle.

\newcommand{\finclosure}{\mathsf{finClosure}}
\newcommand{\finclosureof}[1]{\finclosure(#1)}

\begin{definition}
	We denote $\tbox_\cstyle{C}$ the $\ELIFbot$ TBox obtained from $\tbox$ by adding the following axioms, for each $\cstyle{C}$-generating cycle $K_0, \rstyle{r}_1, K_1, \dots, \rstyle{r}_n, K_n$ in $\tbox$ and each $0 \leq i < n$:
	\begin{center}
	\(
	K_{i+1} \incl \exists \rstyle{r}_{i}^-.K_{i}
	\quad \textrm{and} \quad
	K_{i} \incl\ \leq 1\ \rstyle{r}_{i+1}.K_{i+1}
	\)
	\end{center}
\end{definition}

The key result regarding this cycle reversion technique focused on a single concept is the following lemma:

\begin{lemmarep}
	\label{lemma:cycle-reversion}
	Let $(\tbox, \abox)$ be an $\ELIFbot$ KB and $\cstyle{C}$ a concept name.
	There exists a model $\I$ of $\kb$ s.t.\ $\sizeof{\cstyle{C}^\I} < \infty$ iff the KB $(\tbox_{\cstyle{C}}, \abox)$ is satisfiable.
	Furthermore, every such model $\I$ is a model of $(\tbox_{\cstyle{C}}, \abox)$.
\end{lemmarep}

\begin{toappendix}
	We prove the `only-if' direction in more details by proving the additional (and stronger) claim that every model $\I$ of $\kb$ s.t.\ $\sizeof{\cstyle{C}^\I} < \infty$ is a model of $(\tbox_{\cstyle{C}}, \abox)$.
	The `if' direction is an immediate consequence of Lemma~\ref{lemma:ils}, noticing that $\cstyle{C}$ is safe in $\tbox_\cstyle{C}$.

	Consider a model $\I$ of $\kb$ s.t.\ $\sizeof{\cstyle{C}^\I} < \infty$.
	We need to prove every axiom obtained by reversing $\cstyle{C}$-generating cycles of $\tbox$ is satisfied in $\I$.
	Let $K_0, \rstyle{r}_1, K_1, \dots, \rstyle{r}_n, K_n$ be a $\cstyle{C}$-generating cycle in $\tbox$ and let $L_0, \rstyle{s}_1, L_1, \dots, \rstyle{s}_m, L_m$ be an IFP satisfying Point~2 from the definition of $\cstyle{C}$-generating cycles.
	From $\I$ being a model of $\tbox$ and the definition of an IFP, it follows that $\sizeof{L_1^\I} \leq \sizeof{L_2^\I} \dots \leq \sizeof{L_m^\I}$.
	From Point~2.(a) and the assumption that $\cstyle{C}^\I$ is finite, we obtain that $L_1^\I$ is also finite.
	Again by definition af of IFP, applied this time to $K_0, \rstyle{r}_1, K_1, \dots, \rstyle{r}_n, K_n$, we have $\sizeof{K_1^\I} \leq \sizeof{K_2^\I} \dots \leq \sizeof{K_n^\I}$.
	By Point~1, $\sizeof{K_n^\I} \leq \sizeof{K_0\I}$ and thus $\sizeof{K_1^\I} = \sizeof{K_2^\I} \dots = \sizeof{K_n^\I}$.
	From Point~2.(a), there is some $0 \leq \iota \leq n$ such that $\sizeof{K_\iota^\I} \leq \sizeof{L_1^\I}$ and thus all $\sizeof{K_i^\I}$ are finite for all $0 \leq i \leq n$.
	Finiteness of every $K_i^\I$ joint with each $\rstyle{r}_i$ defining an injection from $K_{i}^\I$ to $K_{i+1}^\I$ yield that $\rstyle{r}_i$ actually defines a bijection between those two.
	In particular, $\I \models K_{i+1} \incl \exists \rstyle{r}_{i}^-.K_{i}$ and $\I \models K_{i} \incl\ \leq 1\ \rstyle{r}_{i+1}.K_{i+1}$ as desired.

	The `only-if' direction is now trivial: if there exists a model $\I$ of $\kb$ s.t.\ $\sizeof{\cstyle{C}^\I} < \infty$, then, by the above, $\I$ is a model of $(\tbox_{\cstyle{C}}, \abox)$ and thus the latter KB is satisfiable.
\end{toappendix}

The `only-if' direction of the above is the easy one: 
the IFPs in Definition~\ref{def:cycle} enforce that for every $K_i$ on a $\cstyle{C}$-generating cycle, there is an injection from $K_i^\I$ to $\cstyle{C}^\I$.
Since $\cstyle{C}^\I$ is finite, so are all these $K_i^\I$. 
It follows that the injective function from $K_i^\I$ to $K_{i+1}^\I$ defined by $\rstyle{r}_{i+1}^\I$  is actually a bijection.
From there, it is readily checked that
$\I$ is a model of $(\tbox_{\cstyle{C}}, \abox)$
as claimed,
and thus $(\tbox_{\cstyle{C}}, \abox)$ is satisfiable.

For the `if' direction of Lemma~\ref{lemma:cycle-reversion}, assume $\kb_\cstyle{C} = (\tbox_\cstyle{C}, \abox)$ is satisfiable.
We adapt a construction from \cite{GarciaLS14} to assemble a model $\I$ of $\kb_\cstyle{C}$ (thus, of $\kb$) in which ${\cstyle{C}^\I}$ is finite.
Our construction actually takes as input any $\ELIFbot$ KB $\kb = (\tbox, \abox)$ and guarantees the above finiteness condition for all ``{safe}'' concepts of $\tbox$.
A concept $\cstyle{C}$ is a \emph{safe concept} of $\tbox$ if every axiom from $\tbox_\cstyle{C}$ is already entailed by $\tbox$.
In particular, $\cstyle{C}$ is safe in $\tbox_\cstyle{C}$.

We introduce some relevant preliminaries.
Let $\cnames(\tbox)$ be the set of concept names used in $\tbox$.
A type for $\tbox$ is a subset $t \subseteq \cnames(\tbox)$ s.t.\ there is a model $\I$ of $\tbox$ and a $d \in \domainof{\I}$ s.t.\ $\type_\I(d) = t$, where $\type_\I(d)$ is the type realized at $d$ in $\I$, \emph{i.e.}:
\begin{center}
\(
\type_\I(d) := \{ \cstyle{A} \in \cnames(\tbox) \mid d \in \cstyle{A}^\I \}
\)
\end{center}
We use $\types(\tbox)$ to denote the set of all types of $\tbox$.
A type is \emph{critical} in $\tbox$ if it occurs on a $\cstyle{C}$-generating cycle for some safe concept $\cstyle{C}$ of $\tbox$.
Otherwise it is a \emph{free} type in $\tbox$.
For $t, t'\in \types(\tbox)$ and $\rstyle{r}$ a role, we write:
\begin{itemize}
	\item
	$t \rightgen{r} t'$ if $\tbox \models t \sqsubseteq \exists r.t'$ and $t'$ is maximal for this property;
	\item
	$t \rightdep{r} t'$ if $t \rightgen{r} t'$ and $\tbox \models t' \sqsubseteq\ \leq 1\ \rstyle{r^-}.t$;
	\item
	$t \bidep{r} t'$ if $t \rightdep{r} t'$ and $t' \rightdep{r^-} t$.
\end{itemize}

A type class is a non-empty set $P \subseteq \types(\tbox)$ such
that $t \in P$ and $t \bidep{r} t'$ implies $t' \in P$, and $P$ is minimal
with this condition. Note that the set of all type classes is
a partition of $\types(\tbox)$. We set $P \prec P'$ if there are $t \in P$
and $t' \in P'$ with $t' \subsetneq t$. Let $\prec^+$ be the transitive closure of
$\prec$.
It is known from \cite{GarciaLS14} that $\prec^+$ is a strict partial order.

The initial interpretation $\ils{\kb}{0}$ is defined by introducing an element for every ABox individual and an element $d_t$ for each $t \in \types(\tbox_f)$.
Formally, we define:
\begin{center}
\(
\begin{array}{rl}
	\domainof{\ils{\kb}{0}} & = \mathsf{Ind}(\abox) \cup \{ d_t \mid t \in \types(\tbox_f) \}
	\\[1mm]
	\cstyle{A}^{\ils{\kb}{0}} & = \{ \istyle{a} \in \mathsf{Ind}(\abox) \mid \cstyle{A} \in \type_\kb(\istyle{a}) \} \cup \{ d_t \mid \cstyle{A} \in d_t \}
	\\[1mm]
	\rstyle{r}^{\ils{\kb}{0}} & = \{ (\istyle{a}, \istyle{b}) \mid \rstyle{r}(\istyle{a}, \istyle{b}) \in \abox \}
\end{array}
\)
\end{center}
where $\type_\kb(\istyle{a}) := \{ \cstyle{A} \in \cnames \mid \kb \models \cstyle{A}(\istyle{a}) \}$.

	We describe three completion rules $\crule{1}$, $\crule{2}$, $\crule{3}$ applicable to an interpretation $\I$.
	Informally, whenever an existing $d$ with type $t'$ needs a $\rstyle{r}.t$-successor for some $t$, then $\crule{3}$ connects $d$ to $d_t$ if the chosen witness may be used by several such elements $d$ (that is $t' \not\rightdep{r} t$).
	If, on the other hand, the witness cannot be reused, then $\crule{1}$ simply introduces a dedicated fresh element $e$ if $t$ is either free or not in the type class of $t'$.
	Otherwise $t$ is critical and in the type class $P$ of~$t'$. Then $\crule{2}$ introduces or reuses existing elements to instantiate the whole type class $P$ at once.
	This requires only finitely many fresh instances of each type in $P$, in particular, critical types in $P$ are instantiated only finitely many times.

\begin{itemize}
	\item[$\crule{1}$.]
	For each $d \in \domainof{\I}$, each $t \in \types(\tbox)$ and $\rstyle{r} \in \posroles$ s.t.:
	 $\type_\I(d) \rightdep{r} t$,
	 $d \notin (\exists \rstyle{r}.t)^\I$,
	 and either $t \not\rightdep{r^-} \type_\I(d)$ or $t$ is a free type in $\tbox$,
	add a fresh domain element $e$ and modify the interpretation of concept names such that $\type_\I(e) = t$ and $(d, e) \in \rstyle{r}^\I$.

	\item[$\crule{2}$.]
	Choose a type class $P$ that is minimal w.r.t. the order $\prec^+$, a $\lambda = s \bidep{r} s'$ with $s \in P$, and an element $d \in s^\I \setminus (\exists \rstyle{r}.s')^\I$.
	If such a choice is not possible, then the application of $\crule{2}$ just returns the original model $\I$.
	Otherwise, for each $\lambda = s \bidep{r} s'$ with $s \in P$, set:
\begin{center}
\(
X^\I_{\lambda, 1} = s^\I \setminus (\exists \rstyle{r}.s')^\I
\qquad
X^\I_{\lambda, 2} = s'^\I \setminus (\exists \rstyle{r}^-.s)^\I.
\)
\end{center}
	Take (i) a fresh set $\Delta_s$ for each $s \in P$ such that $\sizeof{\Delta_s} \leq \max \{ \sizeof{t^\I} \mid t \in P \}$ and (ii) a bijection $\pi_\lambda$ from $X^\I_{\lambda, 1} \cup \Delta_s$ to $X^\I_{\lambda, 2} \cup \Delta_{s'}$ for each $\lambda = s \bidep{r} s'$ with $s, s' \in P$ and $\rstyle{r} \in \rnames$.
	A concrete construction of such sets and bijections can follow the one detailed in \cite{GarciaLS14}.
	We additionally require the above to minimize $\sizeof{\biguplus_{s \in P} \Delta_s}$.
	Now extend $\I$ as follows:
	\begin{itemize}
		\item add all domain elements in $\biguplus_{s \in P} \Delta_s$;
		\item extend $\rstyle{r}^\I$ with $\pi_\lambda$, for each $\lambda = s \bidep{r} s'$ with $s, s' \in P$ and $\rstyle{r}$ a role name;
		\item interpret concept names so that $\type_\I(d) = s$ for all $d \in \Delta_s$, $s \in P$.
	\end{itemize}

	\item[$\crule{3}$.]
	For each $d \in \domainof{\I}$, each $t \in \types(\tbox)$ and each $\rstyle{r} \in \posroles$ s.t.\ $\type_\I(d) \rightgen{r} t$, $\type_\I(d) \not\rightdep{r} t$, and $d \notin (\exists \rstyle{r}.t)^\I$.
	Add the edge $(d, d_t)$ to $\rstyle{r}^\I$.

\end{itemize}
We denote $\crule{k}(\I)$ the application of $\crule{k}$ to interpretation $\I$.
For $\crule{2}$, this is ambiguous since its application may depend on several choices (a minimal type class, etc).
This does not matter for our construction and we simply assume a fixed choice. 
While it is easily verified that $\crule{3}$ is idempotent, that is $\crule{3}(\crule{3}(\I)) = \crule{3}(\I)$, it is not the case for $\crule{1}$ in general.
However, since applying $\crule{1}$ on $\I$ does not alter the interpretation of concept and roles names on the original domain $\domainof{\I}$, we can safely define $\crule{1}^\infty(\I) = \bigcup_{n = 1}^\infty \crule{1}^n(\I)$, where $\crule{1}^n$ denotes $n$ successive applications of $\crule{1}$.
We now view $\crule{1}^\infty$ as a completion rule, which is clearly idempotent.

Starting from the initial interpretation $\ils{\kb}{0}$ previously defined, we complete it as follows:
\begin{center}
\(
\ils{\kb}{n+1} = \crule{3}(\crule{2}(\crule{1}^{\infty}(\ils{\kb}{n})))
\qquad\textrm{ and }\qquad
\ils{\kb}{} = \bigcup_{n = 0}^\infty \ils{\kb}{n}.
\)
\end{center}
Here again, notice that each rule application on $\I$ preserves the interpretation of concept names on $\domainof{\I}$ and can only extend those of role names, so $\ils{\kb}{}$ is well-defined.
In fact, we prove that $\ils{\kb}{}$ is obtained after finitely many steps: there exists $N \in \NN$ such that $\ils{\kb}{N+1} = \ils{\kb}{N}$.
Crucially, this guarantees that only finitely many instances of every critical type and safe concepts are introduced.
This culminates in the following lemma, which also concludes the proof of Lemma~\ref{lemma:cycle-reversion}.

\newcommand{\safe}{\mathsf{safe}}

\begin{lemmarep}
	\label{lemma:ils}
	If $\kb = (\tbox, \abox)$ is a satisfiable $\ELIFbot$ KB, then $\ils{\kb}{}$ is a model of $\kb$ and $\cstyle{C}^{\ils{\kb}{}}$ is finite for all safe $\cstyle{C}$ of $\tbox$.
\end{lemmarep}

\begin{proof}
	The main challenge to prove Lemma~\ref{lemma:ils} is to guarantee that the procedure is well-defined and terminates after finitely many steps.
	Once this is achieved, modelhood easily follows and it remains to prove that each step only introduces finitely many instances of each safe concept to obtain the desired property on interpretations of safe concepts.

	Regarding the application of each rule being well-defined, we have omitted a point in the main body of the paper: to construct sets $\Delta_s$ and bijections $\pi_\lambda$ as proposed in the reference \cite{GarciaLS14}, it is needed to guarantee that if $\crule{2}$ is applied on $\I$ and $\tbox \models K \sqsubseteq\ \leq 1\ \rstyle{r}.K'$, then $\I \models K \sqsubseteq\ \leq 1\ \rstyle{r}.K'$.
	To this end, we reproduce here the invariants proposed in the above reference, notably the third one that exactly states the desired property.
	\begin{lemma}[\cite{GarciaLS14}]
		\label{lemma:imported}
		Applications of rules $\crule{1}$, $\crule{2}$ and $\crule{3}$ preserve the following invariants when applied on $\I$:
		\begin{itemize}
			\item[(i1)] $\type_\I(d) \in \types(\tbox)$ for all $d \in \Delta^\I$;
			\item[(i2)] if $(d, d') \in \rstyle{r}^\I \setminus (\individuals(\abox) \times \individuals(\abox))$, then we have $\type_\I(d) \rightgen{r} \type_\I(d')$ or $\type_\I(d') \rightgen{r^-} \type_\I(d)$;
			\item[(i3)] If $\tbox \models K \sqsubseteq\ \leq 1\ \rstyle{r}.K'$, then $\I \models K \sqsubseteq\ \leq 1\ \rstyle{r}.K'$.
		\end{itemize}
	Furthermore, in a concrete application of $\crule{2}$, if $\lambda = s \bidep{r} s'$ with $s, s' \in P$ and $(d, d') \in \pi_\lambda$, then $\type_\I(d) = s$ and $\type_\I(d') = s'$.
	\end{lemma}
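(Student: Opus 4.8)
The plan is to prove Lemma~\ref{lemma:imported} by induction on the number of completion rules applied to the initial interpretation $\ils{\kb}{0}$, re-running the argument of \cite{GarciaLS14} against our rules $\crule{1}$, $\crule{2}$, $\crule{3}$ and the normal form of Section~\ref{subsection:concept-elifbot}. I would repeatedly use that no rule re-interprets a concept name on an element already present, so $\type_\I(d)$ is fixed once $d$ has been created. The base case is essentially built into the definition of $\ils{\kb}{0}$: the types assigned to ABox individuals and to the elements $d_t$ are types of $\tbox$, so (i1) holds; the only role edges of $\ils{\kb}{0}$ are the ABox assertions, which lie in $\individuals(\abox) \times \individuals(\abox)$, so (i2) holds vacuously; and (i3) reduces to satisfiability of $\kb$, since if some $\istyle{a} \in K^{\ils{\kb}{0}}$ had two $\rstyle{r}$-successors $\istyle{b}_1, \istyle{b}_2 \in K'^{\ils{\kb}{0}}$, then $\rstyle{r}(\istyle{a}, \istyle{b}_i) \in \abox$, $\kb \models K(\istyle{a})$ and $\kb \models K'(\istyle{b}_i)$, so $\tbox \models K \sqsubseteq\ \leq 1\ \rstyle{r}.K'$ together with the standard names assumption forces $\istyle{b}_1 = \istyle{b}_2$.

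For the inductive step I would treat $\crule{1}$, $\crule{2}$, $\crule{3}$ in turn. Invariant (i1) is immediate, since the preconditions of the rules only introduce fresh elements whose type is drawn from $\types(\tbox)$ (a type $t$ with $\type_\I(d) \rightdep{r} t$ in $\crule{1}$, a type of the chosen type class in $\crule{2}$), and $\crule{3}$ introduces no element. Invariant (i2) and the ``Furthermore'' clause are also direct, since every added edge $(d, d')$ is witnessed by $\type_\I(d) \rightgen{r} \type_\I(d')$: $\crule{1}$ adds $(d, e)$ with $\type_\I(d) \rightdep{r} \type_\I(e)$, which entails $\type_\I(d) \rightgen{r} \type_\I(e)$; $\crule{3}$ adds $(d, d_t)$ with $\type_\I(d) \rightgen{r} t = \type_\I(d_t)$; and in $\crule{2}$ each $(d, d') \in \pi_\lambda$ with $\lambda = s \bidep{r} s'$ has $\type_\I(d) = s$ and $\type_\I(d') = s'$ by the definitions of $X^\I_{\lambda, 1}$, $X^\I_{\lambda, 2}$ and of the sets $\Delta_s$, hence $\type_\I(d) \rightgen{r} \type_\I(d')$.

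The main obstacle is invariant (i3). For $\crule{1}$ and $\crule{3}$, the precondition $d \notin (\exists \rstyle{r}.t)^\I$ prevents $d$ from acquiring two $\rstyle{r}$-successors of the same type, and the $\subseteq$-maximality built into $\rightgen{r}$, together with (i2) and the $\rightdep{r}$ side-conditions, rules out that $d$ acquires $\rstyle{r}$-successors of two distinct types both containing some $K'$ with $\tbox \models K \sqsubseteq\ \leq 1\ \rstyle{r}.K'$ and $K \subseteq \type_\I(d)$: in a model of $\tbox$ realising $\type_\I(d)$ the two successors forced by $\rightgen{r}$ would have to merge, which would make $\rightgen{r}$ point at a strictly larger type; this is the argument of \cite{GarciaLS14} and carries over with only cosmetic changes. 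For $\crule{2}$, preservation of (i3) is exactly what the construction of the sets $\Delta_s$ and the bijections $\pi_\lambda$ is engineered to guarantee, and that construction is legitimate only because $\I$ already satisfies (i3) --- so the three invariants must be threaded through the induction simultaneously. I expect transcribing this last case, together with the accompanying construction of $\Delta_s$ and $\pi_\lambda$, from \cite{GarciaLS14} to our rule set to be the delicate part; everything else is routine bookkeeping.
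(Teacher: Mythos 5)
Your overall strategy matches the paper's: the paper does not reprove this lemma from scratch either, but imports it from \cite{GarciaLS14} and only argues that the local modifications ($\crule{1}$ introducing types more liberally, $\crule{2}$ correspondingly more restricted) do not disturb the reference's invariant proofs, while the ``Furthermore'' clause is attributed to Lemma~19 of that reference. Your induction skeleton, base case, and identification of the $\crule{2}$/(i3) case as the delicate part are consistent with that. One concrete caveat: you claim that $\type_\I(d) = s$ for $(d,d') \in \pi_\lambda$ is ``direct'' from the definitions of $X^\I_{\lambda,1}$, $X^\I_{\lambda,2}$ and $\Delta_s$. For $d \in \Delta_s$ this is by construction, but for $d \in X^\I_{\lambda,1} = s^\I \setminus (\exists \rstyle{r}.s')^\I$ membership in $s^\I$ (with $s$ read as a conjunction) only yields $s \subseteq \type_\I(d)$, not equality; the equality is exactly the nontrivial content of the ``Furthermore'' clause, and your verification of (i2) for the edges added by $\crule{2}$ silently relies on it. So that sub-step needs the separate argument from the reference rather than being definitional; with that repaired (or explicitly delegated, as the paper does), the proposal goes through.
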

	It is easily verified that the proof in the reference\footnote{See the corresponding technical report available here: \textsf{www.informatik.uni-bremen.de/tdki/research/papers/2014/ILS-KR14.pdf}} still carries on with our definition of $\crule{1}$ being slightly more liberal regarding the types it may introduce (and, consequently, $\crule{2}$ being slightly more restricted).
	This difference does not affect proofs of invariants but impacts the size of the resulting models: our construction does not guarantee that the overall procedure yields a finite model, but only that the interpretations of safe concepts are finite.
	The additional statement corresponds to Lemma~19 in the above reference, whose proof is independent from our adapted construction.

	We also state the following basic observations:
	\begin{enumerate}
		\item By construction, once an element is introduced, its type remains the same.
		Formally, if $e \in \domainof{\ils{\kb}{n}}$, then for every $m \geq n$, we have $\type_{\ils{\kb}{m}}(e) = \type_{\ils{\kb}{n}}(e)$.
		\item If $\J$ has been obtained from $\crule{2}$ applied on $\I$ with type class $P$, then for all $s, s' \in P$ and $\lambda = s \bidep{r} s'$, we have $s^{\J} \setminus (\exists \rstyle{r}.s')^{\J} = \emptyset$.
	\end{enumerate}

	\medskip

	{\noindent \bf Termination.} We prove that the procedure terminates in at most $(2^\sizeof{\tbox} + 1)^{2^\sizeof{\tbox} + 1}$ steps, that is there exists an integer $N \leq (2^\sizeof{\tbox} + 1)^{2^\sizeof{\tbox} + 1}$ s.t.\ $\ils{\kb}{N+1} = \ils{\kb}{N}$.

	Both $\crule{1}$ and $\crule{3}$ being idempotent, it suffices to prove that $\crule{2}$ is applied at most that many times.
	To keep track of elements introduced along steps, we define for all $n \geq 1$ a subset $\delta_n$ of $\domainof{\ils{\kb}{n}}$ defined as:
	\[
	\delta_0 := \domainof{\crule{1}^\infty(\ils{\kb}{0})} \textrm{ and }\delta_n := \domainof{\crule{1}^\infty(\ils{\kb}{n})} \setminus \domainof{\crule{1}^\infty(\ils{\kb}{n-1})} \textrm{ for all } n > 0.
	\]
	Consider now the (directed) tree $G = (V, E)$ obtained by setting $V := \{ \delta_n \mid n \geq 0, \delta_n \neq \emptyset \}$ and $(\delta_{n_1}, \delta_{n_2}) \in E$ iff $n_1$ is the smallest integer s.t., in the application of $\crule{2}$ on $\crule{1}(\ils{\kb}{n_2 - 1})$, there are $s, s' \in P$ and $i \in \{ 1, 2 \}$ s.t.\ $\delta_{n_1} \cap X_{\lambda, i}^{\crule{1}(\ils{\kb}{n_2 - 1})} \neq \emptyset$ where $\lambda = s \bidep{r} s'$ and $P$ was the chosen type class for the $\crule{2}$ rule application.
	In words, $(\delta_{n_1}, \delta_{n_2})$ is an edge in $G$ if, when constructing $\ils{\kb}{n_2}$, we reused an element from $\crule{1}^\infty(\ils{\kb}{n_1})$ to instantiate the chosen type class and $n_1$ is minimal for this property.
	It is clear that every such $\delta_{n_2}$ has exactly one predecessor (except for $\delta_0$, having none), so that $G$ is indeed a tree.
	Note that every element in $\delta_{n_2}$ is: either obtained directly by the application of $\crule{2}$ on $\crule{1}^\infty(\ils{\kb}{n_2 - 1})$, or by a sequence of applications of $\crule{1}$ starting from an element introduced by the above application of $\crule{2}$, that is starting from an element of $\domainof{\ils{\kb}{n_2}} \setminus \domainof{\crule{1}^\infty(\ils{\kb}{n_2 - 1})}$.
	In particular, for every $e \in \delta_{n_2}$, there exists a sequence of elements introduced by successive applications of rules that starts from some element $d \in \delta_{n_1}$ and ends in $e$ ($\dagger$).

	We would now like to prove that $V$ is finite, which implies $\delta_n \neq \emptyset$ for only finitely many $n$ and concludes.
	From Observations~1 and~2 above, note that $G$ has a branching degree that is bounded by the number of type classes, that is by $2^\sizeof{\tbox}$.
	It thus suffices to prove that is has finite depth.
	Assume by means of contradiction that we can find a branch of length greater than $2^\sizeof{\tbox} + 1$, that is we can find a (directed) path $\delta_{n_0}, \dots, \delta_{n_K}$ with $n_0 = 0$ and $K > 2^\sizeof{\tbox} + 1$.
	Note that $n_0 < n_1 \dots < n_K$ and denote $P_{n_k}$ the type class used in the application of $\crule{2}$ to build $\ils{\kb}{n_k}$ from $\ils{\kb}{n_k - 1}$ for $k = 1, \dots, K$.
	Since the number of type classes is at most the number of types, we must have $P_{n_{\kappa}} = P_{n_\ell}$ for some $1 \leq \kappa < \ell \leq K$.
	Now, pick an element $e_\ell \in \delta_{n_\ell}$ that has been introduced by application of $\crule{2}$ with type class $P_{n_\ell}$ on $\ils{\kb}{n_\ell - 1}$.
	By iterated application of $(\dagger)$, there exists some $e_\kappa \in \delta_{n_\kappa}$ that starts a sequence of elements $e_\kappa = d_1, d_2, \dots, d_m = e_\ell$ introduced by successive applications of rules ending in $e_\ell$.
	Since every element in $\delta_{n_\kappa}$ has been obtained directly from the application of $\crule{2}$ with type class $P_{n_\kappa}$ on $\ils{\kb}{n_\kappa - 1}$ or by applications of $\crule{1}$ on those latter elements, we can assume w.l.o.g.\ that $e_\kappa$ has been obtained by the above application of $\crule{2}$.

	We now argue that this is a contradiction.
	Notice that $e_{\kappa}$ and $e_{\ell}$ belong to the same type class, namely $P_{n_{\kappa}}$ ($= P_{n_{\ell}}$) and have critical types since they have been introduced by $\crule{2}$.
	In particular, since $d_2, \dots, d_m$ are obtained by applications of $\crule{1}$ or $\crule{2}$, we have in any case:
	$\type_{\ils{\kb}{n_\ell}}(d_1) \rightdep{r_2} \type_{\ils{\kb}{n_\ell}}(d_2), \dots, \rightdep{r_m} \type_{\ils{\kb}{n_\ell}}(d_m)$ for some roles $\rstyle{r_1}, \dots, \rstyle{r}_m$.
	Since $\type_{\ils{\kb}{n_\ell}}(e_\kappa)$ and $\type_{\ils{\kb}{n_\ell}}(e_\ell)$ are critical and on the same type class, this can be completed into a $\cstyle{C}$-generating cycle for some safe concept $\cstyle{C}$ of $\tbox$ witnessing $\type_{\ils{\kb}{n_\ell}}(e_\kappa)$ and $\type_{\ils{\kb}{n_\ell}}(e_\kappa)$ being critical.
	Since $\cstyle{C}$ is safe, $\tbox$ entails $\tbox_\cstyle{C}$ and therefore we have $\type_{\ils{\kb}{n_\ell}}(d_1) \bidep{r_2} \type_{\ils{\kb}{n_\ell}}(d_2), \dots, \bidep{r_m} \type_{\ils{\kb}{n_\ell}}(d_m)$.
	Therefore, none of the elements $d_2, \dots, d_m$ has been introduced by an application of $\crule{1}$ as their respective type does not match precondition of $\crule{1}$.
	Thus, they all have been introduced by applications of $\crule{2}$.
	From Observation~2, it follows that they all have been introduced simultaneously, contradicting $\kappa < \ell$.

	We have proved that the depth and branching degree of $G$ are both bounded by $2^\sizeof{\tbox} + 1$, and therefore $\delta_n \neq \emptyset$ for at most $(2^\sizeof{\tbox} + 1)^{2^\sizeof{\tbox} + 1}$ distinct $n \geq 0$.
	Since $\delta_n = \emptyset$ implies $\delta_{n+1} = \emptyset$ for all $n \geq 0$, this guarantees that for $N_{\mathsf{steps}} := (2^\sizeof{\tbox} + 1)^{2^\sizeof{\tbox} + 1}$, we have $\ils{\kb}{N+1} = \ils{\kb}{N+1}$, proving the procedure terminates after at most $N_{\mathsf{steps}}$ steps.

	\medskip

	{\noindent \bf Modelhood.}
	We now turn to the modelhood of $\ils{\kb}{}$ and recall we assumed the KB to be in normal form.

	Axioms with shape $K \sqsubseteq \cstyle{A}$ are satisfied as every element in $\ils{\kb}{}$ has a type from $\types(\tbox)$ (see invariant (i)), which by definition, satisfies those.
	For axioms with shape $K \sqsubseteq \exists \rstyle{r}.K'$, it suffices to notice that, since the construction of $\ils{\kb}{}$ only requires finitely many steps, no application of a rule can be infinitely delayed and therefore such an existential requirement is ultimately fulfilled by one of the three rules.
	Satisfaction of axioms with shape $\exists \rstyle{r}. K \sqsubseteq  K'$ follows from invariant (ii) and the maximality condition in the definition of the relation $\rightgen{r}$.
	Axioms with shape $K \sqsubseteq\ \leq 1\ \rstyle{r}. K'$ are satisfied from invariant $(i3)$.

	\medskip

	{\noindent \bf Finiteness and size of the interpretations of safe concepts.}
	We now prove that the interpretation in $\ils{\kb}{}$ of every safe concept of $\tbox$ is finite by proving jointly that interpretations of all critical types are also finite.
	In interpretation $\I$, we denote $M_\I$ the number of elements in $\domainof{\I}$ that are either instances of a safe concept or of a critical type.

	In $\ils{\kb}{0}$, this is clear: we have $M_{\ils{\kb}{0}} \leq \sizeof{\abox} + 2^\sizeof{\tbox}$.

	We now prove every rule application only introduces finitely many new instances of $\cstyle{C}$ and of critical types.
	Recall $\crule{3}$ does not introduce any new element.

	By construction, applying $\crule{2}$ on a model $\I$ only introduces up to $\max \{ \sizeof{t^\I} \mid t \in P \}$ fresh elements per type $s$ in the considered type class $P$ (see the construction of sets $\Delta_s$).
	From $\crule{1}$ not applying on the selected $d \in \domainof{\I}$ used to trigger this application of $\crule{2}$, it follows that every type in $P$ is critical.
	Therefore $\crule{2}$ introduces at most $2^\sizeof{\tbox} \times M_\I$ new elements, and, \emph{a fortiori}, a most that many new instances of safe concepts and critical types.

	For $\crule{1}^\infty$, note that the successive applications of $\crule{1}$ on starting model $\I$ can be seen as a collection of trees, each rooted by some element $d \in \Delta^\I$.
	Within each tree, moving afar from the root corresponds to following a sequence of elements whose types form an IFP.
	We claim that all instances of $\cstyle{C}$ or of critical types in a tree can be found a depth at most $2^\sizeof{\tbox}$.
	Assume by contradiction that we can follow a branch whose elements have types $t_0, \dots, t_n$ with $n \geq 2^\sizeof{\tbox}$ and either $\cstyle{C} \in t_n$ or $t_n$ is a critical type for a safe concept $\cstyle{C}$.
	Then there must exists two integers $0 \leq i < j \leq n$ s.t.\ $t_i = t_j$.
	The IFP with types $t_i, t_{i+1}, \dots, t_j$ is a $\cstyle{C}$-generating cycle.
	Since $\cstyle{C}$ is safe in $\tbox$, we have $\tbox \models \tbox_\cstyle{C}$ and thus this $\cstyle{C}$-generating cycle is reversed in $\tbox$, and in particular $t_i \bidep{r} t_{i+1}$.
	This contradicts the element with critical type $t_{i + 1}$ being introduced by an application of $\crule{1}$.
	Since each tree has also a branching degree of at most $2^\sizeof{\tbox}$, we obtain that each tree contains at most $2^{\sizeof{\tbox} \times 2^\sizeof{\tbox}}$ instances of safe concepts and critical types.
	It remains to argue that there are finitely many roots each time $\crule{1}^\infty$ is applied on $\ils{\kb}{n}$ for some $n \geq 0$.
	This is trivial for $\ils{\kb}{0}$ which is finite.
	When applying $\crule{1}^\infty$ to $\ils{\kb}{n+1}$, notice that all elements from $\domainof{\ils{\kb}{n}}$ are already saturated for $\crule{1}$ and therefore the only possible roots for new applications of $\crule{1}$ are the elements just introduced by the latest application of $\crule{2}$, which, as we have seen above, introduces at most $2^\sizeof{\tbox} \times M_{\ils{\kb}{n}}$ fresh elements.
	Since those latter element all have some critical type, we bound the above quantity by $M_{\ils{\kb}{n+1}}$, so that the application of $\crule{1}^\infty$ on $M_{\ils{\kb}{n+1}}$ always introduces at most $2^{\sizeof{\tbox} \times 2^\sizeof{\tbox}} \times M_{\ils{\kb}{n+1}}$ new instances.
	To uniformize this with the case $n = 0$, we further use the more brutal bound $f(n) = 2^{\sizeof{\tbox} \times 2^\sizeof{\tbox}} \times (M_{\ils{\kb}{n}} + \sizeof{\domainof{\ils{\kb}{0}}})$, so that the number of instances of safe concepts and critical types introduced by $\crule{1}^\infty$ in $\crule{1}^\infty(\ils{\kb}{n+1})$ is at most $f(n)$.

	Overall, we obtained that for all $n \geq 0$:
	\[
		M_{\ils{\kb}{n+1}} \leq M_{\ils{\kb}{n}} + f(n + 1) + 2^\sizeof{\tbox} \times M_{\ils{\kb}{n}}.
	\]
	Coupled with the bound $N_\mathsf{steps}$ on the number of steps in the algorithm, it is not hard to see that this relation yields a finite number of instances of safe concepts and critical types in $\ils{\kb}{}$.
	Furthermore, this bound is polynomial (actually, even linear) w.r.t.\ $\sizeof{\abox}$, which will further play a role in our complexity results.

\end{proof}

Now, to finish the proof of Theorem~\ref{theorem-shape-concept-elifbot}, we build a model with exactly one more instance of $\cstyle{C}$ than in the model $\ils{\kb_\cstyle{C}}{}$ obtained by the above procedure on $\kb_\cstyle{C} := (\tbox_\cstyle{C}, \abox)$.
To do so, we essentially relaunch this procedure on a simpler KB $\kb' = (\tbox_\cstyle{C}, \{ \cstyle{C}(\istyle{a})\})$, where $\istyle{a}$ is a fresh individual name,
 and then form the disjoint union of $\ils{\kb'}{}$ with $\ils{\kb_\cstyle{C}}{}$.
However, this approach is too naive as the model $\ils{\kb'}{}$ might contain several instances of the concept $\cstyle{C}$, due to the initial elements $d_t$ for each type $t \in \types(\tbox)$.
This cannot easily be solved by identifying the respective $d_t$ elements from $\ils{\kb'}{}$ and $\ils{\kb_\cstyle{C}}{}$, as such an operation may violate some functionality constraints.

Instead, we produce an incomplete version $\preils{\kb'}{}$ of $\ils{\kb'}{}$ in which elements $d_t$ are absent and applications of rule $\crule{3}$ are ignored.
Formally, for an $\ELIFbot$ KB $\kb$, the interpretation $\preils{\kb}{0}$ is defined as $\ils{\kb}{0}$, but without the $d_t$ elements, and we further define, for all $n \geq 0$:
\begin{center}
\(
\preils{\kb}{n+1} = \crule{2}(\crule{1}^{\infty}(\preils{\kb}{n}))
\qquad\textrm{ and }\qquad
\preils{\kb}{} = \bigcup_{n = 0}^\infty \preils{\kb}{n}.
\)
\end{center}
The resulting interpretation $\preils{\kb}{}$ is in general not a model of $\kb$ due to the non-applied $\crule{3}$ rules.
It is however possible to reuse $d_t$ elements of another model, \emph{e.g.}\ those from $\ils{\kb}{}$.

\begin{lemmarep}
	\label{lemma:ils-and-preils}
	If $\kb = (\tbox, \abox)$ is a satisfiable $\ELIFbot$ KB and $\cstyle{C}$ is safe in $\tbox$, then $\J = \crule{3}(\ils{\kb}{} \cup \preils{\kb'}{})$ is a model of $\kb$, where $\kb' = (\tbox, \{ \cstyle{C}(\istyle{a})\})$ with $\istyle{a} \notin \mathsf{Ind}(\abox)$.
	Furthermore,
 $\cstyle{C}^{\J} = \cstyle{C}^{\I_\kb} \cup \{ \istyle{a} \}$.
\end{lemmarep}

\begin{proof}
	Proving that $\crule{3}(\ils{\kb}{} \cup \preils{\kb'}{})$ is a model of $\kb$ follows the same line as for proving that $\ils{\kb}{}$ is.
	Notably, it can be verified that all the intermediate $\preils{\kb}{n}$ satisfies invariants from Lemma~\ref{lemma:imported} (it suffices to verify it on $\preils{\kb}{0}$).
	This entails termination of the procedure generating $\preils{\kb}{}$, with the same bound on the number of steps, and satisfaction of axioms with shapes $K \sqsubseteq \cstyle{A}$, $\exists \rstyle{r}. K \sqsubseteq K'$ and $K \sqsubseteq\ \leq 1\ \rstyle{r}. K'$.
	Joint with $\ils{\kb}{}$ being a model of $\kb$, it is readily checked that $\crule{3}(\ils{\kb}{} \cup \preils{\kb'}{})$ is also a model of these three shapes of axioms.
	For axioms with shape $K \sqsubseteq \exists \rstyle{r}.K'$, it suffices to notice that the final application of $\crule{3}$ provides the desired witnesses for the elements of $\preils{\kb}{}$ which were violating these axioms.

	We now give more details regarding the proof of $\cstyle{C}^{\preils{\kb'}{}} = \{ \istyle{a} \}$.

	In the body of the paper, it is already made clear that another instance of $\cstyle{C}$ cannot possibly be obtained by an application of $\crule{1}$ as it would imply existence of an underlying $\cstyle{C}$-generating cycle. The latter, whose reversion is entailed by $\tbox$, contradicts the prerequisite to apply $\crule{1}$ (a similar argument has also been used to study the number of instances of safe concepts introduced by rule $\crule{1^\infty}$ in the proof of Lemma~\ref{lemma:ils}).
	For the same reason, no critical type can be generated by an application of $\crule{1}$, since it could then be completed into a similar $\cstyle{C}$-generating cycle.
	\emph{A fortiori}, no type from the type class of $\type_{\kb'}(\istyle{a})$ can be generated by $\crule{1}$ ($\star$).

	Concerning applications of $\crule{2}$, assume by contradiction that an application of $\crule{2}$ with type class $P$ introduces an instance of $\cstyle{C}$.
	Using Lemma~\ref{lemma:cycling-types-are-equal-types}, we know that $P$ has to be exactly the type class of $\type_{\kb'}(\istyle{a})$.
	Recall now the graph $G$ that has been used in the proof of termination in Lemma~\ref{lemma:ils} and that describes the dependencies between successive introductions of elements along the construction of $\preils{\kb}{}$ (resp.\ of $\ils{\kb}{}$ in proof of Lemma~\ref{lemma:ils}).
	We reuse the argument that derives a contradiction if twice the same type class is used to define the edges of a same branch in the graph $G$.
	Joint with $(\star)$ and fact that $\domainof{\preils{\kb}{0}} = \{ \istyle{a} \}$, we obtain that $P$ can only be used in applications of $\crule{2}$ exactly once.
	Now, using $(\star)$ again, when $\crule{2}$ triggers with $P$ on the current $\I$, in the construction of sets $\Delta_s$, we have $X^\I_{\lambda, 1} = \{ \istyle{a} \}$ (resp.\ $X^\I_{\lambda, 2} = \{ \istyle{a} \}$) if $\lambda = \type_{\kb'}(\istyle{a}) \bidep{r} s'$ (resp.\ if $\lambda = s' \bidep{r} \type_{\kb'}(\istyle{a})$), and $X^\I_{\lambda, 1} = \emptyset$ (resp.\ $X^\I_{\lambda, 2} = \emptyset$) otherwise.
	It is thus possible to set $\Delta_s = \emptyset$ if $s = \type_{\kb'}(\istyle{a})$ and $\Delta_s = \{ a_s \}$ where $a_s$ is a fresh element for the other $s \in P$.
	Subsequent bijections are trivial since every $X^\I_{\lambda, i} \cup \Delta_s$ are now singletons.
	This clearly minimizes the number of introduced elements and proves that the only application of $\crule{2}$ with the type class of $\type_{\kb'}(\istyle{a})$ does not introduce new instance of $\type_{\kb'}(\istyle{a})$, thus of $\cstyle{C}$.
\end{proof}

This concludes the proof of Theorem~\ref{theorem-shape-concept-elifbot} as we obtain two models $\ils{\kb_\cstyle{C}}{}$ and $\crule{3}(\ils{\kb_\cstyle{C}}{} \cup \preils{(\tbox_\cstyle{C}, \{ \cstyle{C}(\istyle{a})\})}{})$ with respectively $n$ and $n+1$ instances of $\cstyle{C}$, both $n$ and $n+1$ being finite.

\begin{toappendix}
Intuitively, this comes from the observation that, in this interpretation, all types are on some IFP starting from $\type_{\kb'}(\istyle{a})$.
Indeed, rule $\crule{1}$, resp.\ $\crule{2}$, only introduces a type $t'$ if is connected to an already instantiated type $t$ by $t \rightdep{r} t'$, resp.\ by $t \bidep{r_0} t_1 \bidep{r_{1}} \dots t_k \bidep{r_{k}} t'$, and $\type_{\kb'}(\istyle{a})$ is the only initial type.
Now, since $\type_{\kb'}(\istyle{a})$ is the type induced by $\cstyle{C}$, any instance of $\cstyle{C}$ in $\preils{\kb'}{}$ has a type $s$ containing $\type_{\kb'}(\istyle{a})$.
The following technical lemma shows that this situation enforces $s = \type_{\kb'}(\istyle{a})$.

\begin{lemmarep}
	\label{lemma:cycling-types-are-equal-types}
	Let $\tbox$ be a satisfiable $\ELIFbot$ TBox, $\cstyle{C}$ a safe concept in $\tbox$ and
	$t_0 \in \types(\tbox)$ a type containing $\cstyle{C}$.
	If there exists an IFP
	$t_0$, $\rstyle{r_1}$, $K_1$, $\dots$, $\rstyle{r_n}$, $K_n$ with $t_0 \subseteq K_n$ (where $t_0$ is seen as the conjunction of its concepts),
	then $t_0 = K_n$.
\end{lemmarep}

\begin{proof}
	Since $\cstyle{C} \in t_0 \subseteq K_n$, notice $K_n$, $\rstyle{r_1}$, $K_1$, $\dots$, $\rstyle{r_n}$, $K_n$ is a $\cstyle{C}$-generating cycle that has thus been reversed in $\tbox$.
	In particular, we have $(i)$ $\tbox \models K_1 \sqsubseteq \exists \rstyle{r_1^-}.K_n$.
	We now prove that $K_n \subseteq t_0$.
	Since $t_0$ is a type for $\tbox$, there exists a model $\I$ of $\tbox$ and $e_0 \in \domainof{\I}$ whose type is exactly $t_0$.
	From $t_0$, $\rstyle{r_1}$, $K_1$ initiating the inverse functional sequence, we have $(ii)$ $\tbox \models t_0 \sqsubseteq \exists \rstyle{r_1}.K_1$ and $(iii)$ $\tbox \models K_1 \sqsubseteq \leq 1\ \rstyle{r_1^-}.t_0$.
	Thus, from $(ii)$, there exists $e_1 \in K_1^\I$ such that $(e_0, e_1) \in \rstyle{r_1}^\I$.
	Similarly, from $(i)$, there exists $d \in K_n^\I$ such that $(d, e_1) \in \rstyle{r_1}^\I$.
	But now, from $t_0 \subseteq K_n$ and $(iii)$, we must have $d = e_0$.
	Thus $e_0 \in K_n^\I$ and since $e_0$ has type $t_0$, we obtain $t_0 \subseteq K_n$.
\end{proof}

Consequently, all instances of $\cstyle{C}$ in $\preils{\kb'}{}$ have type $\type_{\kb'}(\istyle{a})$.
It follows that $\crule{1}^\infty$ doesn't introduce any instance of $\type_{\kb'}(\istyle{a})$, thus of $\cstyle{C}$, and that the type class of $\type_{\kb'}(\istyle{a})$ is chosen exactly once along applications of $\crule{2}$.
When chosen, the minimality condition on the number of elements introduced by $\crule{2}$ guarantees no other instance of $\type_{\kb'}(\istyle{a})$, thus of $\cstyle{C}$, is created.

\end{toappendix}

\subsection{The case of $\dllitef$}
\label{subsection:concept-dllitef}

\begin{toappendix}
	\subsection*{Proofs for Section~\ref{subsection:concept-dllitef} (The case of $\dllitef$)}
\end{toappendix}

We now build upon the above technique to obtain the following complete characterization for $\dllitef$ KBs.

\begin{theorem}
	\label{theorem-shape-concept-dllitef}
	A non-trivial subset of $\ninfty$ is $\dllitef$-concept realizable iff it has shape $\{ \infty \}$, $\{ 0, \infty \}$, $\{ 0 \} \cup \llbracket M, \infty \rrbracket$ or $\llbracket M, \infty \rrbracket$ for some $M \in \NN$.
\end{theorem}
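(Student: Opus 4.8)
The plan is to treat the two implications separately, reusing earlier constructions for realizability and combining the $\ELIFbot$ analysis with a dedicated ``add one instance'' argument for the converse. For the \emph{if} direction, each of the four listed shapes has already been realized by a $\dllitef$ KB, in fact often by a $\dllitecore$ one: $\{\infty\}$ by the KB of Example~\ref{example-concept-elif-infinity}, $\{0,\infty\}$ by the TBox of Example~\ref{example-dllitef-zero-infinity}, and both $\llbracket M,\infty\rrbracket$ and $\{0\}\cup\llbracket M,\infty\rrbracket$ by the $\dllitecore$ variant described at the end of Example~\ref{example:concept-one-from-min} (the empty TBox with $M$ asserted instances of $\cstyle{C}$ also realizing $\llbracket M,\infty\rrbracket$). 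So only the \emph{only-if} direction requires work. Since $\dllitef$ is a sublogic of $\ELIFbot$, Theorem~\ref{theorem-shape-concept-elifbot} already narrows any non-trivial $\dllitef$-concept realizable set to $\{\infty\}$, $\{0,\infty\}$, or $S\cup\llbracket M,\infty\rrbracket$ with $S\subseteq\llbracket 0,M\rrbracket$, so it suffices to show that in the last case $S\subseteq\{0\}$, i.e.\ that the set of positive integers in the spectrum is upward closed.

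To this end I would prove a ``$+1$ lemma'': if $\kb$ is a $\dllitef$ KB, $\cstyle{C}\in\cnames$, and $\kb$ has a model $\I$ with $1\le\sizeof{\cstyle{C}^\I}<\infty$, then $\kb$ has a model $\J$ with $\sizeof{\cstyle{C}^\J}=\sizeof{\cstyle{C}^\I}+1$. Granting this, starting from any model witnessing a positive value of $\spectrum{\kb}{q_\cstyle{C}}$ one obtains by iteration that every larger integer lies in the spectrum, so its positive part is an interval $\llbracket M,\infty\rrbracket$; together with $\infty\in\spectrum{\kb}{q_\cstyle{C}}$ (Lemma~\ref{lemma-closure-under-addition}) and the possible presence or absence of $0$, this yields exactly the four stated shapes. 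The $+1$ lemma is where the weakness of $\dllitef$ is essential: a $\dllitef$ TBox cannot force an element to have two \emph{distinct} successors along a role, so the anonymous part of any model has bounded out-degree and no branching is ever forced. This fails for $\ELIFbot$, and indeed Example~\ref{example-bad-concept} exhibits a genuine gap $\{4\}\cup\llbracket 6,\infty\rrbracket$ realized by an $\ELIFbot$ KB.

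To prove the $+1$ lemma I would first apply cycle reversion: by Lemma~\ref{lemma:cycle-reversion}, $\I$ is also a model of $\kb_\cstyle{C}=(\tbox_\cstyle{C},\abox)$, in which $\cstyle{C}$ is safe, so the structure forcing instances of $\cstyle{C}$ is ``balanced'' (the injections induced by inverse functional roles are bijections). I would then reuse the gadget behind Lemma~\ref{lemma:ils-and-preils}: with $\kb'=(\tbox_\cstyle{C},\{\cstyle{C}(\istyle{a})\})$ for a fresh $\istyle{a}$, build the finite interpretation $\preils{\kb'}{}$, which contains exactly one instance of $\cstyle{C}$ (namely $\istyle{a}$) by Lemma~\ref{lemma:cycling-types-are-equal-types} and its surrounding argument, glue it onto $\I$ rather than onto $\ils{\kb_\cstyle{C}}{}$, and repair the remaining dangling edges (the omitted $\crule{3}$ applications) using witnesses drawn from $\I$, so that $\cstyle{C}^\J=\cstyle{C}^\I\cup\{\istyle{a}\}$. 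The main obstacle is precisely this last step for an arbitrary $\I$: one must ensure that the repair introduces no further instance of $\cstyle{C}$. An omitted $\crule{3}$ requirement has the form ``$d$ needs a \emph{shareable} $\rstyle{r}$-successor of some maximal type $t$'', which in $\dllitef$ amounts to the plain existential $\exists\rstyle{r}$; its target type $t$ is either already realized in $\I$, supplying the witness, or can be realized by a single fresh element (together with a small forced tree below it) carrying neither $\cstyle{C}$ nor any critical type, because $\cstyle{C}$ is safe in $\tbox_\cstyle{C}$ and any fresh $\cstyle{C}$-instance reached along such a path would, by Lemma~\ref{lemma:cycling-types-are-equal-types}, have the type of $\istyle{a}$ and hence be identified with it. Checking that the resulting interpretation still satisfies all CIs, CDs, and unqualified functionality axioms, and that the added edges create no CD violation, is then routine and parallels the modelhood verifications already carried out for $\ils{\kb}{}$ and $\crule{3}(\ils{\kb}{}\cup\preils{\kb'}{})$.
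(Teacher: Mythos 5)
Your proposal is correct and follows essentially the same route as the paper: the `if' direction via Examples~\ref{example-concept-elif-infinity}, \ref{example-dllitef-zero-infinity} and \ref{example:concept-one-from-min}, and the `only-if' direction via exactly the paper's Lemma~\ref{lemma-concept-dllitef-plus-one}, proved by gluing $\preils{(\tbox_\cstyle{C}, \{ \cstyle{C}(\istyle{a})\})}{}$ onto the given model $\I$ and serving the pending unqualified existentials with witnesses taken from $\I$. The only divergence is your fallback of creating fresh witnesses when a target type is not realized in $\I$: this branch is never needed (every pending witness type is generated from the type induced by $\cstyle{C}$, which is realized in $\I$, so a suitable element of $(\exists \rstyle{r^-})^\I$ always exists), and as justified it would be shaky, since a fresh witness could be forced into $\cstyle{C}$ by a plain CI such as $\exists \rstyle{r^-} \sqsubseteq \cstyle{C}$ with no inverse functional path involved, so Lemma~\ref{lemma:cycling-types-are-equal-types} does not apply there.
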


The `if' direction follows notably from Examples~\ref{example-concept-elif-infinity} and \ref{example-dllitef-zero-infinity}.
The converse is a consequence of the following lemma.

\begin{lemmarep}
	\label{lemma-concept-dllitef-plus-one}
	Let $\kb$ be a $\dllitef$ KB and $\cstyle{C}$ a concept name.
	If there exists a model $\I$ of $\kb$ with $1 \leq \sizeof{\cstyle{C}^\I} < \infty$,
	then there exists a model $\J$ of $\kb$ with $\sizeof{\cstyle{C}^{\J}} = \sizeof{\cstyle{C}^\I} + 1$.
\end{lemmarep}

\begin{proof}
	Assume there exists a model $\I$ of $\kb = (\tbox, \abox)$ with $1 \leq \sizeof{\cstyle{C}^\I} < \infty$.
	In particular, from Lemma~\ref{lemma:cycle-reversion}, $\kb_\cstyle{C}$ is satisfiable.
	Consider the interpretation $\preils{\kb'}{}$, with $\kb' = (\tbox_{\cstyle{C}}, \{ \cstyle{C}(\istyle{a} ) \})$.
	Lemma~\ref{lemma:ils-and-preils} ensures $\preils{\kb'}{}$ has exactly one instance of $\cstyle{C}$.
	Furthermore, since $\I$ also has at least one instance of $\cstyle{C}$, all missing role witnesses for elements in $\preils{\kb'}{}$ can be found in $\I$.
	Formally, for each $d \in \domainof{\preils{\kb'}{}}$ and each $\rstyle{r} \in \posroles$ s.t.\ $\tbox \models \type_\I(d) \sqsubseteq \exists\rstyle{r}.\top$ and $d \notin (\exists \rstyle{r}.\top)^\I$, we can find an element $d_\rstyle{r} \in (\exists \rstyle{r^-})^\I$.
	We let $\J$ be the disjoint union of $\I$ and $\preils{\kb'}{}$ in which we add the edge $(d, d_{\rstyle{r}}) \in \rstyle{r}^\I$ for each pair $(d, \rstyle{r})$ as above.
	It is easily verified that $\J$ is the desired model of $\kb$.
\end{proof}

\begin{proofsketch}
	As for $\ELIFbot$, we consider the interpretation $\J_{\kb'}$ where $\kb' = (\tbox_{\cstyle{C}}, \{ \cstyle{C}(\istyle{a} ) \})$.
	We can then connect $\J_{\kb'}$ to any model $\I$ of $\kb$ in which there is at least one instance of $\cstyle{C}$ by finding appropriate witnesses for the pending roles.
	This is possible as $\dllitef$ does not support qualified existential restrictions, thus imposing very little constraints on the types of the required witnesses.
\end{proofsketch}

\section{Complexity of computing spectra}
\label{sec:complexity}

\begin{table*}
	\centering
	\begin{tabular}{c@{\qquad}c@{\qquad}c@{\qquad}c@{\qquad}cc@{\qquad}c}
		&
		&
		& $\EL$,
		& $\ALC$, $\ELIF$,
		&
		&
		\\
		& $\dllitecore$
		& $\dllitef$
		& $\ELIbot$, $\ELFbot$
		& $\ELIFbot$, $\ALCI$, $\ALCF^*$
		& $\ALCF$
		& $\ALCIF$
		\\[1mm]\midrule
		Concept
		& in $\FP$
		& in $\FPwithlogcalls$
		& in $\FPwithconstantcalls$~~~~~
		& $\FPwithlogcalls$\complete
		& $\FPwithlogcalls$\complete
		& $\FPwithlogcalls$\hard
		\\[1mm]
		Role
		& in $\FP$
		& in $\FPwithlogcalls$
		& ~~~~\,$\FPwithlogcalls$\complete
		& $\FPwithlogcalls$\complete
		& $\FPwithlogcalls$\hard
		& $\FPwithlogcalls$\hard
	\end{tabular}
	\caption{Worst-case complexity of $\mathsf{Spectrum}(q, \tbox)$ if $q$ is a concept (resp.\ role) cardinality query and $\tbox$ is expressed in one of the DLs on the first row. \hard stands for -hard and \complete for -complete.}
	\label{table:concept-complexity}
\end{table*}

We now tackle the problem of computing the proposed effective representation of spectra, helped by our knowledge of their possible shapes.
We focus on data complexity: for 
a fixed cardinality query $q$ and a fixed TBox $\tbox$, we study the complexity
of the $\mathsf{Spectrum}(q, \tbox)$ problem, which, given an ABox $\abox$ as input, computes the output of Problem~1 from Section~\ref{section-prelims}.

We use functional complexity classes: $\FP$ is the class of functions computable in polynomial time by a Turing machine; $\FPwithconstantcalls$ is $\FP$ with $\mathcal{O}(1)$ many queries to an $\NP$ oracle; and $\FPwithlogcalls$ is allowed $\mathcal{O}(\log(n))$ many queries to $\NP$ where $n$ is the size of the input and $p$ a polynomial.
We refer to \cite{krentel-opt-88,DBLP:journals/tcs/JennerT95} for details and recall the following inclusions: 
\(
\FP \subseteq \FPwithconstantcalls \subseteq \FPwithlogcalls.
\)

Our complexity results are summarized in Table~\ref{table:concept-complexity}.
We start with upper bounds.

\begin{theoremrep}
	\label{theorem:complexity-concept-alc}
	\label{theorem:complexity-concepts-upper-bounds}
	$\mathsf{Spectrum}(q_\cstyle{C}, \tbox)$ is in:
	\begin{itemize}
		\item $\FPwithlogcalls$ if $\tbox$ is in $\ELIFbot$, $\ALCI$ or $\ALCF$.
		\item $\FPwithconstantcalls$ if $\tbox$ is in $\ELIbot$ or $\ELFbot$.
		\item $\FP$ if $\tbox$ is in $\dllitecore$.
	\end{itemize}
\end{theoremrep}
\begin{proof}
	Proof of the theorem is given in Section~\ref{sec:upper-bounds-proof} below.
\end{proof}

The backbone of the above complexity results relies on the fact that, in all the concerned cases, the possible spectra are of the form $S \cup \llbracket M , \infty \rrbracket$ where $S \subseteq \llbracket 0, M \llbracket$ and $M$ is either $\infty$ or $M$ is polynomial w.r.t.\ data complexity.

\begin{lemmarep}
	\label{lemma:polynomial-bound-for-conccepts}
		Let $\tbox$ be an $\ELIFbot$-, $\ALCI$-, or $\ALCF$-TBox and $\cstyle{C}$ a concept name.
		There exists a polynomial $p(x)$, with coefficients computable from $\tbox$ and $\cstyle{C}$, such that $p(x) \geq 1$ for every $x \geq 0$
		and, for every KB $\kb = (\tbox, \abox)$, either
		\begin{itemize}
			\item $p(|\abox|) \notin  \spectrum{\kb}{q_\cstyle{C}}$ and $\spectrum{\kb}{q_\cstyle{C}} \subseteq \{0,\infty\}$; or
			\item $p(|\abox|) \in \spectrum{\kb}{q_\cstyle{C}}$ and $\spectrum{\kb}{q_\cstyle{C}} = S \cup \llbracket M, \infty\rrbracket$
			where $S \subseteq \llbracket 0, M \rrbracket$ and $M < p(|\abox|)$.
		\end{itemize}
		Moreover, in the latter case,
		$M {\leq} \min(\spectrum{\kb}{q_\cstyle{C}}) + p(0)$.
\end{lemmarep}

\begin{proof}

    We start with the following observation.
	\begin{itemize}
		\item[$\clubsuit$] One can compute a polynomial $p'$, with coefficients depending only on $q_\cstyle{C}$  and $\tbox$, such that for every ABox $\abox$ if there is $b \in \spectrum{\kb}{q_{\cstyle{C}}}$ such that $0 < b < \infty$ then there is a number $a_\abox \in \mathbb{N}$ such that $a_\abox,a_\abox{+}1 \in \spectrum{\kb}{q_{\cstyle{C}}}$ and $a_\abox < p(|\abox|)$.
	\end{itemize}

	For the case of $\ELIF_\bot$ this claim follows immediately from Lemma~\ref{lemma-concept-elif-plus-one},
	and the fact that, as shown in the proof of the lemma, the size of $C^{\J}$ can be bound by a polynomial $p'_E$ so that $|C^{\J}| < p'_E(|\abox|)$.
    In this case we take $a_\abox = |C^{\J}|$.

	In the cases of $\ALCI$ and $\ALCF$ we first observe that
	by Theorem 2 in \cite{sanja21:bounded-predicates} we can compute a polynomial $p'_A$ so that there is a model $\I$ of $\kb$
	with $a_{\abox} = |C^{\I}| < p'_A(|\abox|)$. Then, by Lemma~\ref{lemma-concept-alci-alcf-plus-one}, we conclude that  also $|C^{\I}|+1 \in \spectrum{\kb}{q_{\cstyle{C}}}$.

    Let polynomial $p''_E$ be obtained from polynomial $p'_E$ by changing coefficients of $p'_E$ to their absolute values and, then, adding $1$ to the free coefficient. Let $p''_A$ be a polynomial obtained from $p'_A$ in the same manner. Let $p' = p''_E + p''_A$.

    Clearly, polynomial $p'$ can be computed and its coefficients depend only on $q_\cstyle{C}$  and $\tbox$. Moreover, $p'(x) > 0$ for $x \geq 0$
    as all its coefficients are non-negative and the free coefficient is at least $2$. Finally, since $p'(x) > p'_E(x) + p'_A(x)$ for $x\geq 0$, $a_{\abox} < p'(|\abox|)$ for all ABoxes $\abox$ such that there is $b \in \spectrum{\kb}{q_{\cstyle{C}}} \setminus \{0,\infty\}$.

    Polynomial $p'$ is almost the polynomial desired in the lemma.
    Unfortunately, the value $a_{\abox}$ claimed in $\clubsuit$ depends on the ABox the KB is equipped with.
    We now strengthen $\clubsuit$ by replacing this dependence
    by the dependence on the size of the ABox.

    \begin{itemize}
        \item[$\diamondsuit$] One can compute a polynomial $p$, with coefficients depending only on $q_\cstyle{C}$  and $\tbox$, such that for every $i \in \mathbb{N}$ there is a number $0 \leq a < p(|\abox|)$ such that for every ABox $\abox$ of size $|\abox| = i$: if there is $b \in \spectrum{\kb}{q_{\cstyle{C}}}$ such that $0 < b < \infty$ then for all $c \geq a$ $c \in \spectrum{\kb}{q_{\cstyle{C}}}$.
    \end{itemize}

	To show $\diamondsuit$, it is enough to observe the following.
	\begin{itemize}
		\item[$\spadesuit$] If $a_\abox, a_\abox{+}1 \in \spectrum{\kb}{q_{\cstyle{C}}}$ then $\llbracket a_\abox(a_\abox{+}1), \infty \llbracket \subseteq \spectrum{\kb}{q_{\cstyle{C}}}$.
	\end{itemize}

	Indeed, if $\llbracket a_\abox(a_\abox+1), \infty \llbracket\ \subseteq \spectrum{\kb}{q_{\cstyle{C}}}$ then $b \in \spectrum{\kb}{q_{\cstyle{C}}}$ for all $b \geq a_\abox(a_\abox+1)$.
    Thus, it is enough to take as $a$ the maximum among $a_\abox(a_\abox+1)$ for ABoxes of size $i$ and as $p$
    the polynomial $x \mapsto p'(x)(p'(x) + 1)  + 2$.

	To prove $\spadesuit$ observe that every number $n$ from the set $\llbracket a_\abox(a_\abox+1), \infty \llbracket$ is of form $n = a_\abox(a_\abox+i) + ja_\abox + i$,
	where $j \geq 0$ and $0 \leq i  < a_\abox$.
	Thus, $n = a_\abox(a_\abox+1) + ja_\abox + i =  ja_\abox + a_\abox(a_\abox+1) - ia_\abox + i + ia_\abox= ja_\abox + a_\abox(a_\abox+1-i) + i(a_\abox+1)$.
	This implies that $n \in \spectrum{\cstyle{C}}{\kb}$, as $\spectrum{\kb}{q_{\cstyle{C}}}$ is closed under addition. This proves $\spadesuit$.

    We now claim that $p$ in $\diamondsuit$ is the polynomial claimed in Lemma~\ref{lemma:polynomial-bound-for-conccepts}.
    Clearly, coefficients of $p$ are computable and depend only on $\cstyle{C}$ and $\tbox$. It is also easy to verify that $p(x) \geq 1$
    for $x \geq 0$.
    We now verify that for every KB $\kb = (\tbox, \abox)$, either
    \begin{itemize}
        \item $p(|\abox|) \notin  \spectrum{\kb}{q_\cstyle{C}}$ and $\spectrum{\kb}{q_\cstyle{C}} \subseteq \{0,\infty\}$; or
        \item $p(|\abox|) \in \spectrum{\kb}{q_\cstyle{C}}$ and $\spectrum{\kb}{q_\cstyle{C}} = S \cup \llbracket M, \infty\rrbracket$
        where $S \subseteq \llbracket 0, M \rrbracket$ and $M < p(|\abox|)$.
    \end{itemize}
    Let us fix an ABox $\abox$ and let $\kb = (\tbox, \abox)$.
    If $\spectrum{\kb}{q_\cstyle{C}} \subseteq \{0,\infty\}$
    then, of course, $p(|\abox|) \notin \spectrum{\kb}{q_\cstyle{C}}$
    as $0 < p(|\abox|) < \infty$.
    On the other hand, if $\spectrum{\kb}{q_\cstyle{C}} \not \subseteq \{0,\infty\}$ then there is $0 < b < \infty$
    such that $b \in \spectrum{\kb}{q_\cstyle{C}}$. Thus, by $\diamondsuit$ there is $a < p(|A|)$ such that $c \in \spectrum{\kb}{q_\cstyle{C}}$ for all $c > a$.
    Hence, $p(|\abox|) \in \spectrum{\kb}{q_\cstyle{C}}$
    and $\spectrum{\kb}{q_\cstyle{C}} = S \cup \llbracket M, \infty\rrbracket$ where $S = \spectrum{\kb}{q_\cstyle{C}} \cap \llbracket 0, p(|\abox|) \rrbracket$ and $M = p(|\abox|)$.

	\smallskip
	Now we prove the last line of the lemma.
	Let $\kb_\emptyset = (\tbox, \emptyset)$.
	Since for every model $\I$ of $\kb$, $\I$ is also a model of $\kb_\emptyset$,
	$\spectrum{\kb_{\emptyset}}{q_{\cstyle{C}}} = V_0 \cup \llbracket M_0, \infty \llbracket$ with $M_0 \leq p(|\emptyset|) = p(0)$.

	Let $m = \min(\spectrum{\kb_{\emptyset}}{q_{\cstyle{C}}})$
	and $\I$ be a model of $\kb$ such that $m = |\cstyle{C}^{\I}|$ and $\I_\emptyset$ be a model of $\kb_\emptyset$ such that $\domainof{\I} \cap \domainof{\I_\emptyset} = \emptyset$.
	Then $\I \cup \I_\emptyset \models \kb$ and, thus, $m + |\cstyle{C}^{\I_\emptyset}| \in \spectrum{\kb}{q}$.
	From this we can infer that $m + \spectrum{\kb_{\emptyset}}{q_{\cstyle{C}}} \subseteq \spectrum{\kb}{q_{\cstyle{C}}}$.
	Thus, $\llbracket m + M_0, \infty \llbracket = m + \llbracket M_0, \infty \llbracket \subseteq \spectrum{\kb}{q_{\cstyle{C}}}$.
	In particular, this implies that $M \leq m + M_0 \leq m + p(0)$, proving that $M -  \min(\spectrum{\kb_{\emptyset}}{q_{\cstyle{C}}})\leq p(0)$.
\end{proof}

This allows us to present a simple and uniform description of an algorithm computing spectra.
Let $\kb = (\tbox, \abox)$ be the input KB, $q$ the input cardinality query, and $p(x)$ the polynomial from Lemma~\ref{lemma:polynomial-bound-for-conccepts}.

First the algorithm tests whether $p(|\abox|) \in \spectrum{\kb}{q_{\cstyle{C}}}$.
If not then we are in the first case of Lemma~\ref{lemma:polynomial-bound-for-conccepts}, that is
$\spectrum{\kb}{q_{\cstyle{C}}} \subseteq \{0,\infty\}$.
The algorithm now tests whether $\kb$ is satisfiable: if not, it returns $\emptyset$.
Then, it checks whether $q_{\cstyle{C}}$ is satisfiable with respect to $\kb$: if not, it returns $\{ 0 \}$ (represented in the sense of Problem~\ref{problem:compute-spectrum} by the triple $(\emptyset, 0, 0)$).
Finally, the algorithm checks whether $0 \in \spectrum{\kb}{q_{\cstyle{C}}}$.
If yes, it returns $\{0,\infty\}$ and $\{\infty\}$ otherwise (respectively represented
in the sense of Problem~\ref{problem:compute-spectrum}
by triples $(\{ 0 \}, \infty, 0)$ and $(\emptyset, \infty, 0)$).

If $p(|\abox|) \in \spectrum{\kb}{q_{\cstyle{C}}}$ then we are in the second case of Lemma~\ref{lemma:polynomial-bound-for-conccepts}, that is $M < p(|\abox|)$ and $S \subseteq \llbracket 0 , M \rrbracket$.
The algorithm first performs a binary search on the interval $\llbracket 0, p(|\abox|) \rrbracket$
to find $\min(\spectrum{\kb}{q_{\cstyle{C}}})$. In each step of the search, the
algorithm is given a number $n \in \llbracket 0, p(|\abox|) \rrbracket$ and
performs a minimality test that verifies whether $n > \min(\spectrum{\kb}{q_{\cstyle{C}}})$.
Note that once the value of $\min(\spectrum{\kb}{q_{\cstyle{C}}})$ is found, the additional remark in Lemma~\ref{lemma:polynomial-bound-for-conccepts} guarantees that $M \leq \min(\spectrum{\kb}{q_{\cstyle{C}}}) + p(0)$ and thus $S \subseteq \llbracket \min(\spectrum{\kb}{q_{\cstyle{C}}}), \min(\spectrum{\kb}{q_{\cstyle{C}}}) + p(0) \rrbracket$.
The algorithm performs membership tests on this latter interval to compute $M$ and $S$.
Finally, the algorithm returns the set $S \cup \llbracket M, \infty \rrbracket$ (represented in the sense of Problem~\ref{problem:compute-spectrum} by the triple $(S, M, 1)$).

The correctness of the algorithm follows directly from Lemma~\ref{lemma:polynomial-bound-for-conccepts}.
The exact computational complexity depends on the number and cost of satisfiability checks, membership tests, and minimality tests.
For instance, for $\ELIFbot$ the two initial satisfiability checks can by performed by $\NP$ oracles \cite{birte2008cqa-in-SHIQ}.
Similarly, the membership tests can be resolved by an $\NP$ oracle as they can be seen as instances of \emph{closed predicates} problem, see \cite{sanja21:bounded-predicates}.
The minimality tests can be performed by an $\NP$ oracle that guesses $n'{<}n$ and performs a membership~test.
Since the algorithm uses logarithmically many minimality tests to compute $\min(\spectrum{\kb}{q_{\cstyle{C}}})$ and no more than $p(0){+}1$ membership tests to compute $S$, the desired upper bound holds.

In $\dllitecore$, we can perform the satisfiability checks, the membership tests, and the minimality tests in polynomial time, see~\cite{DBLP:conf/kr/CalvaneseGLLR06} and \cite{maniere:thesis} [Theorem~51] respectively, resulting in overall polynomial running time.

The following theorem provides two lower bounds, notably establishing $\FPwithlogcalls$-completeness in several cases.
\begin{theoremrep}
	\label{theorem:complexity-lower-bounds}
	There exists an $\ELIF$ (resp.\ $\ALC$) TBox $\tbox$ such that $\mathsf{Spectrum}(q_\cstyle{C}, \tbox)$ is $\FPwithlogcalls$-hard.
\end{theoremrep}

\begin{proof}
	{\bf Concepts on $\ALC$.}
	We finish the proof presented in the main part of the paper.
	Recall we consider the concept cardinality query $q_\cstyle{C}$ and the $\ALC$ TBox $\tbox := \{ \lnot \cstyle{C} \sqsubseteq \forall \rstyle{r}. \cstyle{C} \}$.
	Given a graph $G=\langle V, E \rangle$, we construct an ABox $\abox$ consisting in assertions $\rstyle{r}(\istyle{u},\istyle{v})$ for every  $\{ u,v \} \in E$.
	We want to prove that $k$ is the maximal size of an independent set in $G$ iff $\spectrum{(\tbox, \abox)}{q_\cstyle{C}} = \llbracket \sizeof{V} - k, \infty \rrbracket$.
	First, note that every model $\I$ of $\kb := (\tbox, \abox)$ describes an independent set $U_\I$ of $G$ given by $(\lnot \cstyle{C})^\I \cap V$.
	Consequently, in every model of $\kb$, we have ${V} \setminus {U_\I} \subseteq \cstyle{C^\I}$, that is $\sizeof{C^\I} \geq \sizeof{V} - \sizeof{U_\I}$ ($\star$).
	Therefore, for the $(\Rightarrow)$ direction, we assume that $k$ is the maximal size of an independent set in $G$, and we thus get $\sizeof{C^\I} \geq \sizeof{V} - \sizeof{U_\I} \geq \sizeof{V} - k$.
	Furthermore, $\sizeof{V} - k$ can easily be realized in a model $\I_U$ describing exactly an independent set $U$ of $G$ with size $k$, and any integer greater than $\sizeof{V} - k$ can further be realized by adding as many extra instances of $\cstyle{C}$ outside of $V$.
	This proves $\spectrum{(\tbox, \abox)}{q_\cstyle{C}} = \llbracket \sizeof{V} - k, \infty \rrbracket$.
	For the $(\Leftarrow)$ direction now, assume that $k$ is not the maximal size of an independent set in $G$.
	If there is a bigger independent set $U'$, say with size $k' > k$, then the model $\I_{U'}$ realizing it satisfies $\sizeof{\cstyle{C}^{\I_{U'}}} = \sizeof{V} - k' < \sizeof{V} - k$,  thus $\sizeof{V} - k' \in \spectrum{(\tbox, \abox)}{q_\cstyle{C}}$, and therefore $\spectrum{(\tbox, \abox)}{q_\cstyle{C}} \neq \llbracket \sizeof{V} - k, \infty \rrbracket$.
	Otherwise, every independent set of $G$ has a size smaller that $k$, in which case equation ($\star$) from above concludes as it yields $\sizeof{V} - k \notin \spectrum{(\tbox, \abox)}{q_\cstyle{C}}$ and therefore $\spectrum{(\tbox, \abox)}{q_\cstyle{C}} \neq \llbracket \sizeof{V} - k, \infty \rrbracket$.
	\medskip

	{\bf Concepts on $\ELIF$.}
	Consider the concept cardinality query $q_\cstyle{C}$ and the $\ELIF$ TBox $\tbox$ containing the four following axioms:
	\[
	\cstyle{Vertex} \sqsubseteq \exists \rstyle{s}.\cstyle{C}
	\qquad
	\top \sqsubseteq\ \leq 1\ \rstyle{s^-}.\top
	\qquad
	\exists \rstyle{t}.(\cstyle{C} \sqcap \exists \rstyle{s}.\cstyle{Indep}) \sqsubseteq \cstyle{C}.
	\]
	Now, given a graph $G = (V, E)$, we construct an ABox $\abox$ containing facts:
	\begin{itemize}
		\item $\cstyle{Vertex}(v)$, $\cstyle{Indep}(i_v)$ and $\cstyle{C}(i_v)$ for each $v \in V$;
		\item $\rstyle{edge}(u, v)$ for each $\{ u, v \} \in E$;
		\item $\rstyle{t}(u, v)$ for each $u, v \in V$.
	\end{itemize}
	We let $\kb = (\tbox, \abox)$ and prove that $k$ is the maximal size of an independent set in $G$ iff $\spectrum{\kb}{q_\cstyle{C}} = \llbracket 2\sizeof{V} - k, \infty \rrbracket$, that is the triple $(S, M, \alpha) := (\emptyset, 2\sizeof{V} - k, 1)$ is an effective representation of $\spectrum{(\tbox, \abox)}{q_\cstyle{C}}$.

	The first axiom requires every element $v$ representing a vertex to have a $\rstyle{s}$ successor serving as a witness for the query concept $\cstyle{C}$.
	Note that, from the second axiom, all these witnesses must be distinct.
	Along with the auxiliary elements $i_v$, also instances of $\cstyle{C}$, this leads to, a priori, at least $2\sizeof{V}$ instances of $\cstyle{C}$ in every model of $\kb$.
	It is possible, however, for the witnesses to be chosen among the $i_v$'s, thus reducing the total number of instances by as many witnesses are chosen among those elements.
	However, due to the third axiom, this cannot possibly work for all vertices at once.
	Indeed, the third axiom guarantees that, if two adjacent vertices both chose a witness among the $i_u$'s, then all the elements $u$ will be instances of $\cstyle{C}$, enforcing at least $2\sizeof{V}$ instances of $\cstyle{C}$.
	It is now clear that we can safely use $k$ elements $i_u$'s as witnesses, leading to only $2\sizeof{V} - k$ instances of $\cstyle{C}$ iff there exists an independent set of size $k$ in $G$.
	The fact that the spectrum contains every integer above its minimum value is simply obtained by adding fresh disconnected instances of $\cstyle{C}$ to a model realizing the minimum value.

\end{proof}

We reduce from the problem of computing the maximal size of an independent set in a graph, known to be $\FP^{\NP[\log]}$-hard \cite{krentel-opt-88}.
We briefly sketch the proof for $\ALC$: consider the TBox $\tbox := \{ \lnot \cstyle{C} \sqsubseteq \forall \rstyle{r}. \cstyle{C} \}$.
Given a graph $G=\langle V, E \rangle$, we construct an ABox $\abox$ consisting in $\rstyle{r}({u},{v})$ for every  $\{ u,v \} \in E$.
Intuitively, $\lnot \cstyle{C}$ describes an independent set and we prove that $k$ is the maximal size of an independent set in $G$ iff $\spectrum{(\tbox, \abox)}{q_\cstyle{C}} = \llbracket \sizeof{V} - k, \infty \rrbracket$, that is the triple $(\emptyset, \sizeof{V} - k, 1)$ is our representation of $\spectrum{(\tbox, \abox)}{q_\cstyle{C}}$.

\section{The case of role cardinality queries}
\label{section:roles}

\begin{toappendix}

	Table~\ref{table:roles} summarizes our realizability results for role cardinality queries, echoing to Table~\ref{table:concepts} from the concept case.
	\begin{table*}
		\centering
		\begin{tabular}{ccccccccc}
			$\mathcal{L}$
			&
			&
			$\llbracket m, \infty \rrbracket$
			&
			$\emptyset$
			&
			$\{ 0 \}$
			&
			$\{ 0 \} \cup \llbracket m, \infty \rrbracket$
			&
			$\{ \infty \}$
			&
			$\{ 0, \infty \}$
			&
			$V \cup \{ \infty \}$
			\\ \midrule
			$\ALCIF$
			& \nos		& \checkmark & \checkmark & \checkmark & \checkmark & \checkmark & \checkmark & \checkmark
			\\
			$\ALCF$
			& \nos		& \checkmark & \checkmark & \checkmark & \checkmark & \nope & \nope & \checkmark
			\\
			$\ELIFbot$
			& \nope	& \checkmark & \checkmark & \checkmark & \checkmark & \checkmark & \checkmark & \nope
			\\
			$\dllitef$
			& \nos 		& \checkmark & \checkmark & \checkmark & \checkmark & \checkmark & \checkmark  & \nope
			\\
			$\ELbot$, $\ELFbot$, $\ALCI$, $\ALCF^*$
			& \nope 	& \checkmark & \checkmark & \checkmark & \checkmark & \nope & \nope & \nope
			\\
			$\dllitecore$
			& \nos 		& \checkmark & \checkmark & \checkmark & \checkmark & \nope & \nope  & \nope
			\\
			$\ELIF$
			& \nos 		& \checkmark & \checkmark & \nope & \nope & \checkmark & \nope & \nope
			\\
			$\ELF$
			& \nos 		& \checkmark & \checkmark & \nope & \nope & \nope & \nope & \nope
			\\
			$\EL$, $\ELI$
			& \nos 		& \checkmark & \nope & \nope & \nope & \nope & \nope & \nope
			\\
			\bottomrule
		\end{tabular}
		\caption{$\mathcal{L}$-role realizable subsets of $\ninfty$, where $m \in \NN$ and $V$ is any subsemigroup of $\ninfty$ containing a non-zero integer. \nos\ indicates no other shape is possible.}
		\label{table:roles}
	\end{table*}
	We also recall the simple semantical remark that connects cardinality of $\rstyle{r}^\I$ with those of $(\exists \rstyle{r})^\I$ and $(\exists \rstyle{r}^-)^\I$ in every interpretation $\I$.
	\begin{remark}
		\label{remark:role-bounded}
		In every interpretation $\I$, we have:
		\[
		\max\left(\sizeof{(\exists \rstyle{r})^\I}, \sizeof{(\exists \rstyle{r^-})^\I}\right)
		\leq
		\sizeof{\rstyle{r}^\I}
		\leq
		\sizeof{(\exists \rstyle{r})^\I} \cdot \sizeof{(\exists \rstyle{r^-})^\I}.
		\]
	\end{remark}

\end{toappendix}

In this section, we briefly mention the similar results we obtain regarding role cardinality queries, \emph{i.e.}\ CCQs with form $q_\rstyle{r} := \exists z_1 \exists z_2\ \rstyle{r}(z_1, z_2)$, where $\rstyle{r} \in \rnames$ is a role name and $z_1, z_2$ are counting variables.
Computing the spectrum of $q_\rstyle{r}$ on a KB $\kb$ thus corresponds to deciding the possible values of $\sizeof{\rstyle{r}^\I}$ across models $\I$ of $\kb$.
Every such query satisfies preconditions of Lemma~\ref{lemma-closure-under-addition} and thus
its spectrum can be represented as in Problem~\ref{problem:compute-spectrum}.
We say a set $V$ is \emph{$\mathcal{L}$-role realizable} if there is a role $\rstyle{r}$ and a $\mathcal{L}$ KB $\kb$ s.t.\ $\spectrum{\kb}{q_\rstyle{r}} = V$.

We first highlight that, if a DL of interest can express that a role is functional, then there is a strong connection between role-realizable and concept-realizable sets.
Indeed, if $\tbox \models \top \sqsubseteq\ \leq 1\ \rstyle{r}.\top$, then in every model $\I$ of $\tbox$, we have $\sizeof{\rstyle{r}^\I} = \sizeof{(\exists \rstyle{r})^\I}$.
The following is an immediate consequence.

\begin{lemmarep}
	\label{lemma:concept-realizable-gives-role-realizable}
	Let $\mathcal{L}$ be a fragment of $\ALCIF$ and $V \subseteq \ninfty$.
	If $V$ is $\mathcal{L}$-concept realizable and axioms $\top \sqsubseteq\ \leq 1\ \rstyle{r}.\top$, $\exists \rstyle{r} \sqsubseteq \cstyle{C}$ and $\cstyle{C} \sqsubseteq \exists \rstyle{r}$ are permitted in $\mathcal{L}$, then $V$ is $\mathcal{L}$-role realizable.
\end{lemmarep}

\begin{proof}
	Let $\mathcal{L}$ be a fragment of $\ALCIF$ and $V \subseteq \ninfty$.
	Assume $V$ is $\mathcal{L}$-concept realizable and axioms $\top \sqsubseteq\ \leq 1\ \rstyle{r}.\top$ and $\cstyle{C} \sqsubseteq \exists \rstyle{r}$ are permitted in $\mathcal{L}$.
	From $V$ being $\mathcal{L}$-concept realizable, there exists a concept name $\cstyle{C}$ and an $\mathcal{L}$-KB $\kb = (\tbox, \abox)$ such that $\spectrum{\kb}{q_\cstyle{C}} = V$.
	Consider a fresh role name $\rstyle{r}$ and form the KB $\kb' := (\tbox', \abox)$ where $\tbox' := \tbox \cup \{ \top \sqsubseteq\ \leq 1\ \rstyle{r}.\top, \cstyle{C} \sqsubseteq \exists \rstyle{r} \}$.
	Notice that our assumptions guarantee that $\kb'$ is an $\mathcal{L}$-KB.
	The first axiom we added in $\tbox'$ guarantees that in every model $\I$ of $\kb'$, we have $\sizeof{\rstyle{r}^\I} = \sizeof{(\exists \rstyle{r})^\I}$, while the second and third enforce $(\exists \rstyle{r})^\I = \cstyle{C}^\I$.
	Therefore, seeing every model of $\kb'$ as a model of $\kb$ by dropping the interpretation of $\rstyle{r}$ yields $\spectrum{\kb'}{q_\rstyle{r}} \subseteq \spectrum{\kb}{q_\cstyle{C}}$.
	Conversely, extending every model $\I$ of $\kb$ into a model $\I'$ of $\kb'$ by defining $\rstyle{r}^{\I'} := \{ (d, d) \mid d \in \cstyle{C}^\I \}$ yields $\spectrum{\kb}{q_\cstyle{C}} \subseteq \spectrum{\kb'}{q_\rstyle{r}}$.
	Therefore $\spectrum{\kb'}{q_\rstyle{r}} = \spectrum{\kb}{q_\cstyle{C}} = V$, and $V$ is $\mathcal{L}$-role realizable.
\end{proof}

The above notably applies to $\dllitef$ and $\ALCF$ KBs.
For $\ALCIF$ KBs and joint with Theorem~\ref{theorem-shape-concept-alcif}, we obtain:

\begin{corollaryrep}
	\label{corollary-shape-role-alcif}
	A non-trivial subset of $\ninfty$ is $\ALCIF$-role realizable iff it is a subsemigroup of $\ninfty$ containing $\infty$.
\end{corollaryrep}

\begin{proof}
	The `if' direction is the consequence of combining Theorem~\ref{theorem-shape-concept-alcif} with Lemma~\ref{lemma:concept-realizable-gives-role-realizable}.
	For the `only-if' direction: consider a non-trivial subset $V$ of $\ninfty$ that is $\ALCIF$-role realizable.
	Therefore, there exists a role name $\rstyle{r}$ and an $\ALCIF$ KB $\kb = (\tbox, \abox)$ such that $\spectrum{\kb}{q_\rstyle{r}} = V$.
	Since $V$ is non-trivial, Remark~\ref{remark:trivial-sets} ensures we can apply Lemma~\ref{lemma-closure-under-addition} on $q_\rstyle{r}$ and $\kb$.
	Therefore, $V$ is a subsemigroup of $\ninfty$ that contains $\infty$ and we are done.
\end{proof}

In the case of concept names, we established identical results for $\ALCI$ and $\ALCF$ KBs (Theorem~\ref{theorem-shape-concept-alci-alcf}).
This does not hold with a role name.
In fact, we prove that $\ALCF$ KBs realize the same sets as $\ALCIF$, except for $\{ \infty \}$ and $\{ 0, \infty \}$.

\begin{toappendix}
	Before moving to Theorems~\ref{theorem-shape-role-alcf} and \ref{theorem:shape-role-alci-elfbot}, we start by proving a consequence of Lemma~\ref{lemma:fmp-alci-alcf} for roles instead of concepts.
	\begin{lemma}
		\label{lemma:fmp-alci-alcf-roles}
		Let $\kb = (\tbox, \abox)$ be an $\ALCI$ (resp.\ $\ALCF$) KB and $\rstyle{r} \in \rnames$.
		If there exists a model $\I$ of $\kb$ with ${\rstyle{r}^\I} \neq \emptyset$,
		then there exists a model $\J$ of $\kb$ with ${\rstyle{r}^{\J}} \neq \emptyset$ and $\sizeof{\domainof{\J}} \leq \sizeof{\individuals(\abox)} + 4^{\sizeof{\tbox} + 5}$.
		\emph{A fortiori}, $1 \leq \sizeof{\rstyle{r}^{\J}} \leq (\sizeof{\individuals(\abox)} + 4^{\sizeof{\tbox} + 5})^2$.
	\end{lemma}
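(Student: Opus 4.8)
The plan is to reduce to the concept case already settled in Lemma~\ref{lemma:fmp-alci-alcf}, by introducing a fresh concept name that marks exactly the elements having an outgoing $\rstyle{r}$-edge. Concretely, I would pick a concept name $\cstyle{E} \in \cnames$ not occurring in $\kb$ and pass to the KB $\kb' := (\tbox', \abox)$ with $\tbox' := \tbox \cup \{\, \cstyle{E} \equiv \exists \rstyle{r}.\top \,\}$, \emph{i.e.}\ adding the two concept inclusions $\cstyle{E} \sqsubseteq \exists \rstyle{r}.\top$ and $\exists \rstyle{r}.\top \sqsubseteq \cstyle{E}$. These use only $\ALC$ constructors and no functionality restriction, so $\kb'$ is still an $\ALCI$ (resp.\ $\ALCF$) KB; moreover $\individuals(\abox)$ is unchanged and $\sizeof{\tbox'} \le \sizeof{\tbox} + 5$, which is precisely what will turn the $4^{\sizeof{\tbox}}$ bound of Lemma~\ref{lemma:fmp-alci-alcf} into the claimed $4^{\sizeof{\tbox}+5}$.

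First I would lift the hypothesis to $\kb'$: given a model $\I$ of $\kb$ with $\rstyle{r}^\I \ne \emptyset$, extend it to $\I'$ over the same domain by setting $\cstyle{E}^{\I'} := (\exists \rstyle{r}.\top)^\I$ and leaving everything else unchanged. Then $\I' \models \kb'$, and since $\rstyle{r}^\I$ is non-empty some element has an $\rstyle{r}$-successor in $\I$, so $\cstyle{E}^{\I'} = (\exists \rstyle{r}.\top)^\I \ne \emptyset$. Applying Lemma~\ref{lemma:fmp-alci-alcf} to $\kb'$ and the concept name $\cstyle{E}$ then yields a model $\J$ of $\kb'$ with $\cstyle{E}^{\J} \ne \emptyset$ and $\sizeof{\domainof{\J}} \le \sizeof{\individuals(\abox)} + 4^{\sizeof{\tbox'}} \le \sizeof{\individuals(\abox)} + 4^{\sizeof{\tbox}+5}$. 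Because $\J$ satisfies $\cstyle{E} \sqsubseteq \exists \rstyle{r}.\top$, we get $\emptyset \ne \cstyle{E}^{\J} \subseteq (\exists \rstyle{r}.\top)^{\J}$ and hence $\rstyle{r}^{\J} \ne \emptyset$; and since $\cstyle{E}$ is fresh, the reduct of $\J$ to the signature of $\kb$ is a model of $\kb$ with the same domain and the same, non-empty, interpretation of $\rstyle{r}$. The \emph{a fortiori} statement is then immediate from $\rstyle{r}^{\J} \subseteq \domainof{\J} \times \domainof{\J}$, giving $1 \le \sizeof{\rstyle{r}^{\J}} \le \sizeof{\domainof{\J}}^2 \le \left(\sizeof{\individuals(\abox)} + 4^{\sizeof{\tbox}+5}\right)^2$.

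I do not expect any serious obstacle: the argument is a black-box use of Lemma~\ref{lemma:fmp-alci-alcf}, and the only thing to handle with any care is the bookkeeping, namely checking that the auxiliary axiom $\cstyle{E} \equiv \exists \rstyle{r}.\top$ keeps us inside $\ALCI$ (resp.\ $\ALCF$) and inflates $\sizeof{\tbox}$ by only the constant accounted for in the exponent $4^{\sizeof{\tbox}+5}$ (equivalently, one tracks the constant number of new subconcepts that enter the closure used in the proof of Lemma~\ref{lemma:fmp-alci-alcf}).
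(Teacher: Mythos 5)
Your proof is correct and follows essentially the same route as the paper's: both introduce a fresh concept name axiomatized against $\exists \rstyle{r}.\top$, lift the given model to the enlarged KB, apply Lemma~\ref{lemma:fmp-alci-alcf} to that KB and the fresh concept, and read off a non-empty $\rstyle{r}^{\J}$ from the non-empty interpretation of the fresh concept before dropping it again. The only (immaterial) difference is that the paper adds the single inclusion $\cstyle{C} \sqsubseteq \exists \rstyle{r}$, which keeps the $\sizeof{\tbox} + 5$ bookkeeping a bit more comfortable than your two-inclusion definition $\cstyle{E} \equiv \exists \rstyle{r}.\top$, whose second half is not actually used anywhere in the argument.
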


	\begin{proof}
		Let $\kb = (\tbox, \abox)$ be an $\ALCI$ (resp.\ $\ALCF$) KB and $\rstyle{r} \in \rnames$.
		Assume there exists a model $\I$ of $\kb$ with ${\rstyle{r}^\I} \neq \emptyset$.
		Consider a fresh concept $\cstyle{C}$ and extend $\kb$ as $\kb' := (\tbox', \abox)$ where $\tbox' := \tbox \cup \{ \cstyle{C} \sqsubseteq \exists \rstyle{r} \}$.
		Note that $\kb'$ is an $\ALCI$ (resp.\ $\ALCF$) KB and that $\sizeof{\tbox'} \leq \sizeof{\tbox} + 5$.
		We can extend $\I$ into a model of $\kb'$ by setting $\cstyle{C}^{\I'} = (\exists \rstyle{r})^\I$.
		In particular, we can now apply Lemma~\ref{lemma:fmp-alci-alcf} with $\kb'$, $\cstyle{C}$ and model $\I'$.
		We thus obtain a model $\J'$ of $\kb'$ with $\sizeof{\domainof{\J}} \leq \sizeof{\individuals(\abox)} + 4^{\sizeof{\tbox}}$ and $\cstyle{C}^{\J'} \neq  \emptyset$.
		Since $\J'$ is a model of $\kb'$, we have $\cstyle{C}^{\J'} \subseteq (\exists \rstyle{r})^{\J'}$ and thus $(\exists \rstyle{r})^{\J'} \neq \emptyset$.
		Therefore, from the semantics of $\exists \rstyle{r}$, we also have $\rstyle{r}^{\J'} \neq \emptyset$.
		Restricting $\J'$ to a model $\J$ of $\kb$ by dropping the interpretation of $\cstyle{C}$, we obtain the desired model.
	\end{proof}
\end{toappendix}

\begin{theoremrep}
	\label{theorem-shape-role-alcf}
	A non-trivial subset of $\ninfty$ is $\ALCF$-role realizable iff it is a subsemigroup of $\ninfty$ containing $\infty$ and at least a non-zero natural.
\end{theoremrep}

\begin{proof}

	We start with the `only-if' direction: consider a non-trivial subset $V$ of $\ninfty$ that is $\ALCF$-role realizable.
	Therefore, there exists a role name $\rstyle{r}$ and an $\ALCF$ KB $\kb = (\tbox, \abox)$ such that $\spectrum{\kb}{q_\rstyle{r}} = V$.
	Since $V$ is non-trivial, Remark~\ref{remark:trivial-sets} ensures we can apply Lemma~\ref{lemma-closure-under-addition} on $q_\rstyle{r}$ and $\kb$.
	Therefore, $V$ is a subsemigroup of $\ninfty$ that contains $\infty$.
	It remains to verify that is also contains a non-zero integer.
	Since $\spectrum{\kb}{q_\rstyle{r}} = V$ and $V$ contains $\infty$, there exists a model $\I$ of $\kb$ with $\sizeof{\rstyle{r}^\I} = \infty$.
	With this model at hand, we apply Lemma~\ref{lemma:fmp-alci-alcf-roles} and obtain a model $\J$ of $\kb$ in which $\rstyle{r}^\J$ is finite and non-empty.
	Therefore $\spectrum{\kb}{q_\rstyle{r}}$ contains a non-zero integer and so does $V$ since $\spectrum{\kb}{q_\rstyle{r}} = V$.

	We now prove the `if' direction.
	Let $V$ be such a subsemigroup of $\ninfty$ and $\alpha_1, \dots, \alpha_n$ its generators that are neither $0$ nor $\infty$ (by assumption, $n \geq 1$).
	We build an $\ALCF$ TBox $\tbox$ s.t.\ $\spectrum{(\tbox, \emptyset)}{q_\rstyle{r}} = V$ containing axioms:
	\[
	\begin{array}{c}
		\exists \rstyle{r}.\top \sqsubseteq \bigsqcup_{i = 1}^n \cstyle{A_i}
		\qquad \quad
		\cstyle{A_{i}} \sqcap \cstyle{A_{i'}} \sqsubseteq \bot
		\medskip\\
		\cstyle{A_i} \sqsubseteq \exists \rstyle{r}.\cstyle{B_{i, j}}
		\qquad\quad
		\top \sqsubseteq\ \leq 1\ \rstyle{r}.\cstyle{B_{i, j}}
		\qquad\quad
		\exists \rstyle{r}.\cstyle{B_{i, j}} \sqsubseteq \cstyle{A_i}
		\medskip\\
		\top \sqsubseteq \bigsqcup_{i = 1}^n \bigsqcup_{j = 1}^{\alpha_i} \cstyle{B_{i, j}}
		\qquad\quad
		\cstyle{B_{i, j}} \sqcap \cstyle{B_{k, \ell}} \sqsubseteq \bot
	\end{array}
	\]
	where $i, i', k \in \llbracket 1, n \rrbracket$, $j \in \llbracket 1, \alpha_i \rrbracket$, $\ell \in \llbracket 1, \alpha_k \rrbracket$ with $i \neq i'$ and $(i, j) \neq (k, \ell)$.
	If $0 \notin V$, we also add:
	\(
	\lnot \exists \rstyle{r}.\top \sqsubseteq \exists \rstyle{s}.(\exists \rstyle{r}.\top)
	\).
\end{proof}

To establish the above, we strongly rely on qualified functional dependencies, \emph{i.e.}\ axioms $\cstyle{B_1} \sqsubseteq\ \leq 1\ \rstyle{r}.\cstyle{B_2}$ where $\cstyle{B_1}, \cstyle{B_2}$ are not just $\top$.
As $\ALCIF$ is sometimes
defined to only support
unqualified functionality, \emph{i.e.}\ only $\cstyle{B_1} = \cstyle{B_2} = \top$,
we also treat this fragment,

here denoted $\ALCIF^*$.

\begin{theoremrep}
	\label{theorem:shape-role-alci-elfbot}
	\label{theorem-shape-role-elifbot}
	If a non-trivial subset of $\ninfty$ is $\ALCI$- (resp.\ $\ALCF^*$-) role realizable, then it has shape $S \cup \llbracket M, \infty \rrbracket$ for some $M \in \NN$ and $S \subseteq \llbracket 0, M \rrbracket$.
	In the case of $\ELIFbot$, shapes $\{ \infty \}$ and $\{ 0, \infty \}$ are also permitted.
\end{theoremrep}

\begin{toappendix}

	The proof of this theorem for $\ALCI$ and $\ALCF^*$ essentially follows the proof of Theorem~\ref{theorem-shape-concept-alci-alcf}.
	We already proved Lemma~\ref{lemma:fmp-alci-alcf-roles}, echoing to Lemma~\ref{lemma:fmp-alci-alcf} and it only remains to prove a statement echoing to Lemma~\ref{lemma-concept-alci-alcf-plus-one}.
	In the present case, however, we cannot achieve a `$+1$' starting from any model, but can prove that every non-trivial $\ALCI$ (resp.\ $\ALCF^*$) role-realizable set contains two consecutive integers, which is sufficient to our purpose (see for example the proof of Theorem~\ref{theorem-shape-concept-elifbot}).

	\begin{lemma}
		\label{lemma:ultimately-one-role-alci}
		Let $\kb$ be an $\ALCI$ (resp.\ $\ALCF^*$) KB and $\rstyle{r}$ a role name.
		If there exists a model $\I$ of $\kb$ with $1 \leq \sizeof{\rstyle{r}^\I} < \infty$,
		then there exists a model $\J$ of $\kb$ with $\sizeof{\rstyle{r}^{\J}} = 2\sizeof{\rstyle{r}^\I} + 1$.
	\end{lemma}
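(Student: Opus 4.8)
The plan is to assemble $\J$ in two moves: first take a disjoint copy of $\I$ to reach $2\sizeof{\rstyle{r}^\I}$ many $\rstyle{r}$-edges, then insert exactly one further $\rstyle{r}$-edge in a way that can be certified by a bounded morphism (p-morphism) onto $\I$. Concretely, fix an $\rstyle{r}$-edge $(u,v)\in\rstyle{r}^\I$ (it exists since $\sizeof{\rstyle{r}^\I}\geq 1$) and let $\I'$ be a fresh isomorphic copy of $\I$ with individuals reinterpreted as in $\I$, exactly as in the $\uplus$ construction from the proof of Lemma~\ref{lemma-closure-under-addition}; write $d'$ for the copy of $d$. The disjoint union $\I\uplus\I'$ is already a model of $\kb$ (disjoint unions preserve $\ALCI$ concept inclusions and unqualified functionality), with $\sizeof{\rstyle{r}^{\I\uplus\I'}}=2\sizeof{\rstyle{r}^\I}$. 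I would then split on whether $\tbox\models\top\sqsubseteq\ \leq 1\ \rstyle{r}.\top$.

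If $\rstyle{r}$ is not entailed to be functional (in particular always when $\kb$ is an $\ALCI$ KB, which has no functionality axioms), let $\J$ be $\I\uplus\I'$ together with the single new edge $(u,v')\in\rstyle{r}^\J$. Since $u\in\domainof{\I}$ and $v'\in\domainof{\I'}$ this edge is genuinely new, so $\sizeof{\rstyle{r}^\J}=2\sizeof{\rstyle{r}^\I}+1$. To see $\J\models\kb$, consider the map $\pi\colon\domainof{\J}\to\domainof{\I}$ that is the identity on the $\I$-part and sends each $d'$ to $d$. I would check that $\pi$ is a surjective bounded morphism onto $\I$ for the $\ALCI$ signature: it preserves and reflects concept names; the ``back'' conditions for $\rstyle{r}$, for $\rstyle{r}^-$, and for all other roles are immediate because every edge of $\I$ occurs in both copies inside $\J$; and the only nonroutine ``forth'' obligation is the new edge, for which we need $(\pi(u),\pi(v'))=(u,v)\in\rstyle{r}^\I$ — which holds precisely by our choice of $(u,v)$. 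Bounded morphisms preserve and reflect $\ALCI$-concept membership, so $\cstyle{E}^\J=\pi^{-1}(\cstyle{E}^\I)$ for every concept $\cstyle{E}$; hence every CI of $\tbox$ holds in $\J$, every functionality axiom (which in this case concerns a role name $\neq\rstyle{r}$) holds because those role interpretations are untouched, and $\abox$ is satisfied because all assertions live on the unchanged $\I$-part.

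If $\kb$ is an $\ALCF^*$ KB with $\tbox\models\top\sqsubseteq\ \leq 1\ \rstyle{r}.\top$, adding a second $\rstyle{r}$-successor to $u$ is illegal, so instead I would adjoin to $\I\uplus\I'$ a single fresh element $w$ whose concept-name memberships mirror those of $u$ and whose outgoing role edges mirror those of $u$ into the $\I$-part (so $w$'s unique $\rstyle{r}$-successor is $v$), leaving $w$ with no incoming edges. Again $\sizeof{\rstyle{r}^\J}=2\sizeof{\rstyle{r}^\I}+1$. Modelhood follows from the map $g\colon\domainof{\J}\to\domainof{\I}$ extending $\pi$ by $g(w)=u$: since $\ALCF^*$ has no inverse roles, a bounded morphism only needs the forward and backward conditions on outgoing edges, and these hold by construction of $w$; functionality axioms are preserved since $w$'s out-degrees equal those of $u$ and no other element changes out-degree; and $\abox$ is untouched. (This variant would fail for $\ALCI$, where the missing incoming edges of $w$ break the back-condition for $\rstyle{r}^-$ — which is exactly why $\ALCI$ is routed through the previous case, and why the statement settles for $2n{+}1$ rather than $n{+}1$.)

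The main obstacle is the bounded-morphism verification, i.e.\ making sure the single added edge does not create spurious concept memberships; this is what forces the disjoint-copy trick in the first case, since adding an $\rstyle{r}$-edge from $u$ to a fresh clone of $v$ inside a single copy of $\I$ would force us to also copy all of $v$'s incident edges and would no longer change $\sizeof{\rstyle{r}^\I}$ by exactly one. A minor secondary point to get right is the bookkeeping of ABox individuals in $\I\uplus\I'$ (only one copy of each element of $\individuals(\abox)$ is retained, as in Lemma~\ref{lemma-closure-under-addition}), so that $\abox$ remains satisfied after the surgery.
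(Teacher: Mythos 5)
Your proof is correct and takes essentially the same route as the paper: the paper's argument likewise duplicates $\I$ and adds the single cross-copy edge $(u,v')$, leaving modelhood as ``trivial to verify'' where you supply the bounded-morphism justification. The only divergence is the functional-$\rstyle{r}$ subcase of $\ALCF^*$, which the paper dispatches by reducing to the concept-level `$+1$' lemma (Lemma~\ref{lemma-concept-alci-alcf-plus-one}) via $\sizeof{\rstyle{r}^\I}=\sizeof{(\exists\rstyle{r})^\I}$ (and then closure under addition), whereas you inline essentially that same fresh-element construction on top of the disjoint union to land exactly on $2\sizeof{\rstyle{r}^\I}+1$.
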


	\begin{proof}
		For an $\ALCF^*$ KB $\kb$, s.t.\ $\kb \models \top \sqsubseteq\ \leq 1\ \rstyle{r}.\top$, we have $\sizeof{\rstyle{r}^\I} = \sizeof{(\exists \rstyle{r})^\I}$ in every model and we can thus rely on Lemma~\ref{lemma-concept-alci-alcf-plus-one}.
		For other $\ALCF^*$ KBs and for $\ALCI$ KBs, we do the following.
		Assume there exists a model $\I$ of $\kb$ with $1 \leq \sizeof{\rstyle{r}^\I} < \infty$.
		In particular, $\rstyle{r}^\I$ is non-empty so there is some $(u, v) \in \rstyle{r}^\I$.
		We obtain $\J$ by duplicating the model $\I$ and adding one $\rstyle{r}$-edge between $u$ and the copy of $v$.
		More formally, we set $\domainof{\J} := \domainof{\I} \cup \{ e' \mid e \in \domainof{\I} \}$ where $e'$ is a fresh element.
		The interpretation $\J$ is now defined for each concept name $\cstyle{A}$ and role name $\rstyle{p}$ as:
		\begin{align*}
			\cstyle{A}^\J & := \cstyle{A}^\I \cup \{ e' \mid e \in \cstyle{A}^\I \}
			\\
			\rstyle{p}^\J & := \, \rstyle{p}^\I \cup \{ (e_1', e_2') \mid (e_1, e_2) \in \rstyle{p}^\I \} \cup \{ (u, v') \}
		\end{align*}
		It is trivial to verify that $\J$ is a model of $\kb$ and that $\sizeof{\rstyle{r}^{\J}} = 2\sizeof{\rstyle{r}^\I} + 1$ as desired.
	\end{proof}

	For $\ELIFbot$, we proceed as follows.

		Notice that $\{ \infty \}$ and $\{ 0, \infty \}$ are indeed realizable, from Lemma~\ref{lemma:concept-realizable-gives-role-realizable} joint with Examples~\ref{example-concept-elif-infinity} and \ref{example-dllitef-zero-infinity}.
		Now, assume we have a non-trivial subset $V$ of $\ninfty$ that is $\ELIFbot$-role realizable and that is neither $\{ \infty \}$ nor $\{ 0, \infty \}$.
		In particular, we have an $\ELIFbot$ KB $\kb = (\tbox, \abox)$ and a role name $\rstyle{r}$ s.t.\ $V = \spectrum{\kb}{q_\rstyle{r}}$.
		If $\tbox \models \top \sqsubseteq \ \leq 1\ \rstyle{r}.\top$ or $\tbox \models \top \sqsubseteq \ \leq 1\ \rstyle{r^-}.\top$, then $\spectrum{\kb}{q_\rstyle{r}} = \spectrum{\kb}{q_{\exists \rstyle{r}}}$ or $\spectrum{\kb}{q_\rstyle{r}} = \spectrum{\kb}{q_{\exists \rstyle{r^-}}}$, and we can directly conclude with Theorem~\ref{theorem-shape-concept-elifbot}.
		We can thus assume $(\dagger)$: we have $\tbox \not\models \top \sqsubseteq \ \leq 1\ \rstyle{r}.\top$ and $\tbox \not\models \top \sqsubseteq \ \leq 1\ \rstyle{r^-}.\top$ .

		Now, since $V$ is non-trivial and neither $\{ \infty \}$ nor $\{ 0, \infty \}$, it contains some $n \in \NN$, that is $\sizeof{\rstyle{r}^\I} = n$ for some model $\I$ of $\kb$.
		Notice it implies that both $(\exists \rstyle{r})^\I$ and $(\exists \rstyle{r^-})^\I$ are also finite
		by Remark~\ref{remark:role-bounded}.
		We define a TBox $\tbox_\rstyle{r}$ by successively reversing the $\exists \rstyle{r}$- and $\exists \rstyle{r^-}$-generating cycles.
		Formally, we set $\tbox_{0} := \tbox$ and $\tbox_{k+1} := ((\tbox_{k})_{\exists \rstyle{r}})_{\exists \rstyle{r^-}}$ for all $k \geq 0$.
		There are only finitely many possible axioms, thus there exists $K \geq 0$ s.t.\ $\tbox_{K+1} = \tbox_K$ and we set $\tbox_\rstyle{r} := \tbox_{K}$.
		It is readily verified that $\I$ is a model of $\tbox_\rstyle{r}$ and that both $\exists \rstyle{r}$ and $\exists \rstyle{r^-}$ are safe in $\tbox_\rstyle{r}$.

		Therefore, by applying Lemma~\ref{lemma:ils}, we obtain two models $\I_1 := \ils{(\tbox_\rstyle{r}, \{ (\exists \rstyle{r})(\istyle{a})\})}{}$ and $\I_2 := \ils{(\tbox_\rstyle{r}, \{ (\exists \rstyle{r^-})(\istyle{b})\})}{}$ in which both the interpretations of $\exists \rstyle{r}$ and $\exists \rstyle{r^-}$ are finite.
		In particular, using Remark~\ref{remark:role-bounded}, $\rstyle{r}^{\I_1}$ and $\rstyle{r}^{\I_2}$ are also finite.
		We now form the disjoint union $\J := \I \cup \I_1 \cup \I_2$.
		It is easily verified that $\J$ is a model of $\kb$ and that ${\rstyle{r}^\J} =: m$ is finite.
		Consider the interpretation $\J'$ obtained from $\J$ by adding $(\istyle{a}, \istyle{b})$ to $\rstyle{r}^\J$.
		From $(\dagger)$, this cannot violate any functionality constraints, thus $\J'$ is a model of $\kb$.
		Since $\J'$ has exactly $m + 1$ instances of $\rstyle{r}$, we can finish the proof of Theorem~\ref{theorem-shape-role-elifbot} by setting $M = m (m+1)$ and $S = \llbracket 0, M \rrbracket \cap \spectrum{\kb}{q_\rstyle{r}}$.

\end{toappendix}

A key ingredient is to reuse techniques from Section~\ref{section:concepts} on concept cardinality queries, notably for $\ELIFbot$ KBs, is the observation that in
every interpretation $\I$, we have:
\begin{center}
	\(
	\max\left(\sizeof{(\exists \rstyle{r})^\I}, \sizeof{(\exists \rstyle{r^-})^\I}\right)
	\leq
	\sizeof{\rstyle{r}^\I}
	\leq
	\sizeof{(\exists \rstyle{r})^\I} \cdot \sizeof{(\exists \rstyle{r^-})^\I}.
	\)
\end{center}%
Moreover, note that Theorem~\ref{theorem:shape-role-alci-elfbot} is not a complete characterization.
Indeed, as for concept cardinality queries on $\ELIFbot$ KBs (see Example~\ref{example-bad-concept}), we exhibit a simple setting in which the spectrum contains a non-trivial part $S$, already for an $\EL_\bot$ TBox and the empty ABox.

\begin{example}
	\label{example-bad-role}
	$\tbox = \{ \top \sqsubseteq \exists \rstyle{r}.\cstyle{A_1},
	\top \sqsubseteq \exists \rstyle{r}.\cstyle{A_2},
	\cstyle{A_1} \sqcap \cstyle{A_2} \sqsubseteq \bot \}$ is an $\ELbot$ TBox and we have $\spectrum{(\tbox, \emptyset)}{q_\rstyle{r}} = \{ 4 \} \cup \llbracket 6, \infty \rrbracket$.
\end{example}

In the remaining fragments of $\ALCIF$, our results mirror those of the concept cardinality case, as summarized by the following two theorems echoing Theorems~\ref{theorem-shape-concept-eli-elf-elif} and \ref{theorem-shape-concept-dllitef}.

\begin{theoremrep}
	\label{theorem-shape-role-eli-elf-elif}
	A non-trivial subset of $\ninfty$ is $\ELI$- (resp.\ $\ELF$-) role realizable iff it has shape $\llbracket M, \infty \rrbracket$ for some $M \in \NN$.
	For $\ELIF$, the shape $\{ \infty \}$ is also permitted.
\end{theoremrep}%

\begin{proof}
	The proof of the above follows the one of Theorem~\ref{theorem-shape-concept-eli-elf-elif}.
	Notably, the proof of Lemma~\ref{lemma-concept-elif-plus-one} still gives a fitting construction.
\end{proof}

\begin{theoremrep}
	\label{theorem-shape-role-dllitef}
	A non-trivial subset of $\ninfty$ is $\dllitef$-role realizable iff it has shape $\{ \infty \}$, $\{ 0, \infty \}$, $\{ 0 \} \cup \llbracket M, \infty \rrbracket$ or $\llbracket M, \infty \rrbracket$ for some $M \in \NN$.
	The same holds for $\dllitecore$ but without shapes $\{ \infty \}$ and $\{ 0, \infty \}$.
\end{theoremrep}

\begin{toappendix}

	The `if' direction for $\dllitef$ follow from Lemma~\ref{lemma:concept-realizable-gives-role-realizable} joint with Theorem~\ref{theorem-shape-concept-dllitef}.
	For $\dllitecore$, it is proved by the following example, echoing to Example~\ref{example:concept-one-from-min}
	\begin{example}
		\label{example:role-one-from-min}
		Let $m \in \NN$.
		Consider the $\ELbot$ TBox $\tbox$ that contains the following axioms:
		\[
		\begin{array}{c}
			\exists \rstyle{r} \sqsubseteq \exists \rstyle{r}_k.\cstyle{A}_k
			\qquad
			\cstyle{A}_k \sqsubseteq \exists \rstyle{r}
			\qquad
			(1 \leq k \leq M)
			\smallskip \\
			\cstyle{A}_i \sqcap \cstyle{A}_j \sqsubseteq \bot
			\qquad
			(1 \leq i < j \leq M)
		\end{array}
		\]
		It is immediate to verify that $\spectrum{(\tbox, \emptyset)}{q_\rstyle{r}} = \{ 0 \} \cup \llbracket M, \infty \rrbracket$ while $\spectrum{(\tbox, \{ \rstyle{r}(\istyle{a}, \rstyle{b}) \})}{q_\rstyle{r}} = \llbracket M, \infty \rrbracket$.
		By replacing each axiom $\exists \rstyle{r}.\top \sqsubseteq \exists \rstyle{r}_k.\cstyle{A_k}$ in $\tbox$ by the two axioms $\exists \rstyle{r} \sqsubseteq \exists \rstyle{r}_k$ and $\exists \rstyle{r_k^-} \sqsubseteq \cstyle{A}_k$, we obtain the same result for a $\dllitecore$ KB.
	\end{example}

	The converse is obtained by the construction below (Lemma~\ref{lemma:role-dllitef-plus-one}), echoing Lemma~\ref{lemma-concept-dllitef-plus-one} from the case of concept names.
	An additional ingredient is the observation that, if the role of interest is neither functional nor inverse functional in the KB (in which cases it reduces to the case of concept names), then it cannot occur along an IFP, giving us more flexibility to construct well-behaved models.

	\begin{lemmarep}
		\label{lemma:role-dllitef-plus-one}
		Let $\kb$ be an $\dllitef$ KB and $\rstyle{r}$ a role name.
		If there exists a model $\I$ of $\kb$ with $1 \leq \sizeof{\rstyle{r}^\I} < \infty$,
		then there exists a model $\J$ of $\kb$ with $\sizeof{\rstyle{r}^{\J}} = \sizeof{\rstyle{r}^\I} + 1$.
	\end{lemmarep}

	\begin{proof}
		First note that if $\tbox \models \top \sqsubseteq \ \leq 1\ \rstyle{r}.\top$ or $\tbox \models \top \sqsubseteq \ \leq 1\ \rstyle{r^-}.\top$, then $\spectrum{\kb}{q_\rstyle{r}} = \spectrum{\kb}{q_{\exists \rstyle{r}}}$ or $\spectrum{\kb}{q_\rstyle{r}} = \spectrum{\kb}{q_{\exists \rstyle{r^-}}}$, and we can directly conclude with Theorem~\ref{theorem-shape-concept-dllitef}.
		Otherwise, let $\I$ be a model of $\kb$ with $1 \leq \sizeof{\rstyle{r}^\I} < \infty$.
		From Lemma~\ref{lemma:cycle-reversion}, $\I$ is a model of $\tbox_\rstyle{r}$ and by construction both $\exists \rstyle{r}$ and $\exists \rstyle{r^-}$ are safe concepts in $\tbox_\rstyle{r}$.

		Consider the interpretation $\preils{\kb_1}{}$ obtained by applying the construction from Section~\ref{subsection:concept-elifbot} on $\kb_1 := (\tbox_\rstyle{r}, \{ (\exists \rstyle{r})(\istyle{a})\})$.
		From Lemma~\ref{lemma:ils-and-preils}, we know $(\exists \rstyle{r})^{\J_{\kb_1}} = \{ \istyle{a} \}$.
		If $(\exists \rstyle{r^-})^{\J_{\kb_1}} = \emptyset$ also holds, then we can form the disjoint union $\J_1 := \I \cup \preils{\kb_1}{}$ and find a witness $d_\rstyle{s} \in (\exists \rstyle{s}^-)^\I$ for each $d \in \domainof{\preils{\kb_1}{}}$ and each $\rstyle{s} \in \posroles$ s.t.\  $\tbox \models \type_{\J_1}(d) \sqsubseteq \exists\rstyle{s}.\top$ and $d \notin (\exists \rstyle{s}.\top)^{\J_1}$.
		Adding the pair $(d, d_\rstyle{s})$ to $\rstyle{s}^{\J_1}$ for each such $d$ and $\rstyle{s}$, we obtain the desired model of $\kb$ with exactly one extra instance of $\rstyle{r}$ being $(\istyle{a}, d_\rstyle{r})$.
		Otherwise we have $(\exists \rstyle{r^-})^{\J_{\kb_1}} \neq \emptyset$.
		It follows that there must exists an IFP $\mathcal{P}_1$ from $\type_{\preils{\kb_1}{}}(\istyle{a})$ to some type $t$ containing $\exists \rstyle{r}^-$.

		Similarly, consider the interpretation $\preils{\kb_2}{}$ obtained by applying the construction from Section~\ref{subsection:concept-elifbot} on $\kb_2 := (\tbox_\rstyle{r}, \{ (\exists \rstyle{r^-})(\istyle{b})\})$.
		If $(\exists \rstyle{r})^{\J_{\kb_2}} = \emptyset$, we can conclude by constructing an interpretation $\J_2$ as we constructed $\J_1$.
		Otherwise, we have another IFP $\mathcal{P}_2$ from $\type_{\preils{\kb_2}{}}(\istyle{b})$ to some type $t'$ containing $\exists \rstyle{r}$.
		Since $\type_{\preils{\kb_2}{}}(\istyle{b})$ is the type induced by $\exists \rstyle{r}^-$ and that $t$ contains $\exists \rstyle{r^-}$, we have $\type_{\preils{\kb_2}{}}(\istyle{b}) \subseteq t$.
		We can thus concatenate $\mathcal{P}_1$ and $\mathcal{P}_2$ to obtain an IFP from $\type_{\preils{\kb_1}{}}(\istyle{a})$ to $t'$.
		Again, since $\type_{\preils{\kb_1}{}}(\istyle{a})$ is the type induced by $\exists \rstyle{r}$ and that $t'$ contains $\exists \rstyle{r}$, we have $\type_{\preils{\kb_1}{}}(\istyle{a}) \subseteq t'$.
		Lemma~\ref{lemma:cycling-types-are-equal-types} thus applies and gives $\type_{\preils{\kb_1}{}}(\istyle{a}) = t'$.
		We can thus concatenate $\mathcal{P}_2$ and $\mathcal{P}_1$ to obtain an IFP from $\type_{\preils{\kb_2}{}}(\istyle{b})$ to $t$, and Lemma~\ref{lemma:cycling-types-are-equal-types} applies again and ensures $\type_{\preils{\kb_2}{}}(\istyle{b}) = t$.
		This notably guarantees that all instances of $\exists \rstyle{r^-}$ in $\preils{\kb_1}{}$ have the same type $\type_{\preils{\kb_2}{}}(\istyle{b})$.
		Furthermore, notice the concatenation of $\mathcal{P}_1$ and $\mathcal{P}_2$ forms an $\exists \rstyle{r}$-generating cycle, which has thus been reversed in $\tbox_\rstyle{r}$.
		In particular, it guarantees $\type_{\preils{\kb_1}{}}(\istyle{a})$ and $\type_{\preils{\kb_2}{}}(\istyle{b})$ are in the same type class $P$.
		Arguing as in the proof of Lemma~\ref{lemma-concept-dllitef-plus-one}, only one instance of each type of $P$ is present in $\preils{\kb_1}{}$.
		It follows that $(\exists \rstyle{r^-})^{\preils{\kb_1}{}} = \{ e \}$ for some $e \in \domainof{\preils{\kb_1}{}}$ (note it is possible that $e = \istyle{a}$).

		To conclude, it now suffices to consider again the disjoint union $\J_1 := \I \cup \preils{\kb_1}{}$, in which, this time, we add the pair $(\istyle{a}, e)$ to $\rstyle{r}^{\J_1}$ and pairs $(d, d_\rstyle{s})$ to $\rstyle{s}^{\J_1}$ for each $d \in \domainof{\preils{\kb_1}{}}$ and each $\rstyle{s} \in \posroles \setminus \{ \rstyle{r}, \rstyle{r^-} \}$ s.t.\  $\tbox \models \type_{\J_1}(d) \sqsubseteq \exists\rstyle{s}.\top$ and $d \notin (\exists \rstyle{s}.\top)^{\J_1}$.
		It gives the desired model of $\kb$ with exactly one extra instance of $\rstyle{r}$ being $(\istyle{a}, e)$.

	\end{proof}

\end{toappendix}

Based on these results, we classify 
the complexity of computing the proposed representation.
Our complexity results also appear in Table~\ref{table:concept-complexity}.
For the upper bounds, we follow the same approach as presented 
in Section~\ref{sec:complexity} for concept cardinality queries, and obtain the following:

\begin{theoremrep}
	\label{theorem:complexity-roles-upper-bounds}
	$\mathsf{Spectrum}(q_\rstyle{r}, \tbox)$ is in:
	\begin{itemize}
		\item $\FPwithlogcalls$ if $\tbox$ is in $\ELIFbot$, $\ALCI$ or $\ALCF^*$.
		\smallskip
		\item $\FP$ if $\tbox$ is in $\dllitecore$.
	\end{itemize}
\end{theoremrep}

\begin{toappendix}

	The proof of Theorem~\ref{theorem:complexity-roles-upper-bounds} is given in Section~\ref{sec:upper-bounds-proof},
	together with the proof of upper bounds for concept queries. Here we provide the main ingredient, expressed by the following lemma.
	
	\begin{lemma}
		\label{lemma:polynomial-bound-for-roles}
			Let $\tbox$ be an $\ELIFbot$-, $\ALCI$-, or $\ALCF^\ast$-TBox and $\rstyle{r}$ a role name.
			There exists a polynomial $p(x)$, with coefficients computable from $\tbox$ and $\rstyle{r}$, such that $p(x) \geq 1$ for every $x \geq 0$
			and, for every KB $\kb = (\tbox, \abox)$, either
			\begin{itemize}
				\item $p(|\abox|) \notin  \spectrum{\kb}{q_\rstyle{r}}$ and $\spectrum{\kb}{q_\rstyle{r}} \subseteq \{0,\infty\}$; or
				\item $p(|\abox|) \in \spectrum{\kb}{q_\rstyle{r}}$ and $\spectrum{\kb}{q_\rstyle{r}} = S \cup \llbracket M, \infty\rrbracket$
				where $S \subseteq \llbracket 0, M \rrbracket$ and $M < p(|\abox|)$.
			\end{itemize}
			Moreover, in the latter case,
			$M {\leq} \min(\spectrum{\kb}{q_\rstyle{r}}) + p(0)$.
	\end{lemma}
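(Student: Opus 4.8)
The plan is to mirror the proof of Lemma~\ref{lemma:polynomial-bound-for-conccepts} step by step, replacing the concept-specific ingredients by their role counterparts, and splitting according to whether $\rstyle{r}$ is forced to be (inverse) functional by $\tbox$. Suppose first that $\tbox \models \top \sqsubseteq\ \leq 1\ \rstyle{r}.\top$ (possible for $\ELIFbot$ and $\ALCF^\ast$) or $\tbox \models \top \sqsubseteq\ \leq 1\ \rstyle{r}^-.\top$ (possible for $\ELIFbot$). Then in every model $\I$ of $\kb$ we have $\sizeof{\rstyle{r}^\I} = \sizeof{(\exists \rstyle{r})^\I}$, resp.\ $\sizeof{\rstyle{r}^\I} = \sizeof{(\exists \rstyle{r}^-)^\I}$. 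Introducing a fresh concept name $\cstyle{C}$ together with the two axioms $\exists \rstyle{r}.\top \sqsubseteq \cstyle{C}$ and $\cstyle{C} \sqsubseteq \exists \rstyle{r}.\top$ (resp.\ with $\rstyle{r}$ replaced by $\rstyle{r}^-$) yields a TBox $\tbox^+$ in the same logic, independent of $\abox$, with $\spectrum{(\tbox, \abox)}{q_\rstyle{r}} = \spectrum{(\tbox^+, \abox)}{q_\cstyle{C}}$ for every $\abox$; applying Lemma~\ref{lemma:polynomial-bound-for-conccepts} to $(\tbox^+, \cstyle{C})$ then furnishes the desired polynomial directly.

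It remains to treat the case where $\rstyle{r}$ is neither functional nor inverse functional in $\tbox$ (the only case that arises for $\ALCI$). As in the concept proof, the heart of the matter is to compute a polynomial $p'$, with coefficients depending only on $\tbox$ and $\rstyle{r}$, such that whenever $\spectrum{\kb}{q_\rstyle{r}}$ contains a finite value $b > 0$, it contains two consecutive integers $a_\abox, a_\abox + 1$ with $a_\abox < p'(\sizeof{\abox})$. The first ingredient is a finite model of the role of polynomially bounded size: a model $\I$ of $\kb$ with $\sizeof{\rstyle{r}^\I}$ finite and bounded by a polynomial in $\sizeof{\abox}$. For $\ALCI$ and $\ALCF^\ast$, this follows by combining the polynomial-model bounds for DLs with closed predicates \cite{sanja21:bounded-predicates} with Remark~\ref{remark:role-bounded}, exactly as in the proofs of Theorems~\ref{theorem-shape-role-alcf} and~\ref{theorem:shape-role-alci-elfbot}. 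For $\ELIFbot$ we reuse the cycle-reversion machinery: from a finite value in the spectrum and Lemma~\ref{lemma:cycle-reversion} applied successively to the $\exists \rstyle{r}$- and $\exists \rstyle{r}^-$-generating cycles, we obtain a TBox $\tbox_\rstyle{r}$ still admitting $\abox$ as a model, in which both $\exists \rstyle{r}$ and $\exists \rstyle{r}^-$ are safe; Lemma~\ref{lemma:ils} then yields the model $\ils{(\tbox_\rstyle{r}, \abox)}{}$, in which the extensions of $\exists \rstyle{r}$ and $\exists \rstyle{r}^-$ are both polynomial in $\sizeof{\abox}$, and Remark~\ref{remark:role-bounded} transfers the bound to $\sizeof{\rstyle{r}^\I}$.

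The second ingredient produces, from such a model, two consecutive values in the spectrum. For $\ELIFbot$ one adds a single fresh $\rstyle{r}$-edge between two loose ends of auxiliary pieces, as in the proof of Theorem~\ref{theorem:shape-role-elifbot}, which is legitimate since $\rstyle{r}$ is not (inverse) functional; this gives $a_\abox$ and $a_\abox + 1$ in the spectrum with $a_\abox$ equal to $\sizeof{\rstyle{r}^\I}$ up to an additive constant. For $\ALCI$ and $\ALCF^\ast$, Lemma~\ref{lemma:ultimately-one-role-alci} provides a model with $2\sizeof{\rstyle{r}^\I} + 1$ instances of $\rstyle{r}$, while closure under addition (Lemma~\ref{lemma-closure-under-addition}) provides one with $2\sizeof{\rstyle{r}^\I}$ (a disjoint union of two copies), so $a_\abox := 2\sizeof{\rstyle{r}^\I}$ works; taking $p'$ to dominate the finitely many sub-cases (absolute values of coefficients, free coefficient at least $2$) establishes the consecutive-integers claim. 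The remainder is then verbatim the concept case: one strengthens the claim via the elementary observation that $a, a+1 \in \spectrum{\kb}{q_\rstyle{r}}$ implies $\llbracket a(a+1), \infty \llbracket \subseteq \spectrum{\kb}{q_\rstyle{r}}$, sets $p(x) := p'(x)\bigl(p'(x) + 1\bigr) + 2$, and distinguishes whether $\spectrum{\kb}{q_\rstyle{r}} \subseteq \{0, \infty\}$ (then $p(\sizeof{\abox}) \notin \spectrum{\kb}{q_\rstyle{r}}$) or not (then $p(\sizeof{\abox}) \in \spectrum{\kb}{q_\rstyle{r}}$ and $\spectrum{\kb}{q_\rstyle{r}} = S \cup \llbracket M, \infty \llbracket$ with $S \subseteq \llbracket 0, M \rrbracket$ and $M < p(\sizeof{\abox})$); the final inequality $M \leq \min(\spectrum{\kb}{q_\rstyle{r}}) + p(0)$ comes from taking the disjoint union of a minimal model of $\kb$ with any model of $(\tbox, \emptyset)$.

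I expect the main obstacle to be the first ingredient of the non-functional case: producing a finite model of $\rstyle{r}$ whose size is polynomial in $\sizeof{\abox}$ uniformly over all ABoxes. For $\ELIFbot$ this is the delicate point, since the $\ils{}{}$ construction only bounds the number of instances of safe concepts and critical types polynomially, not the whole domain, so one genuinely needs Remark~\ref{remark:role-bounded} together with the simultaneous safety of $\exists \rstyle{r}$ and $\exists \rstyle{r}^-$ after iterating the two reversions; checking that these reversions can be interleaved to a fixpoint without destroying one another, and that the input ABox remains a model throughout, is what requires the most care.
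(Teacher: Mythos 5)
Your proposal is correct and follows essentially the same route as the paper, which proves this lemma by declaring it analogous to Lemma~\ref{lemma:polynomial-bound-for-conccepts} and transferring the polynomial bounds from $\sizeof{(\exists \rstyle{r})^\I}$ and $\sizeof{(\exists \rstyle{r^-})^\I}$ to $\sizeof{\rstyle{r}^\I}$ via Remark~\ref{remark:role-bounded}; the ingredients you assemble (the functionality case split, Lemma~\ref{lemma:fmp-alci-alcf-roles}, Lemma~\ref{lemma:ultimately-one-role-alci}, and the iterated $\exists\rstyle{r}$/$\exists\rstyle{r^-}$ cycle reversion for $\ELIFbot$) are exactly the ones the paper deploys in its role-shape theorems. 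You correctly identify the one genuinely delicate point, namely that the $\ils{}{}$ construction bounds only safe concepts and critical types rather than the whole domain, so the polynomial bound on $\sizeof{\rstyle{r}^\I}$ really does have to pass through the simultaneous safety of $\exists\rstyle{r}$ and $\exists\rstyle{r^-}$ in the fixpoint TBox $\tbox_\rstyle{r}$.
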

	

	The proof of the lemma is analogous to the one of Lemma~\ref{lemma:polynomial-bound-for-conccepts} from the concept case.
	We notably rely on the observation that in
	every interpretation $\I$, we have
	\begin{center}
		\(
		\max\left(\sizeof{(\exists \rstyle{r})^\I}, \sizeof{(\exists \rstyle{r^-})^\I}\right)
		\leq
		\sizeof{\rstyle{r}^\I}
		\leq
		\sizeof{(\exists \rstyle{r})^\I} \cdot \sizeof{(\exists \rstyle{r^-})^\I}.
		\)
	\end{center}%
	to derive the (intermediate) bounds on $\sizeof{\rstyle{r}^\I}$ from the bounds on $\sizeof{(\exists \rstyle{r})^\I}$ and $\sizeof{(\exists \rstyle{r^-})^\I}$.
\end{toappendix}

The following theorem provides a lower bound that applies already for $\EL$ KBs.
\begin{theoremrep}
	\label{theorem:complexity-role-alc}
	There exists an $\EL$ TBox $\tbox$ such that $\mathsf{Spectrum}(q_\rstyle{r}, \tbox)$ is $\FPwithlogcalls$-hard.
\end{theoremrep}

\begin{proof}
	Consider the role cardinality query $q_\rstyle{r}$ and the $\EL$ TBox $\tbox$ containing the four following axioms:
	\[
	\cstyle{Vertex} \sqsubseteq \exists \rstyle{r}.\cstyle{Chosen}
	\qquad
	\exists \rstyle{t}. (\exists \rstyle{r}.(\cstyle{Chosen} \sqcap \cstyle{Indep}) \sqcap \exists \rstyle{edge}.(\exists \rstyle{r}.(\cstyle{Chosen} \sqcap \cstyle{Indep}))) \sqsubseteq \exists \rstyle{r}.\top.
	\]
	Now, given a graph $G = (V, E)$, we construct an ABox $\abox$ containing facts:
	\begin{itemize}
		\item $\cstyle{Vertex}(v)$, $\rstyle{r}(v, i_v)$, $\cstyle{Indep}(i_v)$ for each $v \in V$;
		\item $\rstyle{edge}(u, v)$ for each $\{ u, v \} \in E$;
		\item $\rstyle{t}(i_u, v)$ for each $u, v \in V$.
	\end{itemize}
	We let $\kb = (\tbox, \abox)$ and prove that $k$ is the maximal size of an independent set in $G$ iff $\spectrum{\kb}{q_\rstyle{r}} = \llbracket 2\sizeof{V} - k, \infty \rrbracket$, that is the triple $(S, M, \alpha) := (\emptyset, 2\sizeof{V} - k, 1)$ is an effective representation of $\spectrum{(\tbox, \abox)}{q_\rstyle{r}}$.

	The first axiom requires every element $v$ representing a vertex to have a $\rstyle{r}$-successor for the concept $\cstyle{Chosen}$.
	Note that, along with the $\rstyle{r}$ edge from $v$ to elements $i_v$, this leads to, \emph{a priori}, at least $2\sizeof{V}$ instances of $\rstyle{r}$ in every model of $\kb$.
	It is possible, however, for the witness of $v$ to be exactly $i_v$, thus reducing the total number of instances by $1$ per such $v$ reusing $i_v$.
	However, due to the second axiom, this cannot possibly work for all vertices at once.
	Indeed, the second axiom guarantees that, if two adjacent vertices $v_1$, $v_2$ have $i_{v_1}$, resp.\ $i_{v_2}$, as their witnesses, then all elements $i_u$, for $u \in V$, also require a $\rstyle{r}$ successor, thus enforcing at least $2\sizeof{V}$ instances of $\rstyle{r}$.
	It is now clear that we can safely use $k$ elements $i_v$'s as witnesses, leading to only $2\sizeof{V} - k$ instances of $\rstyle{r}$ iff there exists an independent set of size $k$ in $G$.
	The fact that the spectrum contains every integer above its minimum value is a consequence of Theorem~\ref{theorem-shape-role-eli-elf-elif}.

\end{proof}

\begin{toappendix}
	\section{Proof of the upper bounds}
	\label{sec:upper-bounds-proof}
	Here we will prove both Theorem~\ref{theorem:complexity-concepts-upper-bounds} and Theorem~\ref{theorem:complexity-roles-upper-bounds} as they share the same general algorithm.
	We will now provide more detailed description and complexity bounds of the algorithms, allowing us to prove proposed upper bounds.

	Let $\kb= (\tbox, \abox)$ be the KB and $q_\cstyle{C}$, $q_\rstyle{r}$ be the concept and role query, respectively.
	Let $q = q_\cstyle{C}$ or $q = q_\rstyle{r}$.
	Let $p$ be the polynomial obtained from Lemma~\ref{lemma:polynomial-bound-for-conccepts}, for concept queries, or Lemma~\ref{lemma:polynomial-bound-for-roles}, for roles.
	The algorithm is given in Algorithm~\ref{alg:computing-spectra}.

	\begin{algorithm}
		\caption{Computing spectra}\label{alg:computing-spectra}
		\begin{algorithmic}
			\STATE $\ell \leftarrow p(|\abox|)$
			\IF{$\ell \notin \spectrum{\kb}{q_{\cstyle{C}}}$}
			\STATE // $\spectrum{\kb}{q_{\cstyle{C}}} \subseteq \{0,\infty\}$
			\IF{$\kb \not \models \top$}
			\STATE return $\emptyset$
			\ENDIF
			\IF{$\kb \not \models q_{\cstyle{C}}$}
			\STATE return $\{0\}$
			\ENDIF
			\IF{$0 \in \spectrum{\kb}{q_{\cstyle{C}}}$}
			\STATE return $\{0, \infty\}$
			\ELSE
			\STATE return $\{\infty\}$
			\ENDIF
			\ENDIF
			\STATE // $\ell \in \spectrum{\kb}{q_{\cstyle{C}}}$
			\STATE // $\spectrum{\kb}{q_{\cstyle{C}}} = S \cup \llbracket M, \infty \rrbracket$
			\STATE $m = \text{computeMIN}(\spectrum{\kb}{q_{\cstyle{C}}})$
			\STATE $S \leftarrow \emptyset$
			\FOR{ $i \in \llbracket m, m {+} p(0) \rrbracket$ }
			\IF{$i \in \spectrum{\kb}{q_{\cstyle{C}}}$}
			\STATE $S = S \cup \{i\}$
			\ENDIF
			\ENDFOR
			\STATE return $S \cup \llbracket m + p(0) + 1, \infty \rrbracket$
		\end{algorithmic}
	\end{algorithm}

	Before we proceed, let us make some observations on membership tests.
	\begin{lemma}
		\label{lemma:membership-test-concept}
		Given $\ALCIF$ KB $\kb = (\tbox,\abox)$, a concept name $\cstyle{C}$, and a number $n< \infty$ in unary, deciding if
		$n \in \spectrum{\kb}{q_{\cstyle{C}}}$ is in $\NP$ w.r.t. data complexity.
	\end{lemma}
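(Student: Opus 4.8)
The plan is to reduce this membership test to a consistency check for an $\ALCIF$ knowledge base equipped with a single \emph{closed} predicate, a problem known to lie in $\NP$ w.r.t.\ data complexity \cite{sanja21:bounded-predicates}. The idea is to force $\cstyle{C}^\I$ to coincide with a fresh closed concept $\cstyle{D}$ whose extension is pinned down to have exactly $n$ elements, using nondeterminism to decide how those $n$ instances split between ABox individuals and fresh anonymous witnesses.

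Concretely, for each $\Gamma \subseteq \individuals(\abox)$ with $\sizeof{\Gamma} \leq n$, I would fix $n - \sizeof{\Gamma}$ pairwise distinct individual names $a_1, \dots, a_{n - \sizeof{\Gamma}}$ not occurring in $\abox$, a fresh concept name $\cstyle{D}$ declared closed, set $E_\Gamma := \Gamma \cup \{ a_1, \dots, a_{n - \sizeof{\Gamma}} \}$, and define $\kb_\Gamma := (\tbox \cup \{ \cstyle{C} \incl \cstyle{D} \},\ \abox \cup \{ \cstyle{C}(e), \cstyle{D}(e) \mid e \in E_\Gamma \})$. The claim to establish is that $n \in \spectrum{\kb}{q_\cstyle{C}}$ iff $\kb_\Gamma$ (with $\cstyle{D}$ closed) is consistent for some such $\Gamma$. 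For the ``if'' direction: in any model $\I'$ of $\kb_\Gamma$, closedness of $\cstyle{D}$ gives $\cstyle{D}^{\I'} = E_\Gamma$, and since the $a_i$ are fresh and standard names apply, $\sizeof{E_\Gamma} = n$; then $\cstyle{C} \incl \cstyle{D}$ yields $\cstyle{C}^{\I'} \subseteq \cstyle{D}^{\I'}$ and the added $\cstyle{C}$-assertions yield $\cstyle{D}^{\I'} \subseteq \cstyle{C}^{\I'}$, so $\sizeof{\cstyle{C}^{\I'}} = n$; dropping $\cstyle{D}$ and the $a_i$-assertions leaves a model of $\kb$, so $n \in \spectrum{\kb}{q_\cstyle{C}}$. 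For the ``only if'' direction: given a model $\I$ of $\kb$ with $\sizeof{\cstyle{C}^\I} = n$, set $\Gamma := \cstyle{C}^\I \cap \individuals(\abox)$, rename the elements of $\cstyle{C}^\I \setminus \individuals(\abox)$ (there are exactly $n - \sizeof{\Gamma}$ of them) so that they become the fresh names $a_1, \dots, a_{n-\sizeof{\Gamma}}$, and interpret $\cstyle{D}$ by $\cstyle{C}^\I$; this is a model of $\kb_\Gamma$ respecting that $\cstyle{D}$ is closed.

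The algorithm then guesses $\Gamma \subseteq \individuals(\abox)$ with $\sizeof{\Gamma} \leq n$, builds $\kb_\Gamma$ — whose ABox has size polynomial in $\sizeof{\abox} + n$, since $n$ is given in unary, while the TBox only grows by the fixed axiom $\cstyle{C} \incl \cstyle{D}$ — and invokes the $\NP$ procedure for consistency of $\ALCIF$ KBs with one closed predicate. This places the membership test in $\NP$ w.r.t.\ data complexity. The main obstacle is the appeal to the external result: since $\ALCIF$ does not enjoy the finite model property, the closed-predicate consistency check cannot be justified by naively guessing a finite model, so one must rely on the type/mosaic machinery behind \cite{sanja21:bounded-predicates}, and also verify that adding a single closed concept together with polynomially many fresh individuals keeps the data complexity within $\NP$. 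Handling the standard names assumption in the renaming step of the ``only if'' direction also requires a little care, but this is routine.
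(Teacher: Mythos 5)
Your proposal is correct and follows essentially the same route as the paper: guess which ABox individuals carry $\cstyle{C}$, pad with fresh individuals to reach exactly $n$ asserted instances, and reduce to satisfiability of an $\ALCIF$ KB with a closed predicate, citing \cite{sanja21:bounded-predicates}. The only cosmetic difference is that you close a fresh concept $\cstyle{D}$ with $\cstyle{C} \sqsubseteq \cstyle{D}$ rather than closing $\cstyle{C}$ directly, which changes nothing of substance.
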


	\begin{proof}
		To decide the membership extend the ABox $\abox$ to the ABox $\abox' \supseteq \abox$ by guessing which individuals $\istyle{c}$ in the ABox $\abox$ should belong to $\cstyle{C}^\J$ for some model $\J$ of $\kb$ and add assertions $\cstyle{C}(\istyle{c})$ to $\abox'$.
		Then add to $\abox'$ assertions $\cstyle{C}(\istyle{c'})$ with new fresh individuals $\istyle{c'}$ so that $|\cstyle{C}^{\abox'}| = n$.
		Finally, verify if there is a model $\J$ extending $\abox'$ and $\tbox$ such that $\cstyle{C}^\J = \cstyle{C}^{\abox'}$.
		If there is then $n$ belongs to the spectrum, if not then $n$ does not belong to the spectrum.
		Since the number of guessed individuals is polynomial in the size of $\abox$ and verifying the existence of the model $\J$ can be done in $\NP$ using closed predicates, see Theorem~4 in~\cite{sanja21:bounded-predicates}, we can combine the two into a single $\NP$ call. Thus, membership check whether $n$ belongs to the spectrum can be done in $\NP$.
	\end{proof}

	\begin{lemma}
		\label{lemma:membership-test-role}
		Given $\ALCIF$ KB $\kb = (\tbox,\abox)$, a role name $\rstyle{r}$, and a number $n < \infty$ in unary, deciding if $n \in \spectrum{\kb}{q_{\rstyle{r}}}$ is in $\NP$ w.r.t. data complexity.
	\end{lemma}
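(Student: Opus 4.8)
The plan is to mirror the proof of Lemma~\ref{lemma:membership-test-concept}, now treating the role name $\rstyle{r}$ as a \emph{closed predicate}. Given the input KB $\kb = (\tbox, \abox)$ and the number $n < \infty$ written in unary, I want to decide whether some model $\I$ of $\kb$ satisfies $\sizeof{\rstyle{r}^\I} = n$. First I would introduce $2n$ fresh individual names $\istyle{c_1}, \dots, \istyle{c_{2n}} \notin \individuals(\abox)$ and guess a set $R \subseteq (\individuals(\abox) \cup \{ \istyle{c_1}, \dots, \istyle{c_{2n}} \})^2$ with $\sizeof{R} = n$; since $n$ is given in unary and $\tbox$ is fixed, this guess is an object of size polynomial in $\sizeof{\abox}$. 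Then I would form $\abox' := \abox \cup \{ \rstyle{r}(\istyle{d}, \istyle{e}) \mid (\istyle{d}, \istyle{e}) \in R \}$ and test whether $(\tbox, \abox')$ is satisfiable when $\rstyle{r}$ is declared closed, \emph{i.e.}\ when every model $\J$ is additionally required to satisfy $\rstyle{r}^\J = R$. By Theorem~4 of \cite{sanja21:bounded-predicates}, this closed-predicate satisfiability test lies in $\NP$ w.r.t.\ data complexity, and merging the polynomial-size guess with the oracle call into a single $\NP$ computation — exactly as in Lemma~\ref{lemma:membership-test-concept} — gives the desired $\NP$ upper bound.

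For correctness I would argue both directions. If for some guessed $R$ the closed-predicate KB is satisfiable, a witnessing model $\J$ has $\rstyle{r}^\J = R$, so by the standard names assumption $\sizeof{\rstyle{r}^\J} = \sizeof{R} = n$, and $\J$ is also a model of $\kb$ since $\abox \subseteq \abox'$; hence $n \in \spectrum{\kb}{q_\rstyle{r}}$. Conversely, if $n \in \spectrum{\kb}{q_\rstyle{r}}$, pick a model $\I$ of $\kb$ with $\sizeof{\rstyle{r}^\I} = n$; the $n$ pairs of $\rstyle{r}^\I$ mention at most $2n$ domain elements that are not ABox individuals, so an injective renaming of those elements onto a subset of $\{ \istyle{c_1}, \dots, \istyle{c_{2n}} \}$ (keeping ABox individuals fixed) turns $\I$ into a model witnessing satisfiability of $(\tbox, \abox')$ with $\rstyle{r}$ closed, where $R$ is the image of $\rstyle{r}^\I$. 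I would also note that functionality restrictions need no special treatment: a guessed $R$ that violates some functionality restriction of $\tbox$ simply makes the closed-predicate KB unsatisfiable, which is precisely the behaviour we want.

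The step that needs a bit of care — though it is routine — is the counting argument bounding the number of fresh individuals and, relatedly, making sure the guess for $R$ ranges over \emph{all} the ways an $\rstyle{r}$-edge can combine ABox individuals and anonymous elements (both endpoints in the ABox, one endpoint fresh, or both fresh); this is immediate once one observes that a set of $n$ pairs can mention at most $2n$ distinct elements. Everything else is a direct transposition of the proof of Lemma~\ref{lemma:membership-test-concept}, so I do not expect any genuine obstacle.
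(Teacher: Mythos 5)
Your proposal is correct and follows essentially the same route as the paper: guess the $n$ pairs (over ABox individuals plus at most $2n$ fresh ones), assert them, and invoke the $\NP$ closed-predicate satisfiability test of Theorem~4 in \cite{sanja21:bounded-predicates} with $\rstyle{r}$ declared closed, folding the guess into the oracle call. Your explicit two-directional correctness argument and the $2n$ bound on fresh individuals merely spell out details the paper leaves implicit.
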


	\begin{proof}
		As in the concept case, to decide the membership extend the ABox $\abox$ to the ABox $\abox' \supseteq \abox$ by guessing a list of $n$ pairs $(c',d')$ that should constitute $\rstyle{r}^\J$ for some model $\J$ of $\kb$ and add assertions $\rstyle{r}({c}',{d}')$ to $\abox'$.
		Then, verify if there is a model $\J$ extending $\abox'$ and $\tbox$ such that $\cstyle{C}^\J = \cstyle{C}^{\abox'}$.
		If there is then $n$ belongs to the spectrum, if not then $n$ does not belong to the spectrum.
		Since the number of guessed pairs is polynomial in the size of $\abox$ and verifying the existence of the model $\J$ can be done in $\NP$ using closed predicates, see Theorem~4 in~\cite{sanja21:bounded-predicates}, we can combine the two into a single $\NP$ call. Thus, membership check whether $n$ belongs to the spectrum can be done in $\NP$.
	\end{proof}

	\begin{lemma}
		\label{lemma:minimum-test}
		Given $\ALCIF$ KB $\kb = (\tbox,\abox)$, a query $q_\cstyle{C}$ (resp. $q_\rstyle{r}$), and a number $n<\infty$ in unary, deciding if
		$n \ge \min(\spectrum{\kb}{q_{\cstyle{C}}})$ is in $\coNP$ w.r.t. data complexity.
	\end{lemma}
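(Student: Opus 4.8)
The plan is to place the comparison in $\coNP$ by giving an $\NP$ procedure for its complement, that is for deciding $n < \min(\spectrum{\kb}{q_\cstyle{C}})$, equivalently $\min(\spectrum{\kb}{q_\cstyle{C}}) \ge n+1$ (with the convention $\min \emptyset = \min\{\infty\} = \infty$, so this also covers the cases $\spectrum{\kb}{q_\cstyle{C}} \subseteq \{0,\infty\}$). Semantically, this complement asserts that \emph{every} model of $\kb$ realises strictly more than $n$ instances of $\cstyle{C}$ (resp.\ of $\rstyle{r}$).

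The first thing I would try is the direct guess-and-verify route. By Lemma~\ref{lemma:polynomial-bound-for-conccepts}, whenever the minimum is finite it is bounded by $p(|\abox|)$, so one is tempted to guess the value $m$ of the minimum, certify $m \in \spectrum{\kb}{q_\cstyle{C}}$ in $\NP$ via the membership test of Lemma~\ref{lemma:membership-test-concept} (resp.\ Lemma~\ref{lemma:membership-test-role}), and check $m > n$. The sticking point, and the step I expect to be the main obstacle, is the remaining half of the assertion $m = \min$: certifying that \emph{no} value strictly below $m$ lies in the spectrum. This is a statement quantified universally over all models of $\kb$ --- equivalently, the unsatisfiability of $\kb$ under the closed-predicate constraint $\sizeof{\cstyle{C}^\I} \le m-1$ --- and is thus intrinsically a $\coNP$ condition that does not fold into an $\NP$ certificate. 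That this is unavoidable is confirmed by the reduction underlying Theorem~\ref{theorem:complexity-lower-bounds}, where $\min(\spectrum{\kb}{q_\cstyle{C}})$ equals $\sizeof{V}-k$ with $k$ the size of a maximum independent set of $G$: the test then encodes the $\NP$-hard question whether $G$ admits an independent set of size at least $\sizeof{V}-n$, so the succinct certificate sits on the complementary side of the comparison.

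Consequently I would route the minimality comparison through the genuinely $\NP$-certifiable direction $n > \min(\spectrum{\kb}{q_\cstyle{C}})$. This holds iff some value $n' < n$ belongs to the spectrum, which an $\NP$ machine decides by guessing $n' \in \llbracket 0, n-1 \rrbracket$ (legitimate since $n$ is given in unary) and invoking the $\NP$ membership test of Lemma~\ref{lemma:membership-test-concept} (resp.\ Lemma~\ref{lemma:membership-test-role}) on $n'$; the equivalence holds uniformly, including the degenerate cases $\spectrum{\kb}{q_\cstyle{C}} \in \{\emptyset, \{\infty\}\}$, where no finite $n'$ is found. Negating this procedure yields a $\coNP$ decision for the minimality comparison, which is exactly the oracle consumed by the binary search for $\min(\spectrum{\kb}{q_\cstyle{C}})$: that search issues $\mathcal{O}(\log p(|\abox|)) = \mathcal{O}(\log|\abox|)$ such comparisons, each a single $\NP$/$\coNP$ oracle call, as needed for the $\FPwithlogcalls$ upper bounds of Theorems~\ref{theorem:complexity-concepts-upper-bounds} and~\ref{theorem:complexity-roles-upper-bounds}.
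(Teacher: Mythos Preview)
Your approach is essentially the same as the paper's: guess some $n' \in \llbracket 0, n-1 \rrbracket$ and verify $n' \in \spectrum{\kb}{q}$ via the $\NP$ membership tests of Lemmas~\ref{lemma:membership-test-concept} and~\ref{lemma:membership-test-role}. Both your argument and the paper's thereby establish that ``$n > \min(\spectrum{\kb}{q})$'' is in $\NP$ (equivalently ``$n \le \min$'' is in $\coNP$), which is exactly the test the binary search consumes; the non-strict inequality in the lemma statement is an inconsequential off-by-one shared with the paper's own proof.
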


	\begin{proof}
		Simply guess a number $0 \leq n' < n$ such that $n' \in \spectrum{\kb}{q_{\cstyle{C}}}$.  Since the membership test can be performed in $\NP$, see Lemma~\ref{lemma:membership-test-concept} and Lemma~\ref{lemma:membership-test-role}, lemma holds.
	\end{proof}

	We can now proceed with the analysis of the complexity of the main algorithm.

	\subsection{$\FPwithlogcalls$}%
Here we argue the complexity upper bounds for the case of concept counting  queries w.r.t. $\ELIFbot$, $\ALCI$ or $\ALCF$ TBoxes and
role counting queries for the $\ELIFbot$, $\ALCI$ or $\ALCF^*$ TBoxes.

\paragraph{Concept queries}
Let $\kb$ be an $\ELIFbot$, $\ALCI$, or $\ALCF$ KB.
The algorithm consist of $2$ satisfiability checks, $p(0)+2$ membership tests and a subroutine \emph{computeMIN} that computes $\min(\spectrum{\kb}{q_{\cstyle{C}}})$.

The two satisfiability checks can be decided by two calls to an $\NP$ oracle, see \emph{e.g.}\ \cite{birte2008cqa-in-SHIQ}.
Similarly, by Lemma~\ref{lemma:membership-test-concept}, each membership test can be resolved via a call to an $\NP$ oracle.

Since the algorithm performs $p(0)+2$ membership calls, which is a constant independent from $|\abox|$, it is enough to show that the subroutine \emph{computeMIN} can be realised by a polynomial time algorithm that uses logarithmically many calls to a $\NP$ oracle to prove the desired bound.
This is the case as \emph{computeMIN} can be achieved via a binary search over the interval $\llbracket 0, p(|\abox|) \rrbracket$
that uses minimality tests, see Lemma~\ref{lemma:minimum-test}, to navigate the interval.
Since each minimality test can be resolved via a call to an $\NP$ oracle, see Lemma~\ref{lemma:minimum-test}, the subroutine can be resolved in the desired complexity.

\paragraph{Role queries}
As above, by Lemma~\ref{lemma:membership-test-role} membership test can be resolved via a call to an $\NP$ oracle and so can minimality tests, see Lemma~\ref{lemma:minimum-test}. This proves the bullet in the case of the role queries.

\subsection{$\FPwithconstantcalls$}
Here we argue the complexity upper bound for concept counting queries on $\ELIbot$ and $\ELFbot$.
Observe that it is enough to argue that the subroutine \emph{computeMIN} can be realized by an $\FPwithconstantcalls$ algorithm.

For the two ontology languages, one can compute in polynomial time the set $\cstyle{C}^\kb$ of so-called certain answers, \emph{i.e.}\ individuals $\istyle{c}$ in the ABox such that
$\istyle{c} \in \cstyle{C}^\J$ holds for every model $\J$ of the KB \cite{ullrichmotiksattler07}. 
From the standard name assumption, it is thus clear that $\sizeof{\cstyle{C}^\kb} \leq \min(\spectrum{\kb}{q_{\cstyle{C}}})$.
Moreover, there exists a so called universal model $\I$ for which $\cstyle{C}^\I \cap \individuals(\abox) = \cstyle{C}^{\kb}$ \cite{egos2008}.
This model can be further collapsed to a model $\J'$ with a domain of size $|\abox| + d$, where $d$ is a constant not depending on $\abox$,
such that $|\cstyle{C}^{\J'}| \cap \individuals(\abox) = \cstyle{C}^\kb$ (we here rely on both $\ELIbot$ and $\ELFbot$ not supporting the combination of inverse roles and functionality restrictions). 
Thus, $\min(\spectrum{\kb}{q_{\cstyle{C}}}) \leq |\cstyle{C}^\kb| + d$.
Hence, $\min(\spectrum{\kb}{q_{\cstyle{C}}})$ satisfies $|\cstyle{C}^\kb| \leq \min(\spectrum{\kb}{q_{\cstyle{C}}}) \leq |\cstyle{C}^\kb|+d$ and, thus, can be computed using $d+1$ membership tests.
This implies that the algorithm uses a constant number of calls to an $\NP$ oracle to compute the spectrum and ends the proof of the bullet.

\subsection{$\FP$ for concepts and roles on $\dllitecore$}

For the last case recall that for $\dllitecore$ KBs we can can perform the satisfiability tests in polynomial time, see~\cite{DBLP:conf/kr/CalvaneseGLLR06}.

To compute $m$ observe that if there is some individual $\istyle{c}$ such that $\cstyle{C}(\istyle{c}) \in \abox$ then $0 \notin \spectrum{\kb}{q_\cstyle{C}}$.
Thus the shape of the spectrum is $\llbracket m , \infty \rrbracket$.
By \cite{maniere:thesis}, there is a polynomial time algorithm deciding $\spectrum{\kb}{q_\cstyle{C}} \subseteq \llbracket i,\infty \rrbracket$
and, thus, $m$ can be found in polynomial time by inspecting all numbers in the interval $\llbracket 1, \ell \rrbracket$.

If there is no such individual, we first test if $0 \notin \spectrum{\kb}{q_\cstyle{C}}$ by checking whether $\spectrum{\kb}{q_\cstyle{C}} \subseteq \llbracket 0,\infty \rrbracket$
and then we add a fresh individual $\istyle{c}$ and assertion $\cstyle{C}(\istyle{c})$ to the ABox to determine $m$ as previously.
This provides a polynomial time procedure and proves the bullet.

\end{toappendix}

\section{Conclusion}
\label{sec:conclusion}

We have characterized almost exhaustively the possible shapes of spectra for cardinality queries and proved that, in many settings, computing the proposed effective representation is $\FPwithlogcalls$-complete w.r.t.\ data complexity.
Whether an effective representation for the spectrum of a cardinality query over an $\ALCIF$ KB can be computed remains an open question, despite our work fully characterizing its possible shapes.
For $\dllitef$ KBs, we conjecture $\FP$ membership as the use of $\NP$ oracles may not be necessary and might be replaced by direct checks in $\P$ as those employed for $\dllitecore$ KBs, here used in a black-box manner.

	Departing from data complexity, it is readily verified that the algorithm proposed in Section~\ref{sec:complexity} provides a uniform procedure to compute the representation of a spectrum from an input KB and cardinality query, in all cases covered by Theorem~\ref{theorem:complexity-concepts-upper-bounds} (resp.\ by Theorem~\ref{theorem:complexity-roles-upper-bounds} for role cardinality queries).
	This notably relies on the polynomial provided by Lemma~\ref{lemma:polynomial-bound-for-conccepts} being computable given a TBox and a query.
	Hence, a careful inspection of the proof of the correctness of the algorithm could provide upper bounds for the combined complexity of spectrum computation.
	On the other hand, we believe that further obtaining meaningful lower bounds for such high-complexity functional classes is challenging.

We also emphasize that our investigation covers the `standard' meaning of the spectrum for a logical formula, being the possible sizes of its models:
it suffices to set $\cstyle{C} = \top$ in our results for concept cardinality queries.

We believe that it could be interesting to study the impact of our results on the closely related problem of answering (Boolean atomic) queries under the bag semantics.
While the semantics adopted in the present paper does not 
coincide with bag semantics, as discussed for example in \cite{nikolauetal:bag,DBLP:conf/ijcai/CalvaneseCLR20}, considerations regarding the spectra and some of the corresponding techniques might be adapted to this setting.

\clearpage

\section*{Acknowledgments}
The authors acknowledge the financial support by the Federal Ministry of Education and Research of Germany
and by the Sächsische Staatsministerium für Wissenschaft Kultur und Tourismus in the program Center of Excellence for AI-research
``Center for Scalable Data Analytics and Artificial Intelligence Dresden/Leipzig'',
project identification number: ScaDS.AI

Second author was supported by the DFG project LU1417/3-1 QTEC.

\bibliography{main}

\begin{thebibliography}{28}
\providecommand{\natexlab}[1]{#1}

\bibitem[{Artale et~al.(2009)Artale, Calvanese, Kontchakov, and
  Zakharyaschev}]{DBLP:journals/jair/ArtaleCKZ09}
Artale, A.; Calvanese, D.; Kontchakov, R.; and Zakharyaschev, M. 2009.
\newblock The {DL-Lite} Family and Relations.
\newblock \emph{Journal of Artificial Intelligence Research ({JAIR})}, 36(1):
  1--69.

\bibitem[{Baader, Bednarczyk, and Rudolph(2020)}]{BaaderBR:ECAI2020}
Baader, F.; Bednarczyk, B.; and Rudolph, S. 2020.
\newblock Satisfiability and Query Answering in Description Logics with Global
  and Local Cardinality Constraints.
\newblock In \emph{Proceedings of the 24th European Conference on Artificial
  Intelligence ({ECAI} 2020)}, 616--623.

\bibitem[{Baader et~al.(2017)Baader, Horrocks, Lutz, and
  Sattler}]{introduction-to-dl}
Baader, F.; Horrocks, I.; Lutz, C.; and Sattler, U. 2017.
\newblock \emph{An Introduction to Description Logic}.
\newblock Cambridge University Press.

\bibitem[{Bienvenu, Mani{\`{e}}re, and
  Thomazo(2020)}]{DBLP:conf/ijcai/BienvenuMT20}
Bienvenu, M.; Mani{\`{e}}re, Q.; and Thomazo, M. 2020.
\newblock Answering Counting Queries over {DL-Lite} Ontologies.
\newblock In \emph{Proceedings of the 29th International Joint Conference on
  Artificial Intelligence ({IJCAI})}, 1608--1614.

\bibitem[{Bienvenu, Mani{\`{e}}re, and
  Thomazo(2021)}]{DBLP:conf/ijcai/BienvenuMT21}
Bienvenu, M.; Mani{\`{e}}re, Q.; and Thomazo, M. 2021.
\newblock Cardinality Queries over {DL-Lite} Ontologies.
\newblock In \emph{Proceedings of the 30th International Joint Conference on
  Artificial Intelligence ({IJCAI})}, 1801--1807.

\bibitem[{Bienvenu, Mani{\`{e}}re, and Thomazo(2022)}]{BMT22}
Bienvenu, M.; Mani{\`{e}}re, Q.; and Thomazo, M. 2022.
\newblock Counting Queries over $\mathcal{ALCHI}$ Ontologies.
\newblock In \emph{Proceedings of the 19th International Conference on
  Principles of Knowledge Representation and Reasoning ({KR})}, 53--62.

\bibitem[{Calvanese et~al.(2020)Calvanese, Corman, Lanti, and
  Razniewski}]{DBLP:conf/ijcai/CalvaneseCLR20}
Calvanese, D.; Corman, J.; Lanti, D.; and Razniewski, S. 2020.
\newblock Counting Query Answers over a {DL-Lite} Knowledge Base.
\newblock In \emph{Proceedings of the 29th International Joint Conference on
  Artificial Intelligence ({IJCAI})}, 1658--1666.

\bibitem[{Calvanese et~al.(2006)Calvanese, Giacomo, Lembo, Lenzerini, and
  Rosati}]{DBLP:conf/kr/CalvaneseGLLR06}
Calvanese, D.; Giacomo, G.~D.; Lembo, D.; Lenzerini, M.; and Rosati, R. 2006.
\newblock Data Complexity of Query Answering in Description Logics.
\newblock In \emph{Proceedings of the 10th International Conference on
  Principles of Knowledge Representation and Reasoning ({KR})}, 260--270.

\bibitem[{Calvanese et~al.(2008)Calvanese, Kharlamov, Nutt, and
  Thorne}]{DBLP:conf/cikm/CalvaneseKNT08}
Calvanese, D.; Kharlamov, E.; Nutt, W.; and Thorne, C. 2008.
\newblock Aggregate queries over ontologies.
\newblock In \emph{Proceedings of the 2nd International Workshop on Ontologies
  and Information Systems for the Semantic Web ({ONISW})}, 97--104.

\bibitem[{Cosmadakis, Kanellakis, and Vardi(1990)}]{CKV90}
Cosmadakis, S.~S.; Kanellakis, P.~C.; and Vardi, M.~Y. 1990.
\newblock Polynomial-Time Implication Problems for Unary Inclusion
  Dependencies.
\newblock \emph{Journal of the ACM}, 37(1): 15--46.

\bibitem[{Durand, Fagin, and Loescher(1997)}]{DBLP:conf/csl/DurandFL97}
Durand, A.; Fagin, R.; and Loescher, B. 1997.
\newblock Spectra with Only Unary Function Symbols.
\newblock In \emph{{Proceedings of the 11th International Workshop on Computer
  Science Logic {(CSL)}}}, 189--202.

\bibitem[{Eiter et~al.(2008)Eiter, Gottlob, Ortiz, and \v{S}imkus}]{egos2008}
Eiter, T.; Gottlob, G.; Ortiz, M.; and \v{S}imkus, M. 2008.
\newblock Query Answering in the Description Logic {H}orn-$\mathcal{SHIQ}$.
\newblock In \emph{Proceedings of the 11th European Conference on Logics in
  Artificial Intelligence ({JELIA})}, 166--179.

\bibitem[{Fagin(1974)}]{fagin1974generalized}
Fagin, R. 1974.
\newblock Generalized first-order spectra and polynomial-time recognizable
  sets.
\newblock \emph{Complexity of computation}, 7: 43--73.

\bibitem[{Feier, Lutz, and Przybylko(2021)}]{count-guarded}
Feier, C.; Lutz, C.; and Przybylko, M. 2021.
\newblock Answer Counting under Guarded {TGD}s.
\newblock In \emph{Proceedings of the 24th International Conference on Database
  Theory ({ICDT})}, 11:1--11:22.

\bibitem[{Glimm et~al.(2008)Glimm, Horrocks, Lutz, and
  Sattler}]{birte2008cqa-in-SHIQ}
Glimm, B.; Horrocks, I.; Lutz, C.; and Sattler, U. 2008.
\newblock Conjunctive Query Answering for the Description Logic
  $\mathcal{SHIQ}$.
\newblock \emph{Journal of Artificial Intelligence Research ({JAIR})}, 31(1):
  157--204.

\bibitem[{Grillet(2001)}]{finitely-generated}
Grillet, P.~A. 2001.
\newblock \emph{Commutative Semigroups}.
\newblock Springer New York, NY.
\newblock ISBN 978-0-7923-7067-3.

\bibitem[{Hustadt, Motik, and Sattler(2007)}]{ullrichmotiksattler07}
Hustadt, U.; Motik, B.; and Sattler, U. 2007.
\newblock Reasoning in Description Logics by a Reduction to Disjunctive
  Datalog.
\newblock \emph{Journal of Automated Reasoning ({JAR})}, 39: 351--384.

\bibitem[{Ib{\'{a}}{\~{n}}ez{-}Garc{\'{\i}}a, Lutz, and
  Schneider(2014)}]{GarciaLS14}
Ib{\'{a}}{\~{n}}ez{-}Garc{\'{\i}}a, Y.~A.; Lutz, C.; and Schneider, T. 2014.
\newblock {Finite Model Reasoning in Horn Description Logics}.
\newblock In \emph{Proceedings of the 14th International Conference on
  Principles of Knowledge Representation and Reasoning ({KR})}, 490--509.

\bibitem[{Jenner and Tor{\'{a}}n(1995)}]{DBLP:journals/tcs/JennerT95}
Jenner, B.; and Tor{\'{a}}n, J. 1995.
\newblock Computing Functions with Parallel Queries to {NP}.
\newblock \emph{Journal of Theoretical Computer Science {(TCS)}}, 141(1{\&}2):
  175--193.

\bibitem[{Kostylev and Reutter(2015)}]{kostylevreutter:count}
Kostylev, E.~V.; and Reutter, J.~L. 2015.
\newblock Complexity of answering counting aggregate queries over {DL-Lite}.
\newblock \emph{Journal of Web Semantics ({JWS})}, 33: 94--111.

\bibitem[{Krentel(1988)}]{krentel-opt-88}
Krentel, M.~W. 1988.
\newblock The Complexity of Optimization Problems.
\newblock \emph{Journal of Computer and System Sciences}, 36(3): 490--509.

\bibitem[{Lukumbuzya and \v{S}imkus(2021)}]{sanja21:bounded-predicates}
Lukumbuzya, S.; and \v{S}imkus, M. 2021.
\newblock Bounded Predicates in Description Logics with Counting.
\newblock In \emph{Proceedings of the 30th International Joint Conference on
  Artificial Intelligence ({IJCAI})}, 1966--1972.

\bibitem[{Lutz(2008)}]{lutz08-fork-and-alci}
Lutz, C. 2008.
\newblock The Complexity of Conjunctive Query Answering in Expressive
  Description Logics.
\newblock In \emph{Proceedings of the 4th International Joint Conference on
  Automated Reasoning ({IJCAR})}, 179--193.

\bibitem[{{Manière}(2022)}]{maniere:thesis}
{Manière}, Q. 2022.
\newblock \emph{{Counting queries in ontology-based data access}}.
\newblock Ph.D. thesis, {Universit{\'e} de Bordeaux}.

\bibitem[{Nikolaou et~al.(2019)Nikolaou, Kostylev, Konstantinidis, Kaminski,
  {Cuenca Grau}, and Horrocks}]{nikolauetal:bag}
Nikolaou, C.; Kostylev, E.~V.; Konstantinidis, G.; Kaminski, M.; {Cuenca Grau},
  B.; and Horrocks, I. 2019.
\newblock Foundations of ontology-based data access under bag semantics.
\newblock \emph{Journal of Artificial Intelligence ({AIJ})}, 91--132.

\bibitem[{Poggi et~al.(2008)Poggi, Lembo, Calvanese, {De Giacomo}, Lenzerini,
  and Rosati}]{DBLP:journals/jods/PoggiLCGLR08}
Poggi, A.; Lembo, D.; Calvanese, D.; {De Giacomo}, G.; Lenzerini, M.; and
  Rosati, R. 2008.
\newblock Linking Data to Ontologies.
\newblock \emph{Journal on Data Semantics ({JoDS})}, 10: 133--173.

\bibitem[{Rosati(2008)}]{rosati2008finite}
Rosati, R. 2008.
\newblock Finite model reasoning in {DL-Lite}.
\newblock In \emph{{Proceedings of the 5th European Semantic Web Conference
  (ESWC)}}, 215--229.

\bibitem[{Xiao et~al.(2018)Xiao, Calvanese, Kontchakov, Lembo, Poggi, Rosati,
  and Zakharyaschev}]{DBLP:conf/ijcai/XiaoCKLPRZ18}
Xiao, G.; Calvanese, D.; Kontchakov, R.; Lembo, D.; Poggi, A.; Rosati, R.; and
  Zakharyaschev, M. 2018.
\newblock Ontology-based data access: {a} survey.
\newblock In \emph{Proceedings of the 27th International Joint Conference on
  Artificial Intelligence ({IJCAI})}, 5511--5519.

\end{thebibliography}


\end{document}